\documentclass{article}

\usepackage{microtype}
\usepackage{graphicx}
\usepackage{subcaption}
\usepackage{booktabs} 

\usepackage{hyperref}

\usepackage[preprint]{icml2026}

\usepackage{amsmath}
\usepackage{amssymb}
\usepackage{mathtools}
\usepackage{amsthm}
\usepackage{listings}

\usepackage[capitalize,noabbrev]{cleveref}

\theoremstyle{plain}
\newtheorem{theorem}{Theorem}[section]
\newtheorem{proposition}[theorem]{Proposition}
\newtheorem{lemma}[theorem]{Lemma}

\theoremstyle{definition}
\newtheorem{definition}[theorem]{Definition}
\newtheorem{assumption}[theorem]{Assumption}
\theoremstyle{remark}

\usepackage[textsize=tiny]{todonotes}

\usepackage{bm}
\usepackage{bbm}

\DeclareMathOperator{\tr}{tr}

\newcommand{\R}{\mathbbm{R}}
\renewcommand{\L}{\mathcal{L}}

\newcommand{\vct}{\bm} 

\icmltitlerunning{Product-Stability: Provable Convergence for Gradient Descent on the Edge of Stability}

\begin{document}

\twocolumn[
  \icmltitle{Product-Stability: Provable Convergence for Gradient Descent on the Edge of Stability}



  \icmlsetsymbol{equal}{*}

  \begin{icmlauthorlist}
    \icmlauthor{Eric Gan}{indp}
  \end{icmlauthorlist}

  \icmlaffiliation{indp}{Independent Researcher}

  \icmlcorrespondingauthor{Eric Gan}{egan8@ucla.edu}

  \icmlkeywords{Machine Learning, ICML, Edge of Stability}

  \vskip 0.3in
]



\printAffiliationsAndNotice{}  

\begin{abstract}

Empirically, modern deep learning training often occurs at the Edge of Stability (EoS), where the sharpness of the loss exceeds the threshold below which classical convergence analysis applies. 
Despite recent progress, existing theoretical explanations of EoS either rely on restrictive assumptions or focus on specific squared-loss–type objectives. In this work, we introduce and study a structural property of loss functions that we term product-stability. 
We show that for losses with product-stable minima, gradient descent applied to objectives of the form $(x,y) \mapsto l(xy)$ can provably converge to the local minimum even when training in the EoS regime. 
This framework substantially generalizes prior results and applies to a broad class of losses, including binary cross entropy. 
Using bifurcation diagrams, we characterize the resulting training dynamics, explain the emergence of stable oscillations, and precisely quantify the sharpness at convergence. 
Together, our results offer a principled explanation for stable EoS training for a wider class of loss functions.

\end{abstract}

\section{Introduction} \label{sec:intro}

In classical analysis of gradient descent (GD), convergence is usually proved under the condition that the learning rate $\eta$ is less than $\frac{2}{\lambda}$, where $\lambda$, known as the sharpness, denotes the maximum eigenvalue of the Hessian. Under this assumption, the loss decreases monotonically as gradient descent converges to the local minimum. However, recent empirical observations have shown that modern machine learning models do not operate in this regime. In particular, \citet{cohen2022gradient} provided extensive evidence that during training the sharpness rises to above $\frac{2}{\eta}$, violating the classical assumption, and then oscillates around $\frac{2}{\eta}$. Meanwhile, the loss behaves non-monotonically, yet consistently decreases in the long run. 
This phenomenon, termed \emph{Edge of Stability (EoS)}, indicates that new analysis incorporating higher order information is necessary to capture the training dynamics of gradient descent in practice.

As a result, the surprising EoS phenomenon has attracted significant attention. One line of work constructs minimalist examples in which EoS training provably happens and the training dynamics can be thoroughly studied \citep{chen2023edge, ahn2023learning, zhu2023understanding, song2023trajectory}. However, existing works either require nonstandard assumptions or apply to specific functions modeled after the squared loss. Thus, rigorous analysis of EoS training dynamics for classification losses, such as the cross entropy loss, remains open.

In this work, we analyze a quantity which we call the \emph{product-stability}. This quantity was first discovered in \citet{chen2023edge}, where it was used to prove the existence of two step fixed points of GD updates. These are pairs of points $(x_1, x_2)$ such that the GD iterates cycle between them, namely applying one step of gradient descent maps $x_1$ to $x_2$ and vice versa. We uncover a much richer story, proving that for loss functions $l$ with product-stable minima, minimizing $l$ with respect to the parameter product $(x,y) \mapsto l(xy)$ via gradient descent can converge to a stationary point when starting in a region where $\lambda > \frac{2}{\eta}$. The notion of product stability captures a broad class of functions, including the binary cross entropy loss, and generalizes multiple prior results. Our contributions are as follows:
\begin{itemize}
    \item We define the notion of product-stability and use it to prove convergence in the EoS regime for functions of the form $l(xy)$ with product-stable minima.
    \item We use the notion of bifurcation diagrams \citep{song2023trajectory} to present a general pattern in the training dynamics which results in the aforementioned convergence and precisely quantify the sharpness at convergence.
    \item We show how the notion of product-stability captures practical functions such as binary cross entropy and unifies multiple previous assumptions in the literature under a single framework.
    \item We explore how the notion of product stability can be applied to deep models trained on real-world datasets.
\end{itemize}

\section{Related Work}

\textbf{Sharpness.} In the literature, low sharpness and flat minima have been associated with better generalization properties \citep{hochreiter1997flat}. And several works have verified an empirical correlation between sharpness and generalization \citep{keskar2017largebatch, jiang2019fantastic}. Moreover, the SAM algorithm showed that explicitly regularizing the sharpness can improve generalization \citep{foret2021sharpness}.

\textbf{Edge of Stability.}
Since the Edge of Stability behavior was observed in \citet{cohen2022gradient}, many works have focused on discovering the theoretical underpinnings behind this phenomenon. \citet{damian2023selfstabilization} argue that the third order term in the Taylor expansion contributes to the stabilization of the sharpness. \citet{arora2022understanding} explore the EoS phenomenon as a result of deterministic flow along the manifold of minimizers for special loss functions of the form $\sqrt{L}$ or for normalized gradient descent.  \citet{wang2022large,liang2025gradient} prove EoS convergence for quadratic losses of the form $(xy - a)^2$. \citet{ma2022quadratic} explore a multiscale loss structure with subquadratic property. 

Closely related to our work are several results that provide rigorous analysis of training dynamics in minimalist settings. \citet{zhu2023understanding} analyze loss of the form $(x^2 y^2 - 1)^2$, where they show a sharpness adaptivity property which is not present in the vanilla quadratic loss. \citet{ahn2023learning, song2023trajectory} prove EoS convergence for losses of the form $l(xy)$, assuming that $l$ is convex, even, Lipschitz, and satisfies a certain subquadratic property. \citet{wang2023good} also analyze EoS convergence of $l(xy)$ for some specific forms of $l$ based on a concept called degree of regularity. Our work generalizes these results by using a much less restrictive assumption based around the notion of product-stability. \citet{chen2023edge} use the quantity that we call product-stability to prove the existence of two-step fixed points for GD in the EoS regime, but do not explore further. Concurrent work by \citet{mulayoff2026stability} generalizes product-stability to the multivariate case.

\textbf{Large Learning Rate.} A related line of work studies the training dynamics of gradient descent using large learning rate. \citet{wu2023implicit, wu2024large, zhang2025minimax} study gradient descent on logistic regression for linearly separable data.
In this case, the minimizer is at infinity, where the loss has vanishing sharpness. This means that (at least in this setting) EoS is a transient phenomenon as the sharpness will eventually be small enough to leave the EoS regime. Similarly, \citet{cai2024large} analyze an Edge of Stability phase and a stability phase for two layer nonlinear networks with logistic loss. These settings are different from our work, where we show convergence while the sharpness stays very close to $\frac{2}{\eta}$. \citet{andriushchenko2023sgd, even2023sgd} analyze the implicit regularization of large step sizes in diagonal linear networks. However, these works also do not relate the learning rate and sharpness via the EoS threshold $\eta \lambda = 2$.

\section{Preliminaries}
Let $l: \R \to \R$ be a loss function. We focus on the objective $\L: \R^2 \to \R$ given by
\begin{equation} \label{eq:product_loss}
    \L(x, y) := l(xy)
\end{equation}
We use $l', l'', l^{(3)}, l^{(4)}$ to denote the first through fourth derivatives of $l$, respectively, and use $\nabla^n$ to denote the $n$-th derivative tensor for multivariate functions. Gradient descent with learning rate $\eta$ defines the update rules
\begin{align} 
    x_{t+1} &= x_t - \eta \frac{\partial \L}{\partial x} = x_t - \eta l'(x_t y_t) y_t \label{eq:gd_update_x} \\
    y_{t+1} &= y_t - \eta \frac{\partial \L}{\partial y} = y_t - \eta l'(x_t y_t) x_t \label{eq:gd_update_y}
\end{align}

Denote $z(x,y) = xy, s(x,y) = x^2 + y^2$, where we drop the dependence on $x$ and $y$ when clear. Our analysis focuses on dynamics in the $(s,z)$ phase space. Interpreting the $(x,y)$ parameterization as a two layer, single neuron linear model, $z$ is the output of the model and the input to the loss function. It is not hard to check that if $l$ has a minimum at $z_*$, then $\L$ has a manifold of minima along the hyperbola $z(x,y) = z_*$. The quantity $s$ is intrinsically related to the sharpness of $\L$ by the following lemma:
\begin{lemma} \label{thm:sharpness_formula}
    The trace of the Hessian of $\L$ is given by $\tr(\nabla^2 \L) = l''(z) s$. If $l'(z) = 0$, then the sharpness of $\L$ is also given by  $\lambda =  \tr( \nabla^2 \L) = l''(z) s$.
\end{lemma}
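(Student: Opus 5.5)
The plan is to compute the Hessian of $\L(x,y) = l(xy)$ explicitly with the chain rule, read off its trace, and then, under $l'(z)=0$, compute its eigenvalues directly. The first partial derivatives are $\partial_x \L = l'(z) y$ and $\partial_y \L = l'(z) x$, with $z = xy$. Differentiating once more gives
\begin{equation*}
\nabla^2 \L = \begin{pmatrix} l''(z) y^2 & l''(z) xy + l'(z) \\ l''(z) xy + l'(z) & l''(z) x^2 \end{pmatrix}.
\end{equation*}
The diagonal entries carry no $l'$ term, because $\partial_x y = 0$ and $\partial_y x = 0$.

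\textbf{Trace.} Summing the diagonal gives $\tr(\nabla^2 \L) = l''(z)(x^2+y^2) = l''(z) s$. This holds for every $(x,y)$, with no condition on $l'$.

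\textbf{Sharpness when $l'(z)=0$.} With $l'(z)=0$, the Hessian factors as $l''(z)\, v v^\top$ with $v = (y, x)^\top$. It is therefore a rank-one matrix. Its eigenvalues are $0$, with eigenvector orthogonal to $v$, and $l''(z)\|v\|^2 = l''(z) s$. The largest eigenvalue equals the trace, so $\lambda = \tr(\nabla^2 \L) = l''(z) s$.

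\textbf{Main subtlety.} The only point needing care is the sign of $l''(z)$. If $l''(z) \ge 0$, which holds at a minimum of $l$ and is the case of interest, then $l''(z) s$ is indeed the maximum eigenvalue. If $l''(z) < 0$, the maximum eigenvalue is $0$, and "sharpness" should be read as the nonzero (largest-magnitude) eigenvalue. In that case the statement still matches, since $\lambda = l''(z)s$ is the unique nonzero eigenvalue and coincides with the trace. I would note this briefly; otherwise the argument is routine computation.
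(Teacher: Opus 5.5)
Your computation is correct and is the direct chain-rule argument the paper leaves implicit (it states this lemma without a separate proof): the trace identity holds for every $(x,y)$, and at $l'(z)=0$ the Hessian is the rank-one matrix $l''(z)\, v v^\top$ with $v=(y,x)^\top$, whose nonzero eigenvalue is $l''(z)s$. Your caveat about the sign is also right: read literally, with sharpness meaning the largest eigenvalue, the second claim needs $l''(z)\ge 0$ (or $s=0$), which always holds in the paper's setting of a minimum with $l''(z_*)>0$.
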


In particular, for a given value of $z$, $s$ can be viewed as a scaling factor for the sharpness based on the specific $(x,y)$ parameterization. As a result, we call $s$ the \emph{sharpness multiplier}. Moreover, observe that for any fixed $z$ we can have arbitrarily large $s$ by rescaling $x$ and $y$. In particular, there exist minima with arbitrarily large sharpness along the minimizing manifold. \footnote{\citet{dinh2017sharpminimageneralizedeep} observe that this property holds for any neural network with positive homogeneity.} 

We start our analysis in the EoS regime assuming that the GD iterates are near a minimum with sharpness slightly greater than $\frac{2}{\eta}$. Classical analysis shows that even when initialized arbitrarily close to the optimum, gradient descent will not converge to the sharp minima, instead driving iterates farther away. For a broad class of functions, we show that this is not the end of the story.

\section{Convergence on the Edge of Stability} \label{sec:main_result}

Our analysis centers around a quantity which we term the \emph{product-stability}:
\begin{definition}
    Let $f: \R \to \R$ have continuous fifth derivatives. The \emph{product-stability} of $f$ at $z \in \R$ is defined as
    \begin{equation} \label{eq:def_product_stability}
        \alpha_f(z) = 3( f^{(3)}(z))^2 - f^{(4)} (z) f''(z)
    \end{equation}
    If $\alpha_f(z) > 0$, then we say $f$ is \emph{product-stable} at $z$.
\end{definition}

\textbf{Intuition.} The product stability captures higher order information about the sharpness of $f$ around $z$. Positive values indicate stability, while negative values indicate instability.  At a local minimum where $f''(z) > 0$, the above definition implies that a negative fourth derivative makes $f$ more stable. Indeed, a negative fourth derivative corresponds to a negative quadratic term in the Taylor expansion of $f''$. Excluding all other products, this makes the sharpness at $z$ greater than the sharpness in a neighborhood around $z$. This can be related to certain ``subquadratic'' assumptions from previous work \citep{ahn2023learning, ma2022quadratic, song2023trajectory}. In addition, our definition also reveals that a large magnitude third derivative also contributes to stability. This intuitively can be thought of as a precise quantification of a more general effect observed in \citep{damian2023selfstabilization}, where the third order term tends to decrease sharpness when running gradient descent.

\subsection{Two-step Iterates}

\citet{chen2023edge} used the notion of product stability to prove that there exist fixed points of the two step GD update when the learning rate is slightly larger than $\frac{2}{\lambda}$. Here we state a slightly stronger version of their theorem in which we prove that the two step iterates converge to these fixed points:

\begin{theorem} \label{thm:two_step}
    Let $l \in \mathcal{C}^5$ have a local minimum at $z_*$ with $l''(z_*) > 0, \alpha_l(z_*) > 0$. Then there exists $\sigma, \tau > 0$ such that for all $\eta \in (\frac{2}{l''(z_*)}, \frac{2 + \tau}{l''(z_*)}]$, there exists $z_\eta^- < z_* < z_\eta^+$ such that for GD iterates given by
    \begin{equation} \label{eq:direct_gd_update}
        a_{t+1} = a_t - \eta l'(a_t)
    \end{equation}
    with initialization, 
    \begin{equation*}
        0 < |a_0 - z_*| \leq \sigma
    \end{equation*}
    one of the sequences $\{a_{2t}\}, \{a_{2t+1}\}$ converges to $z_\eta^-$, and the other converges to $z_\eta^+$.
\end{theorem}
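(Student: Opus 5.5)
The approach is to treat \eqref{eq:direct_gd_update} as a one-dimensional map undergoing a supercritical period-doubling (flip) bifurcation at $\eta = 2/l''(z_*)$, and to show that the sign condition $\alpha_l(z_*)>0$ is exactly the condition making the bifurcating 2-cycle attracting. Write $u = a - z_*$ and $g_\eta(u) = u - \eta\, l'(z_*+u)$. Taylor-expanding $l'$ around $z_*$ (using $l\in\mathcal C^5$, so $l'\in\mathcal C^4$ and the remainders are controlled) gives
\begin{equation*}
g_\eta(u) = -(1+\mu)u + A u^2 + B u^3 + O(u^4),
\end{equation*}
where $\mu = \eta l''(z_*) - 2 \in (0,\tau]$, $A = -\eta l^{(3)}(z_*)/2$ and $B = -\eta l^{(4)}(z_*)/6$. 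The next step is to compose to get the two-step map and set $h_\eta(u) = g_\eta(g_\eta(u)) - u$. A direct computation shows that the linear coefficient of $h_\eta$ is $\varepsilon := (1+\mu)^2 - 1 \asymp \mu$, and the quadratic coefficient is $O(\mu)$ because the two $A$ contributions cancel at $\mu=0$. The cubic coefficient is $-2(A^2+B) + O(\mu)$, and at $\eta = 2/l''$ this equals
\begin{equation*}
-\kappa := -2\Bigl(\tfrac{(l^{(3)})^2}{(l'')^2} - \tfrac{l^{(4)}}{3l''}\Bigr) = -\tfrac{2\,\alpha_l(z_*)}{3\, l''(z_*)^2} < 0 .
\end{equation*}
Hence, uniformly in $\eta$ in the stated range,
\begin{equation*}
h_\eta(u) = u\bigl(\varepsilon - \kappa u^2 + r_\eta(u)\bigr), \qquad |r_\eta(u)| \le C(|u|^3 + \varepsilon |u|),
\end{equation*}
and a similar bound holds for $r_\eta'$.

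Next I locate the 2-cycle. On $[-\sigma,\sigma]\setminus\{0\}$, the zeros of $h_\eta$ are the zeros of $q_\eta(u) = \varepsilon - \kappa u^2 + r_\eta(u)$. We have $q_\eta'(u) = -2\kappa u + O(u^2+\varepsilon)$. I would show directly that $q_\eta$ is strictly monotone on each of $[\sqrt{\varepsilon}/(2\sqrt\kappa), \sigma]$ and its mirror image, and positive closer to $0$. Equivalently, one can rescale $u = \sqrt{\varepsilon}\,v$ and apply the implicit function theorem at $v = \pm 1/\sqrt{\kappa}$. Either way, for $\sigma$ small and then $\tau$ small, $q_\eta$ has exactly one zero $u^+\approx\sqrt{\varepsilon/\kappa}$ in $(0,\sigma]$ and exactly one $u^-$ in $[-\sigma,0)$. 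Moreover, $h_\eta>0$ on $(0,u^+)$ and $h_\eta<0$ on $(u^+,\sigma]$, with the mirrored pattern on the negative side. Since $g_\eta$ maps the positive side to the negative side and maps fixed points of $g_\eta\circ g_\eta$ to fixed points, $g_\eta(u^+) = u^-$. I then set $z_\eta^\pm = z_* + u^\pm$.

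For convergence, note that $(g_\eta\circ g_\eta)'(u) = g_\eta'(g_\eta(u))\,g_\eta'(u) \approx (1+\mu)^2 > 0$ near $0$, so $G = g_\eta\circ g_\eta$ is strictly increasing on $[-\sigma,\sigma]$. I would also check that $G$ maps $(0,\sigma]$ into itself: monotonicity gives $G(u)\le G(\sigma) < \sigma$ because $h_\eta(\sigma)<0$, and $G(u)>G(0)=0$. By the same argument $G$ maps $[-\sigma,0)$ into itself. A strictly increasing self-map of an interval, with this sign pattern of $G(u)-u$ around a unique fixed point, has orbits that are monotone and bounded and converge to that fixed point $u^\pm$. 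The condition $a_0\ne z_*$ guarantees the orbit never reaches $0$, since $g_\eta$ is injective near $0$ with $g_\eta(0)=0$. Finally, $a_0$ and $a_1 = g_\eta(a_0)$ lie on opposite sides of $z_*$. I would also shrink $\sigma$ so that $|a_1 - z_*|\le\sigma$ holds, or else apply the argument to $a_0$ alone and take $g_\eta$ of the limit. Then one parity subsequence converges to $z_\eta^+$ and the other to $z_\eta^-$.

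I expect the main obstacle to be the uniformity bookkeeping. The constants $\sigma$ and $\tau$ must be chosen independently of $\eta$, and in the right order: first $\sigma$ small relative to $\kappa$ and the $\mathcal C^5$ bounds, then $\tau$ small so that $u^\pm$ lie well inside $(-\sigma,\sigma)$. The remainder terms $O(\varepsilon u^2)$ and $O(u^4)$ in $h_\eta$ must be shown to be dominated on all of $[-\sigma,\sigma]$, not just near $u^\pm$. The algebraic identity $A^2+B \propto \alpha_l(z_*)$ is the conceptual heart of the argument but is a routine computation.
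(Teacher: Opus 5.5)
Your proof is correct, and its skeleton is the paper's. Your $h_\eta(u)$ is exactly $-\eta D_\eta(z_*+u)$, your linear, quadratic and cubic coefficients are exactly $-\eta c_1(\eta)$, $-\eta c_2(\eta)$, $-\eta c_3(\eta)$, and the 2-cycle is found by the same sign change.

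You differ in how each step is closed:
\begin{itemize}
\item \textbf{Cubic sign.} You get the sign of the cubic coefficient on the whole $\eta$-range by continuity from its threshold value $-2\alpha_l(z_*)/(3l''(z_*)^2)$. This makes the paper's case analysis on the sign of $l^{(4)}(z_*)$ unnecessary.
\item \textbf{Uniqueness.} You get uniqueness from strict monotonicity of $q_\eta=h_\eta/u$ on $\sqrt{\varepsilon}/(2\sqrt{\kappa})\le|u|\le\sigma$, together with positivity closer to $0$. The paper instead shows $D_\eta'>0$ at every zero.
\item \textbf{Convergence.} You note that $G=g_\eta\circ g_\eta$ is increasing (a composition of two decreasing maps), maps $(0,\sigma]$ and $[-\sigma,0)$ into themselves, and has the right sign pattern of $G-\mathrm{id}$ about its unique fixed point. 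Hence the even-indexed orbit is monotone and bounded. The paper instead proves the bound $|D_\eta'|\le 1/\eta$ near $z_*$ and invokes the Descent Lemma, treating $z\mapsto z-\eta D_\eta(z)$ as gradient descent on an antiderivative of $D_\eta$.
\end{itemize}

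The two convergence arguments are closely linked, since $|D_\eta'|\le 1/\eta$ is exactly what makes that map nondecreasing. The paper's quantitative form is what it reuses later: in the proof of Theorem~\ref{thm:main_convergence}, where the effective step $\gamma_t$ varies, and in Lemma~\ref{thm:d_derivative_bound_below_eos}. Your version is more self-contained. It also checks two things the paper leaves to ``a symmetric argument'': invariance of the half-intervals, and the pairing $g_\eta(u^+)=u^-$. That pairing is what actually gives one parity subsequence converging to $z_\eta^-$ and the other to $z_\eta^+$.

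Two small corrections.
\begin{itemize}
\item The rescaling $u=\sqrt{\varepsilon}\,v$ with the implicit function theorem is not equivalent to your monotonicity argument. It controls zeros only for $v$ in a fixed compact set, i.e.\ $|u|\lesssim\sqrt{\varepsilon}$. Uniqueness on all of $(0,\sigma]$ still needs your direct estimate on $q_\eta'$.
\item Of your two options for $a_1$, only the second works. Shrinking $\sigma$ cannot force $|a_1-z_*|\le\sigma$, because $|g_\eta(\pm\sigma)|=(1+\mu)\sigma\mp A\sigma^2+O(\sigma^3)$ exceeds $\sigma$ at one endpoint whenever $l^{(3)}(z_*)\neq 0$. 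Running the argument from $a_0$ alone and passing $g_\eta$ through the limit is the right fix.
\end{itemize}
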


\textbf{Proof Outline.}
Consider the change induced by a two step gradient update
\begin{equation*}
    D_\eta(a) = l'(a) + l'(a - \eta l'(a))
\end{equation*}
A Taylor expansion of $l$ around $z_*$ gives the approximation
\begin{align} \label{eq:two_step_taylor}
    D_\eta(a) &\approx (2 - \eta l''(z_*))l''(z_*) (a - z_*) \\
    \nonumber
    &+ \frac{1}{2}l^{(3)}(z_*) (1 - \eta l''(z_*)) (2 - \eta l''(z_*)) (a - z_*)^2 \\
    \nonumber
    &+ \Bigg[ \frac{1}{6} l^{(4)}(z_*) (1 - \eta l''(z_*)) (1 + (1 - \eta l''(z_*))^2) \\
    \nonumber
    &- \frac{1}{2} \eta (l^{(3)}(z_*))^2 (1 - \eta l''(z_*)) \Bigg] (a - z_*)^3
\end{align}
When $a$ is very close to $z_*$, the first order term dominates. Since $\eta l''(z_*) > 2$, the coefficient of the first term is negative. This means that the iterates move away from $z_*$, and corresponds to the traditional analysis that $z_*$ is unstable. However, notice that both the first and second order term have product $(2 - \eta l''(z_*))$. Since we assume $\eta$ is not too much larger than $\frac{2}{l''(z_0)}$, this term is small. So there exists a region where the third order term dominates. The notion of product stability is constructed precisely to ensure that the third coefficient is positive, and a simple intermediate value argument shows that there exists a point with $D_\eta(z) = 0$, namely a fixed point of the two step update. Further analysis shows that these points are unique, stable minima and the landscape is not too sharp ($|D_\eta'(z)| < \frac{2}{\eta}$), which suffices to prove that two step GD converges to these points. 

\subsection{Convergence under Reparameterization}

While the above theorem guarantees that the GD iterates will not diverge, we still do not obtain true convergence as the iterates will oscillate around the minimum with period two. Our main result proves that true convergence can be achieved simply by reparameterizing the loss in the form $l(xy)$:

\begin{theorem} \label{thm:main_convergence}
    Let $l \in \mathcal{C}^5$ have a local minimum at $z_*$ with $l''(z_*) > 0, \alpha_l(z_*) > 0$. Then there exists $\eta_0, \sigma, \tau > 0$ such that for all $0 < \eta \leq \eta_0$, minimizing the objective
    \begin{equation*}
        \L(x, y) = l(xy)
    \end{equation*}
    via gradient descent with learning rate $\eta$ and initialization
    \begin{align}
        |z(x_0,y_0) - z_*| &\leq \sigma, \\
        2 \leq \eta l''(z_*) s(x_0,y_0) &\leq 2 + \tau \label{eq:eos_init}
    \end{align}
    converges.
\end{theorem}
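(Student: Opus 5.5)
The plan is to pass to the $(s,z)$ phase space and exploit a separation of time scales. From \eqref{eq:gd_update_x}--\eqref{eq:gd_update_y} one computes exactly
\begin{align*}
z_{t+1} &= z_t - \eta s_t\, l'(z_t) + \eta^2 l'(z_t)^2 z_t,\\
s_{t+1} &= s_t - 4\eta\, l'(z_t) z_t + \eta^2 l'(z_t)^2 s_t .
\end{align*}
The first equation is the one-dimensional GD map \eqref{eq:direct_gd_update} with \emph{effective learning rate} $\tilde\eta_t := \eta s_t$, plus a perturbation $\eta^2 l'^2 z$. By \eqref{eq:eos_init}, $\tilde\eta_0 l''(z_*) \in [2,2+\tau]$, and since $s_0 \approx 2/(\eta l''(z_*))$ is of order $1/\eta$, the perturbation is $O(\eta)$ relative to the main term once $\eta\le\eta_0$ is small. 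The per-step change of $s$ is $O(\eta|l'|)$, while $s$ itself is of order $1/\eta$. So $\tilde\eta_t$ changes by only $O(\eta^2|l'|)$ per step, and $z$ is the fast variable while $\tilde\eta$ is the slow one. I assume $z_*\neq 0$ where needed; if $z_*=0$ the $s$-drift must be taken at higher order, but the structure is the same.

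\textbf{Step 1 (fast dynamics).} Freeze $\tilde\eta$ and apply Theorem~\ref{thm:two_step}, strengthened quantitatively. From the expansion \eqref{eq:two_step_taylor}, for $\tilde\eta l''(z_*) = 2+\epsilon$ the two-step fixed points satisfy
\[
z^\pm_{\tilde\eta} - z_* \approx \pm c\sqrt{\epsilon},
\]
with $c$ determined by $\alpha_l(z_*)$. The two-step map contracts toward this orbit at a rate depending on $\epsilon$; when $\epsilon\le 0$, $z_*$ itself is attracting. I would show that this attraction is robust to the $O(\eta)$ perturbation and to slow drift of $\tilde\eta$. Concretely, I would define a Lyapunov-type distance from $(z_{2t},z_{2t+1})$ to the orbit $(z^-_{\tilde\eta_t},z^+_{\tilde\eta_t})$ and show it stays small and decays up to an error of order $\eta$.

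\textbf{Step 2 (slow drift of $s$).} Sum the $s$-update over two consecutive steps along the near-periodic orbit:
\[
\Delta_2 s \approx -4\eta z_*\bigl(l'(z^-)+l'(z^+)\bigr) + \eta^2 s\bigl(l'(z^-)^2 + l'(z^+)^2\bigr).
\]
The key computation is that in the product parameterization the two-step fixed point is shifted by the $\eta^2 l'^2 z$ term. As a result, $l'(z^-)+l'(z^+)$ is not zero but of order $\eta s\, l'^2$, with a sign such that the first term dominates the second. I expect this to give $\Delta_2 \tilde\eta \approx -C\eta^2\epsilon$ for some $C>0$. Hence $\epsilon_t$ decreases monotonically and, by summing, reaches $0$, or at least approaches it.

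\textbf{Step 3 (convergence).} Once $\tilde\eta l''(z_*) \le 2$ up to lower-order terms, the oscillation amplitude $\sqrt{\epsilon}$ is small. I would then show that the total future movement of $s$ is summable: $\sum_t |l'(z_t)| < \infty$ because $z_t\to z_*$ geometrically, or at least at a summable rate near the boundary. It follows that $s_t$ converges, and therefore $(x_t,y_t)$ converges, since $x$ and $y$ are determined up to sign by $(s,z)$ and the increments $\eta|l'|\,|y|$ are summable. The marginal regime $\epsilon\to 0$ is delicate, because contraction of $z$ toward $z_*$ degenerates there. I would handle it by using the cubic term controlled by $\alpha_l(z_*)>0$ to get polynomial decay $|z_t-z_*| \lesssim t^{-1/2}$, and combining this with the fact that $\epsilon$ becomes strictly negative by an amount that is itself bounded below by the accumulated drift.

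\textbf{Main obstacle.} I expect the hardest part to be Step 2 together with the coupling in Step 3. Step 2 requires getting the sign and size of $l'(z^-)+l'(z^+)$ right to the correct order in $\eta$ and $\epsilon$, which needs a careful Taylor expansion of the perturbed two-step map. Step 3 requires making the quasistatic tracking argument rigorous uniformly as $\epsilon\to 0$. I would first attempt a bootstrap/induction on a box $\{|z-z_*|\le\sigma,\ \tilde\eta l''(z_*)\in[2-\tau',2+\tau]\}$, establishing invariance and monotone decrease of $\tilde\eta$ simultaneously.
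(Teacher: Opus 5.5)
Your Step 2, which you rightly call the crux, rests on a miscomputed two-step update for $s$. With $g_t = l'(z_t)$, composing the one-step rules exactly gives
\[
s_{t+2}-s_t=-4\eta\,(g_t+g_{t+1})(1+\eta^2g_tg_{t+1})\,z_t+\eta^2s_t\bigl(g_t^2+4g_tg_{t+1}+g_{t+1}^2\bigr)+\eta^4g_t^2g_{t+1}^2s_t .
\]
Your formula is missing the cross term $4\eta^2 s_t g_t g_{t+1}$. This term comes from $-4\eta g_{t+1}z_{t+1}$ with $z_{t+1}=z_t-\eta s_tg_t+\eta^2g_t^2z_t$. Because $\eta s_t=\Theta(1)$, the displacement $z_{t+1}-z_t$ is as large as $z_t-z_*$ itself, so you cannot replace both $z_t$ and $z_{t+1}$ by $z_*$. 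Equivalently, splitting $z_t=z_*+(z_t-z_*)$ in the one-step term $-4\eta\,l'(z_t)z_t$ leaves a sign-definite piece $-4\eta\,l'(z_t)(z_t-z_*)\le 0$ of size $\Theta(\eta g_t^2)$, and that is the term you discarded. On the two-cycle $g_{t+1}\approx-g_t$, so $g_t^2+4g_tg_{t+1}+g_{t+1}^2\approx-2g_t^2$. The true drift is therefore $s_{t+2}-s_t\approx-2\eta^2s_tg_t^2$. It is negative and of order $\eta\epsilon$, which matches your guess $\Delta_2\tilde\eta\approx-C\eta^2\epsilon$. It is also independent of $z_*$, so your caveat about $z_*=0$ is an artifact of the error.

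The substitute mechanism you propose cannot supply the sign, because $l'(z^-)+l'(z^+)$ is not of order $l'^2$. For the frozen perturbed map, the two-cycle condition $z_{t+2}=z_t$ reads $\tilde\eta\,(g_t+g_{t+1})=\eta^2\bigl(g_t^2z_t+g_{t+1}^2z_{t+1}\bigr)$, so $g^-+g^+=O(\eta^2g^2)$. Along the actual trajectory, where $\gamma_t$ drifts, the lag behind the bifurcation diagram (the $\eta^2\Phi$ correction in Appendix~\ref{apdx_training_dynamics}) gives $g_t+g_{t+1}=\Theta(\eta^2g_t)$. In either case $-4\eta z_*(g^-+g^+)=O(\eta^3|g|)$. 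This is negligible next to $\eta^2s\bigl(g^{-2}+g^{+2}\bigr)\approx 4\eta g^2/l''(z_*)$ once $|g|\gg\eta^2$, so the formula you wrote would actually predict that $s$ \emph{increases} along the orbit.

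For this theorem the paper does not need Step 2 at all. It relies on two exact identities. First, $s_t^2-4z_t^2=(x_t^2-y_t^2)^2$ is nonincreasing, which caps $\eta s_t$ at $\eta s_0+O(\eta^2)$. Combined with the monotonicity of $D_\gamma$ in $\gamma$, this keeps $|z_t-z_*|\le\sigma$ by comparison with the two-step map at the largest allowed rate. Second, $x_{t+1}^2-y_{t+1}^2=(x_t^2-y_t^2)(1-\eta^2l'(z_t)^2)$ forces either $\sum_t l'(z_t)^2<\infty$, so $z_t\to z_*$, or $x_t^2-y_t^2\to 0$, so $s_t$ collapses to $O(1)$ and GD is in the classical regime. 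Your quasistatic route is essentially the paper's proof of Theorem~\ref{thm:final_sharpness}, which needs a gap $\delta>0$ at initialization. The present statement allows $\eta\, l''(z_*)\,s_0=2$ exactly, which is the marginal regime you single out as the hardest.
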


In the remainder of this section, we outline a simple nonconstructive proof of the above theorem. In \cref{sec:training_dynamics}, we present a detailed analysis of the training dynamics that result in convergence. In \cref{sec:discussion}, we discuss what functions satisfy our definition of product-stability and how it relates to assumptions from prior work. Complete proofs are deferred to Appendix \ref{apdx:proofs}.

\subsection{Outline of Convergence Proof} \label{sec:convergence_proof_outline}

\textbf{Bounding the iterates.}
Combining \cref{eq:gd_update_x,eq:gd_update_y} gives
\begin{equation*}
    z_{t+1} = z_t - \eta s_t l'(z_t) + \eta^2  z_t (l'(z_t))^2
\end{equation*}
In the EoS regime the last term is negligible so
\begin{equation} \label{eq:z_approx}
    z_{t+1} = z_t - \eta s_t l'(z_t)(1 + O(\eta^2)).
\end{equation}
That is, $z_t$ approximates running gradient descent directly on $l$ with adaptive learning rate $\gamma_t = \eta s_t$. Note that the EoS initialization \cref{eq:eos_init} implies that $\eta s_0$ is slightly larger than $\frac{2}{l''(z_0)}$. By controlling the change in $s_t$, we can leverage \cref{thm:two_step} to guarantee that $z_t$ remains in a given bounded region.

\textbf{From boundedness to convergence.}
Again using Equations \ref{eq:gd_update_x} and \ref{eq:gd_update_y}, we have
\begin{equation*}
    x_{t+1}^2 - y_{t+1}^2 = (x_{t}^2 - y_{t}^2)(1 - \eta^2 (l'(z_t))^2)
\end{equation*}
Since $z_t$ remains bounded, so does $l'(z_t)$, and hence for sufficiently small $\eta$, we have $0 \leq 1 - \eta^2 (l'(z_t))^2 \leq 1$. Therefore the above defines a convergent series. This implies that either $x_{t}^2 - y_{t}^2 \to 0$ or $l'(z_t) \to 0$. But for sufficiently small $\eta$ the EoS initialization (\cref{eq:eos_init}) implies that one of $x_0, y_0$ is very large in magnitude and the other is very small, and we show that GD cannot close this gap. Hence $l'(z_t) \to 0$. Combined with the boundedness of $z_t$ and the fact that $z_*$ is an isolated minimum in $\R$, this is sufficient to imply convergence.

\begin{figure}[tbp] 
  \centering
  \includegraphics[width=0.4\textwidth]{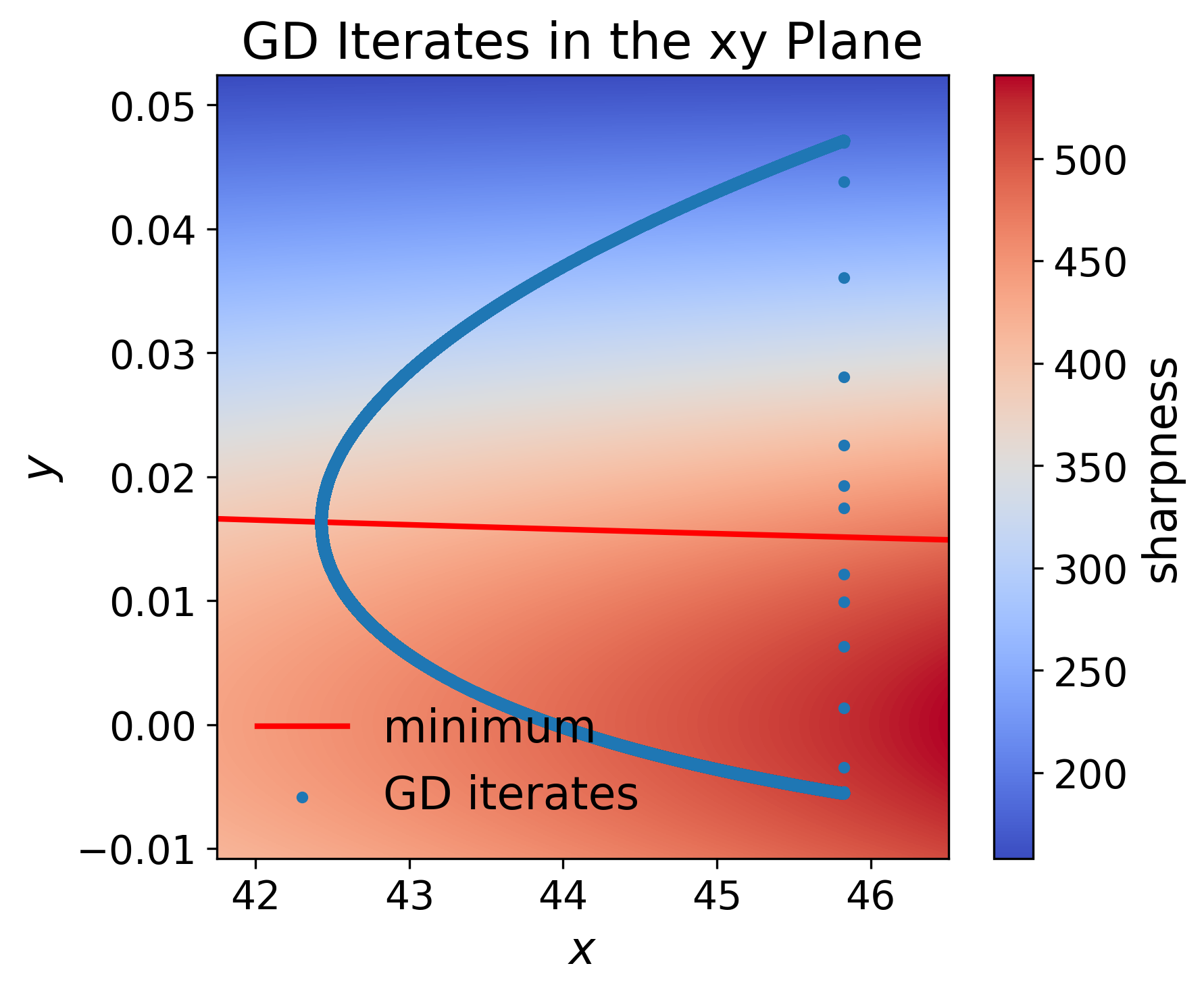}
  \caption{EoS Dynamics in the xy Plane. Iterates start on the right near a high sharpness minima. They quickly diverge away from the sharp minima before drifting towards a flatter minima on the left.}
  \label{fig:xy}
\end{figure}
\section{Training Dynamics} \label{sec:training_dynamics}

\begin{figure*}[t]
    \centering

    \begin{subfigure}{0.27\textwidth}
        \centering
        \includegraphics[width=\linewidth]{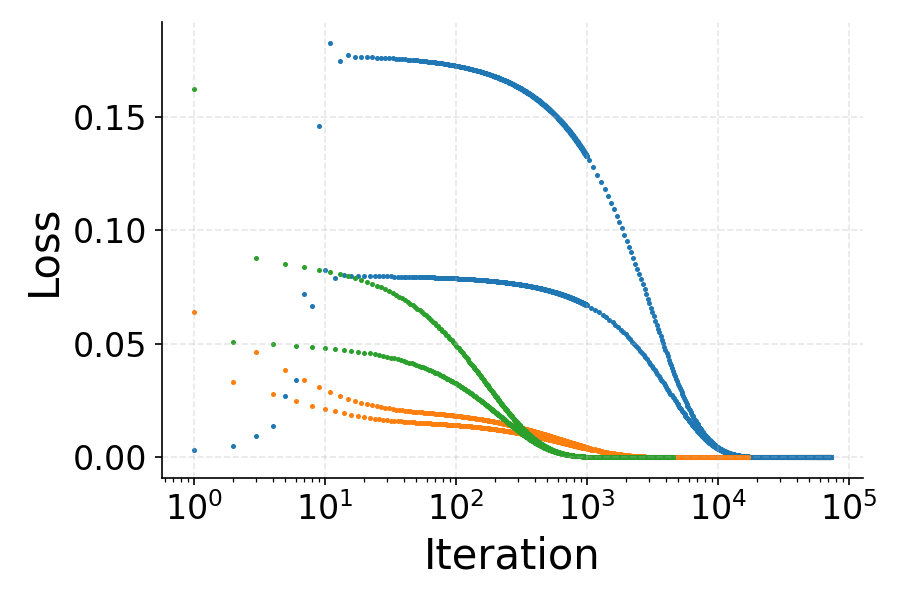}
        \caption{Loss}
    \end{subfigure}
    \hfill
    \begin{subfigure}{0.27\textwidth}
        \centering
        \includegraphics[width=\linewidth]{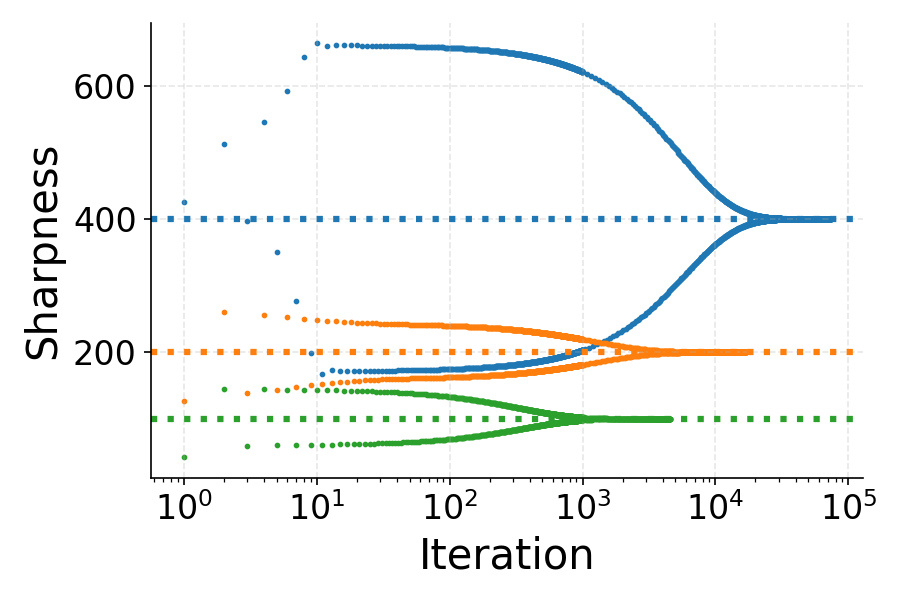}
        \caption{Sharpness}
    \end{subfigure}
    \hfill
    \begin{subfigure}{0.44\textwidth}
        \centering
        \includegraphics[width=\linewidth]{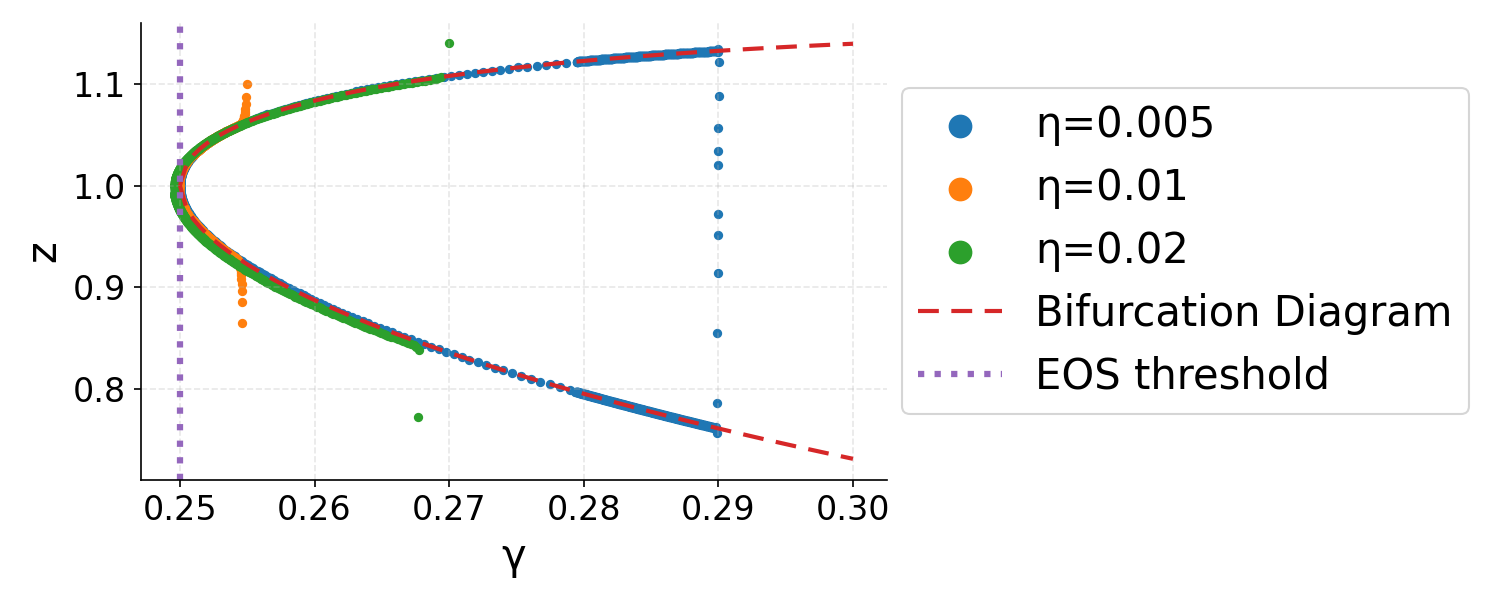}
        \caption{Training Dynamics}
    \end{subfigure}

    \captionsetup{justification=centering}
    \caption*{$\text{MLSq}_{1,2}$ (\cref{eq:mlsq_definition})}

    \vspace{1em}

    \begin{subfigure}{0.27\textwidth}
        \centering
        \includegraphics[width=\linewidth]{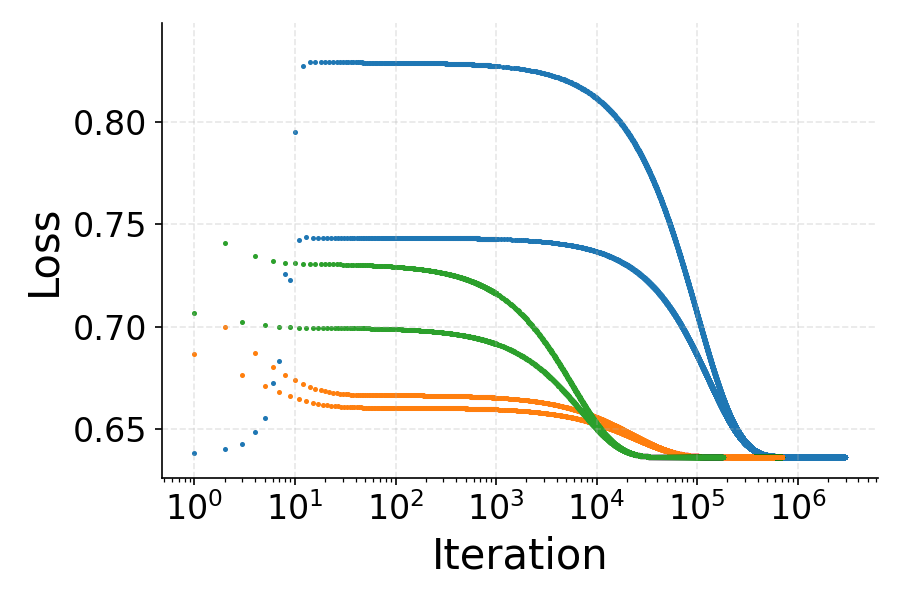}
        \caption{Loss}
    \end{subfigure}
    \hfill
    \begin{subfigure}{0.27\textwidth}
        \centering
        \includegraphics[width=\linewidth]{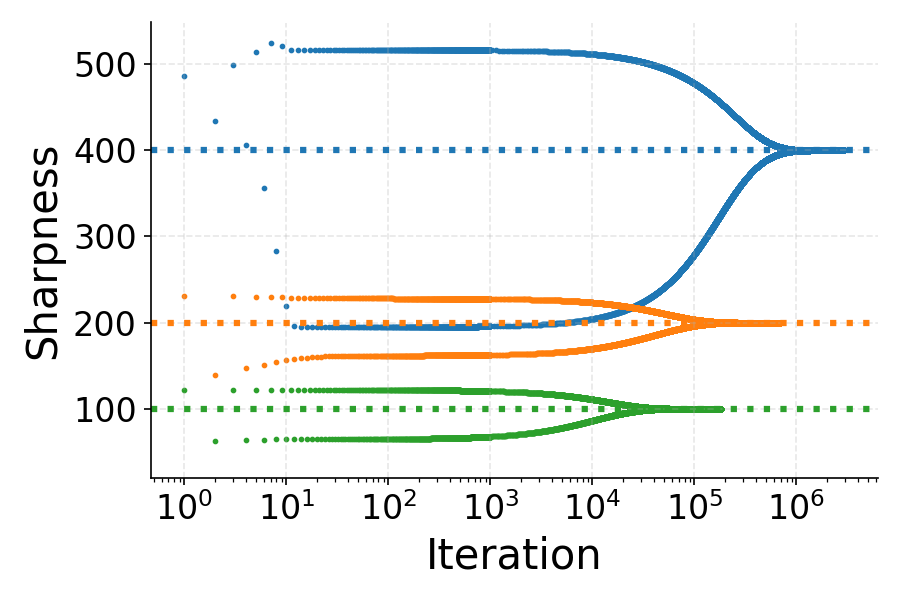}
        \caption{Sharpness}
    \end{subfigure}
    \hfill
    \begin{subfigure}{0.44\textwidth}
        \centering
        \includegraphics[width=\linewidth]{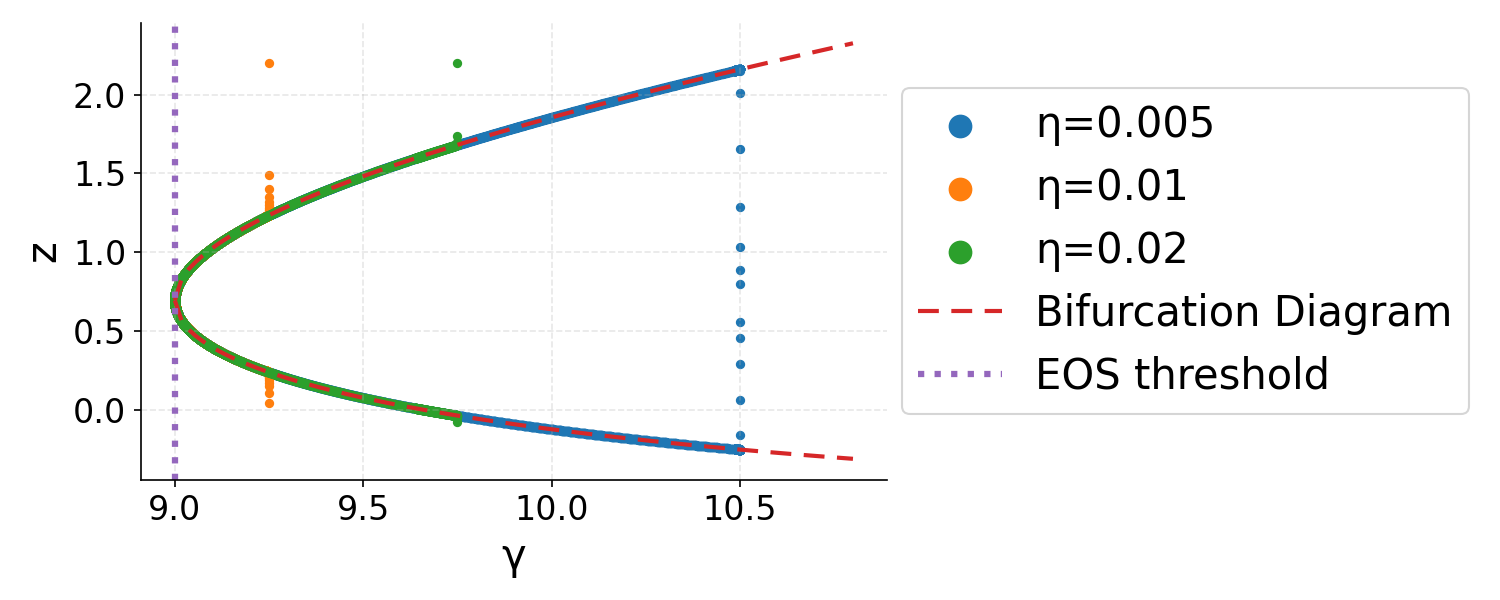}
        \caption{Training Dynamics}
    \end{subfigure}

    \caption*{$\text{BCE}_{\frac{2}{3}}$ (\cref{eq:bce_definition})}

    \captionsetup{justification=justified}
    \caption{Training dynamics of gradient descent with binary cross entropy loss (Lemma \ref{thm:bce}) and multilayer squared loss (Lemma \ref{thm:mlsq}), under the assumptions of \cref{thm:final_sharpness}. \textbf{a)} shows the training loss, which oscillates between two tracks corresponding to each branch of the period 2 bifurcation diagram. \textbf{b)} shows the sharpness, which converges to just under the EoS threshold $\lambda = \frac{2}{\eta}$. \textbf{c)} shows the dynamics in the $(\gamma, z)$ phase space. For a variety of learning rates and initializations, one can observe Phase I, where the iterates approach the bifurcation diagram (dashed line), and Phase II, where the iterates follow the bifurcation diagram towards the EoS threshold $\gamma = \frac{2}{l''(z_*)}$ (dotted line). } \label{fig:eos_training_dynamics}
\end{figure*}

While the previous section gives a clean convergence proof, the proof is nonconstructive and does not elucidate the training dynamics which leads to convergence. Here, we uncover a general mechanism of how gradient descent converges around product-stable minima on the edge of stability. Our analysis is based on the notion of bifurcation diagrams \citep{song2023trajectory}.

\subsection{Bifurcation Diagrams}
For $\eta \in (\frac{2}{l''(z_*)}, \frac{2 + \tau}{l''(z_*)}]$, we can use \cref{thm:two_step} to define maps $\eta \mapsto z_{\eta}^+$ and $\eta \mapsto z_{\eta}^-$ from a learning rate $\eta$ to the two-step fixed points of GD on $l(z)$ with the given learning rate with $z_{\eta}^- < z_* < z_{\eta}^+$. This is the (period-2) \emph{bifurcation diagram} and denote its branches by $Z^+(\eta), Z^-(\eta)$. We have shown that GD when trained on $l$ with fixed learning rate slightly greater than $\frac{2}{l''(z_*)}$, will settle into a state where it oscillates between corresponding points on the diagram. 


Here we present some basic properties of the bifurcation diagram:
\begin{lemma} \label{thm:bifurcation_properties}
    With $l$ as in \cref{thm:two_step}, there exists $\tau > 0$ such that for each branch in the period-2 bifurcation diagram $Z^{\pm}(\eta)$ defined on $(\frac{2}{l''(z_*)}, \frac{2 + \tau}{l''(z_*)}]$, the following are true
    \begin{enumerate}
        \item $Z^\pm$ have continuous fifth derivatives.
        \item $Z^\pm$ are monotonic.
        \item $Z^\pm \to z_*$ as $\eta \to \frac{2}{l''(z_*)}$.
    \end{enumerate}
\end{lemma}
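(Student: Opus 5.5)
We plan to work with the two-step displacement in rescaled coordinates and apply the implicit function theorem after factoring out the trivial root. Set $u = a - z_*$ and $\epsilon = \eta l''(z_*) - 2$, so that $\eta = (2+\epsilon)/l''(z_*)$. Define
\begin{equation*}
    D(u,\epsilon) = l'(z_* + u) + l'\bigl(z_* + u - \eta l'(z_* + u)\bigr).
\end{equation*}
Since $l \in \mathcal{C}^5$, $D$ is $\mathcal{C}^4$ jointly in $(u,\epsilon)$, and $D(0,\epsilon)=0$. We therefore write $D(u,\epsilon) = u\, G(u,\epsilon)$, where $G(u,\epsilon) = \int_0^1 \partial_u D(tu,\epsilon)\,dt$. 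The two-step fixed points $z_\eta^\pm$ from \cref{thm:two_step} are exactly the nonzero roots of $G$.

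By the Taylor expansion \cref{eq:two_step_taylor}, with $1-\eta l''(z_*) = -1-\epsilon$ and $2 - \eta l''(z_*) = -\epsilon$, we get
\begin{equation*}
    G(u,\epsilon) = -\epsilon\, l''(z_*) + \tfrac12 l^{(3)}(z_*)(1+\epsilon)\epsilon\, u + c(\epsilon)\, u^2 + O(u^3).
\end{equation*}
At $\epsilon = 0$ the coefficient is $c(0) = -\tfrac{2}{6}l^{(4)} + \tfrac{1}{2}\cdot\tfrac{2}{l''}(l^{(3)})^2 = \alpha_l(z_*)/(3 l''(z_*)) > 0$, with all derivatives evaluated at $z_*$. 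The equation $G=0$ has a fold at the origin, so we substitute $u = \sqrt{\epsilon}\, w$. Then $G(\sqrt{\epsilon} w,\epsilon)/\epsilon = H(w,\mu)$ with $\mu = \sqrt{\epsilon}$, and
\begin{equation*}
    H(w,\mu) = -l''(z_*) + c(0)\, w^2 + O(\mu).
\end{equation*}
At $\mu = 0$ this has the simple roots $w_\pm = \pm\sqrt{l''(z_*)/c(0)}$, with $\partial_w H = 2c(0)\, w_\pm \neq 0$. The implicit function theorem then yields branches $w_\pm(\mu)$, and hence $Z^\pm(\eta) = z_* + \mu\, w_\pm(\mu)$.

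The three properties then follow in turn. Property 3 is immediate, since $Z^\pm - z_* = O(\sqrt{\epsilon}) \to 0$. For property 1, on the open interval $\epsilon > 0$ we apply the implicit function theorem directly to $G(u,\epsilon)=0$ at $(u,\epsilon)=(Z^\pm-z_*,\epsilon)$. Here $\partial_u G \approx 2c(0)\, u \neq 0$, so $Z^\pm$ inherits the smoothness of $G$. Uniqueness from \cref{thm:two_step} ensures these local branches are the same functions as the ones defined there. For property 2, implicit differentiation gives
\begin{equation*}
    \frac{dZ}{d\epsilon} = -\frac{\partial_\epsilon G}{\partial_u G} = \frac{l''(z_*) + O(u)}{2c(0)\, u + O(\epsilon + u^2)},
\end{equation*}
which has the sign of $u$ for small $\tau$. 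Hence $Z^+$ is increasing and $Z^-$ is decreasing.

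The main obstacle is the regularity count. Dividing by $u$ costs one derivative, and $D$ itself is only $\mathcal{C}^4$ because the composition involves $l'$. A naive count therefore gives $G \in \mathcal{C}^3$ rather than the $\mathcal{C}^5$ asserted for $Z^\pm$. Property 1 thus likely needs either a stronger hypothesis or a careful reinterpretation; I would first check whether the claim is meant on the open interval with the smoothness that $G$ actually has, and if necessary state the regularity we can genuinely obtain. The properties actually used downstream are monotonicity, continuity, and the boundary limit, and these only need $G \in \mathcal{C}^1$ together with the bounds on the $O(\cdot)$ remainders, which are uniform on small neighborhoods.
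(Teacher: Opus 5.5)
Your argument for (2) and (3) is correct, but it is organized differently from the paper's. The paper never rescales. For (3) it exhibits, for each $\eta$ near $2/l''(z_*)$, the point $z_{4,\eta} = z_* + \max\{2\sqrt{|c_1(\eta)/c_3(\eta)|},\, 4|c_2(\eta)/c_3(\eta)|\}$ at which $D_\eta > 0$, so $z_* < z_\eta^+ < z_{4,\eta} \to z_*$ by squeezing. For (2) it uses the exact identity $\partial_\eta D_\eta(z) = -l'(z)\,l''(z - \eta l'(z))$, whose sign is that of $-(z - z_*)$. Combined with $D_\eta'(z_\eta^\pm) > 0$ from the proof of \cref{thm:two_step}, this shows $(Z^\pm)' = -\partial_\eta D_\eta / D_\eta'$ has fixed sign, with no asymptotics at all.

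Your blow-up costs some bookkeeping. You should verify that $H$ and $\partial_w H$ extend continuously to $\mu = 0$, which holds since $G \in \mathcal{C}^3$. Your sign argument for $dZ/d\epsilon$ also tacitly uses $\epsilon = O(u^2)$ along the branch, which the blow-up provides. In return it yields the rate $Z^\pm - z_* \sim \pm\sqrt{l''(z_*)\,\epsilon/c(0)}$.

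Your factorization in fact makes the blow-up unnecessary. Since $\partial_\epsilon G(0,0) = -l''(z_*) \neq 0$, the implicit function theorem solves $G = 0$ as $\epsilon = \hat\epsilon(u)$, with $\hat\epsilon \in \mathcal{C}^3$ through $u = 0$, $\hat\epsilon'(0) = 0$ and $\hat\epsilon''(0) = 2c(0)/l''(z_*) > 0$. Then $Z^\pm - z_*$ are the inverses of $\hat\epsilon$ on $u > 0$ and $u < 0$, which gives (2) and (3) at once. This also supplies the regularity of $\hat Z$ through $z_*$ that Lemma~\ref{thm:z_hat_definition} relies on but the paper never establishes. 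So your remark that downstream only continuity, monotonicity and the limit are needed is not quite accurate.

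On (1) your suspicion is justified, and it exposes an error in the paper's proof, which asserts that $D_\eta$ has continuous fifth derivatives. In fact $D_\eta(z) = l'(z) + l'(z - \eta l'(z))$ is built from $l' \in \mathcal{C}^4$, so it is only $\mathcal{C}^4$, and the implicit function theorem gives $Z^\pm \in \mathcal{C}^4$. This is sharp in general. The derivative $\partial_z^4 D_\eta$ contains the term $l^{(5)}(z)\bigl(1 - \eta l''(z - \eta l'(z))\bigr)$, whose second factor is close to $-1$. Hence a kink in $l^{(5)}$ at a point crossed by a branch produces a kink in $(Z^\pm)^{(4)}$.

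You do, however, stop one derivative short. At a nonzero root, $\partial_u D = G + u\,\partial_u G = u\,\partial_u G \neq 0$. So applying the implicit function theorem to $D$ itself, as the paper does, rather than to $G$, gives $\mathcal{C}^4$ on the open interval. The correct form of (1) is $\mathcal{C}^4$ under $l \in \mathcal{C}^5$ (or $\mathcal{C}^5$ if one assumes $l \in \mathcal{C}^6$), and that is what you should state and prove.
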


We can also define an inverse $\hat{Z}$ which takes a given $z$ in the neighborhood of $z_*$ and maps it to the learning rate $\eta$ for which it is a fixed point. That is, $\hat{Z}$ satisfies
\begin{equation*}
    l'(z) + l'(z - \hat{Z}(z) l'(z)) = 0
\end{equation*}

We illustrate the bifurcation diagram in \cref{fig:eos_training_dynamics}.

\subsection{Training Dynamics of the Factored Model}

We are now ready to analyze the training dynamics on the original objective $\L$. Following the previous section, our analysis focuses on two step updates.

\textbf{Approaching the Bifurcation Diagram.}
Recall \cref{eq:z_approx}, which shows that $z_t$ behaves like training with GD directly on $l$ with adaptive learning rate $\gamma_t = \eta s_t$. The two step update for $s_t$, ignoring higher orders terms, is given by
\begin{align} \label{eq:two_step_s_approx}
    s_{t+2} &\approx s_t - 4\eta(l'(z_t) + l'(z_{t+1})) z_t \\
    \nonumber
    &+ \eta^2 \left((l'(z_t))^2 +4 l'(z_t) l'(z_{t+1}) + (l'(z_{t+1}))^2 \right) s_t
\end{align}
Now $s_t = \Theta(\frac{1}{\eta})$ while $z_t = \Theta(1)$, so $\left| \frac{s_{t+2} - s_t}{s_t} \right| = O(\eta^2)$. That is, $s_t$ does not change much over individual iterations. Therefore the effective learning rate $\gamma_t$ is roughly constant, and we can use the results from \cref{thm:two_step}. This gives rise to the first phase, where $z_t$ approaches to within $O(\eta^2)$ the bifurcation diagram while $s_t$ remains approximately constant.

\textbf{Decreasing Sharpness Multiplier.}
Once $z_t$ is close to the bifurcation diagram, we enter a new phase, where we must consider the long term drift of the sharpness multiplier. Since we are close to the bifurcation diagram defined by two step fixed points, we have $l'(z_t) + l'(z_{t+1}) = O(\eta^2) \approx 0$, so \cref{eq:two_step_s_approx} becomes
\begin{align*}
    s_{t+2} &\approx s_t (1 + 2 \eta^2 l'(z_t) l'(z_{t+1}))
\end{align*}
Due to the oscillatory nature of single steps, $l'(z_t)$ and $l'(z_{t+1})$ have opposite signs. Therefore, during this phase, the iterates track the bifurcation diagrams in the direction of decreasing $s$. This holds all the way until $z_t$ is very close to $z_*$ and $s_t$ is slightly smaller than $\frac{2}{\eta l''(z_*)}$.

\textbf{Convergence.} Now that the iterates are close to $z_*$ and the sharpness is below the EoS threshold, we show that the iterates converge linearly to the minimum.

Our formal analysis, presented in Appendix \ref{apdx_training_dynamics}, allows us to precisely characterize the sharpness at convergence.

\begin{theorem} \label{thm:final_sharpness}
    Let $l \in \mathcal{C}^5$ have a local minimum at $z_*$ with $l''(z_*) > 0, \alpha_l(z_*) > 0$. Then there exists $\eta_0, \sigma, \tau, \delta > 0$ with $\delta < \tau$ such that for all $0 < \eta \leq \eta_0$, minimizing the objective
    \begin{equation*}
        \L(x, y) = l(xy)
    \end{equation*}
    via gradient descent with learning rate $\eta$ and initialization
    \begin{align*}
        |z(x_0,y_0) - z_*| &\leq \sigma, \\
        2 + \delta \leq  \eta l''(z_*) s(x_0,y_0) &\leq 2 + \tau
    \end{align*}
    results in final sharpness
    \begin{equation*}
        \lambda = \frac{2}{\eta} - \frac{3 (l''(z_*))^4}{\alpha_l(z_*)} \eta + O(\eta^{\frac{5}{3}}).
    \end{equation*}
\end{theorem}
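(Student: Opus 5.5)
The plan is to reduce the final sharpness to the limit of the sharpness multiplier and then compute that limit from a two-dimensional slow--fast system. By \cref{thm:main_convergence} the iterates converge, with $l'(z_t)\to 0$ and $z_t\to z_*$. So by \cref{thm:sharpness_formula} the final sharpness is $\lambda = l''(z_*)\, s_\infty$, where $s_\infty=\lim s_t$. Introduce the normalized excess $u_t := \eta\, l''(z_*)\, s_t - 2$ and the oscillation amplitude $e_t := z_t - z_*$. The claim is then equivalent to
\[
u_\infty = -\frac{3 (l''(z_*))^4}{\alpha_l(z_*)}\,\eta^2 + O(\eta^{8/3}),
\]
and the whole proof is a careful accounting of how far $u$ overshoots below $0$ before the oscillation dies.

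\textbf{Step 1 (reduced two-step dynamics).} Using \cref{eq:z_approx} with effective rate $\gamma_t=\eta s_t$, I would Taylor expand as in \cref{eq:two_step_taylor}, now with $\gamma_t l''(z_*) = 2+u_t$. At $u=0$ the cubic coefficient evaluates to $\alpha_l(z_*)/(3 l''(z_*))$. Writing $w=e^2$ and $k=\alpha_l(z_*)/(3 l''(z_*)^2)$, the two-step map on $w$ should be
\[
w \mapsto w + 2w\,(u - k w) + O(|u|^2 w + w^{5/2} + \eta^2 w).
\]
Its nonzero fixed point $w = u/k$ recovers the bifurcation branch $Z^\pm$ of \cref{thm:bifurcation_properties}. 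From \cref{eq:two_step_s_approx}, with $l'(z_t)\approx \pm l''(z_*)e$ of opposite signs on consecutive steps, I would derive
\[
u \mapsto u - c\,\eta^2 w + \text{h.o.t.}
\]
for an explicit constant $c\approx 4(l''(z_*))^2$; this constant must be tracked carefully. The remainder terms need bounds uniform over the region guaranteed by the boundedness part of the proof of \cref{thm:main_convergence}.

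\textbf{Step 2 (phases).} Phase I ($u\ge\delta$): Theorem~\ref{thm:two_step} applied with the nearly frozen rate $\gamma_t$ brings $w$ within $O(\eta^2)$ of the slow manifold $w=u/k$ in $O(\log(1/\eta))$ steps. The hypothesis $\delta\le u_0$ guarantees this attraction happens before $u$ drifts down, and during this phase $u$ changes by only $O(\eta^2\log(1/\eta))$. Phase II: along the manifold, $u$ decays like $(1-c\eta^2/k)^{t/2}$. The first correction to the manifold is obtained by requiring invariance, $\dot w = \dot u/k$, which gives
\[
w = \frac{u + c\eta^2/(2k)}{k} + \text{h.o.t.}
\]
This already shows that $w$ reaches $0$ at $u \approx -c\eta^2/(2k)$, i.e.\ with an overshoot of order $\eta^2$. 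Phase III: once $u<0$ strictly, $z_*$ is linearly stable, $w$ decays geometrically, and the remaining total drift $\sum c\eta^2 w$ is controlled by a geometric sum. Pinning down $c$ in Step~1 is what makes the constant come out to $3(l'')^4/\alpha$.

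\textbf{Main obstacle.} The hard part is the transition layer $|u|\lesssim \eta^{2/3}$. There the manifold ceases to be normally hyperbolic: the attraction rate $2ku$ becomes comparable to the drift, so the error terms in the slow-manifold expansion are no longer small relative to the main terms. I would handle this by matching an outer expansion (the corrected manifold, valid for $u\gg\eta^{2/3}$) to an inner rescaled system. In the inner variables $u=\eta^{2/3}U$, $w=\eta^{2/3}W$, $\tau=\eta^{2/3}t$, the leading-order system is the continuous flow $U'=-cW$, $W'=2W(U-kW)$. This flow has the exact first integral obtained from $dW/dU=-2(U-kW)/c$, which is linear in $W$ and explicitly solvable, and evaluating it as $W\to 0$ gives the limiting $U_\infty$. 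The discretization and higher-order remainders contribute relative errors of size $O(\eta^{2/3})$ in these rescaled variables, which is exactly the source of the $O(\eta^{5/3})$ term in $\lambda = (2+u_\infty)/\eta$.
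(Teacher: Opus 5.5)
Your architecture matches the paper's: attraction to the period-2 branch, then drift along an $O(\eta^2)$-corrected slow manifold, then linear convergence, with the final sharpness read off where the corrected manifold reaches $z_*$. Your invariance shift is the $u$-space version of the paper's $\eta^2\Phi$. But three steps fail as written.

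First, the coefficient in your $w$-map is wrong. The two-step map is $e\mapsto e(1+u)^2-2ke^3+\cdots$, so $w\mapsto w+4w(u-kw)$, not $w+2w(u-kw)$. The correct drift constant is $c=4(l''(z_*))^2$: on the manifold $h_t\approx-2g_t^2$, so $u_{t+2}-u_t\approx-4\eta^2(l''(z_*))^2w$. With your coefficient $2$, invariance gives $u_\infty=-c\eta^2/(2k)=-6(l''(z_*))^4\eta^2/\alpha_l(z_*)$, off by a factor of $2$. With $4$ you get $-c\eta^2/(4k)=-\eta^2 l''(z_*)\Phi(z_*)$, as claimed.

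Second, your remainder $O(\eta^2 w)$ in the $w$-map is too coarse to determine any constant. Such a term is an $O(\eta^2)$ shift of $u$, the same order as the overshoot you are computing. You need to show that the $\eta^2$-corrections carry an extra factor of $e$, i.e.\ are $O(\eta^2 w^{3/2})$. These corrections come from the $\eta^2 z_t g_t^2$ term in the $z$-update and from the single-step $O(\eta^2 e)$ oscillation of $\gamma$. This is the counterpart of the paper's structure: the $z$-shift enters its recursion multiplied by $\hat Z'(z)=O(z-z_*)$, which is why that contribution to $\Phi$ vanishes at $z_*$.

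Third, the step you call the main obstacle is mis-scaled. Substituting $u=\eta^{2/3}U$, $w=\eta^{2/3}W$, $\tau=\eta^{2/3}t$ into $u'=-c\eta^2 w$ gives $U'=-c\eta^{4/3}W$, not $U'=-cW$. Taken literally, your inner system yields $U_\infty=O(1)$, i.e.\ an overshoot $u_\infty\sim\eta^{2/3}$. That contradicts both your outer computation and the theorem.

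By your own criterion, the layer sits elsewhere. The attraction rate is about $4u\approx 4kw$ and the relative drift rate along the manifold is $c\eta^2/k$, so hyperbolicity is lost only at $u,w\sim\eta^2$, with $\tau=\eta^2 t$. At that scale the first integral $W=U/k+c/(4k^2)+Ae^{4kU/c}$ applies, and matching to the outer manifold forces $A$ to be exponentially small. The layer then merely reproduces $U_\infty=-c/(4k)$; it is not where the $O(\eta^{5/3})$ comes from.

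In the paper that exponent is bookkeeping. With $E_t=\hat Z(z_t)-\gamma_t-\eta^2\Phi(z_t)$, the recursion $E_{t+2}=(1-\Theta(g_t^2))E_t+O(\eta^4 g_t)$ keeps the strip $|E_t|\le D\eta^{8/3}$ forward-invariant while $|g_t|\gtrsim\eta^{4/3}$. After that, the residual drift of $\gamma$ is $O(|z_{T_2}-z_*|^2)=O(\eta^{8/3})$. This invariant-strip argument needs no normal hyperbolicity. It stays valid for $\eta^{4/3}\lesssim|g_t|\lesssim\eta$, where contraction is weaker than drift, because $\Phi$ absorbs the drift exactly. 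So it sidesteps your transition layer entirely; either adopt it or redo the inner analysis at the $\eta^2$ scale.
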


The theorem shows that the final sharpness is slightly less than the EoS threshold of $\frac{2}{\eta}$, demonstrating the phenomenon of sharpness adaptivity \citep{zhu2023understanding}. We note that \citet{ahn2023learning, zhu2023understanding, song2023trajectory} showed similar results under different, more restrictive assumptions. In \cref{sec:discussion}, we will find that their results are in fact special cases of the above theorem.

Compared to \cref{thm:main_convergence}, there is an additional condition requiring a gap $\delta$ between the initial sharpness multiplier $s_0$ and the EoS threshold. This gives space for the previously described EoS dynamics to take effect instead of having the iterates directly leave the EoS regime. We show that this condition is required in \cref{fig:delta_gap}.

\begin{figure}[b]
  \centering
  \includegraphics[width=0.48\textwidth]{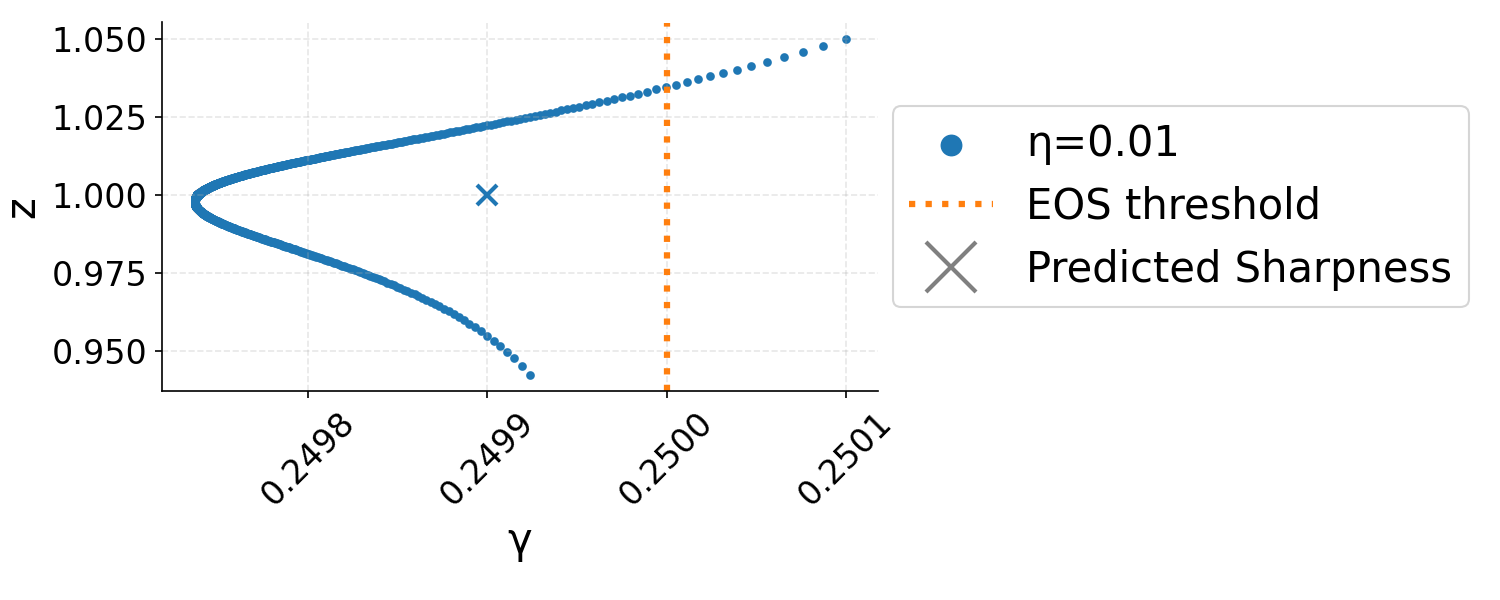}
  \caption{EoS training dynamics for $l = \text{MLSq}_{1,2}$ (\cref{eq:mlsq_definition}) when started very close to the EoS threshold. The iterates do not converge to the final sharpness predicted by \cref{thm:final_sharpness}, showing that the $\delta$ gap is required.}
  \label{fig:delta_gap}
\end{figure}

Interestingly, the correction term for the final sharpness depends directly on the product-stability. This underscores the important role of product-stability in EoS training.

\subsection{Summary} By analyzing the two-step GD dynamics, we have divided the training trajectory into three phases:
\begin{itemize}
    \item \emph{Phase I}. Iterates rapidly approach the bifurcation diagram while $s_t$ remains roughly constant.
    \item \emph{Phase II}. Iterates slowly drift along the bifurcation diagram in the direction of decreasing $s_t$ until reaching a neighborhood of the EoS minima.
    \item \emph{Phase III}. Iterates converge linearly with $z_t \to z_*$ and limiting sharpness $\lambda = \frac{2}{\eta} - \Theta(\eta)$.
\end{itemize}

\cref{fig:eos_training_dynamics} shows the training dynamics in the $(\gamma, z)$ phase space for various functions, learning rates, and initializations. We can clearly observe Phase I and Phase II, where the iterates approach the bifurcation diagram and then drift in the direction of decreasing $\gamma$. In \cref{fig:eos_training_dynamics_end}, we zoom in on the end of training, where we can see the Phase III convergence and the limiting sharpness. \cref{thm:final_sharpness} accurately predicts the sharpness at convergence.

\begin{figure*}[t] 
    \centering

    \begin{subfigure}{0.38\textwidth}
        \centering
        \includegraphics[width=\linewidth]{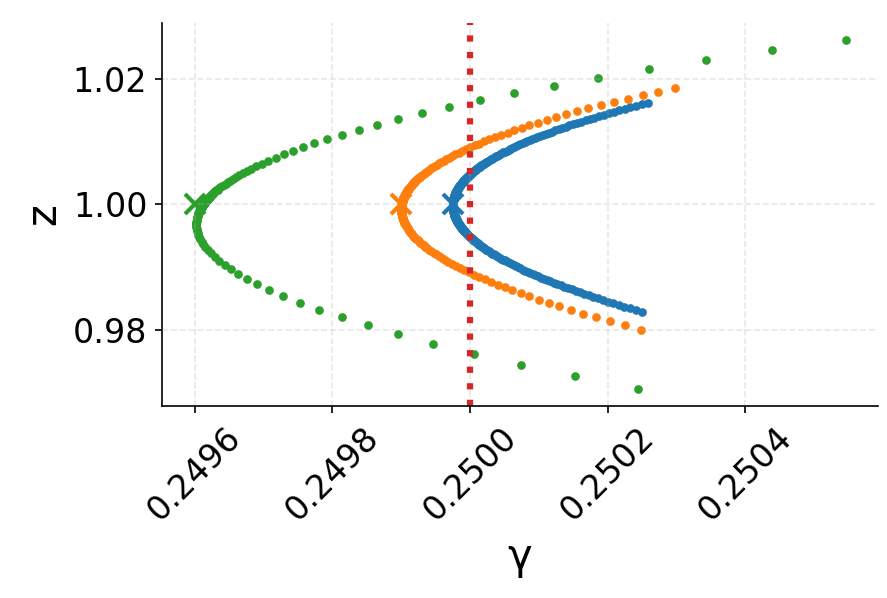}
        \caption{$\text{MLSq}_{1,2}$ (\cref{eq:mlsq_definition})}
    \end{subfigure}
    \hfill
    \begin{subfigure}{0.6\textwidth}
        \centering
        \includegraphics[width=\linewidth]{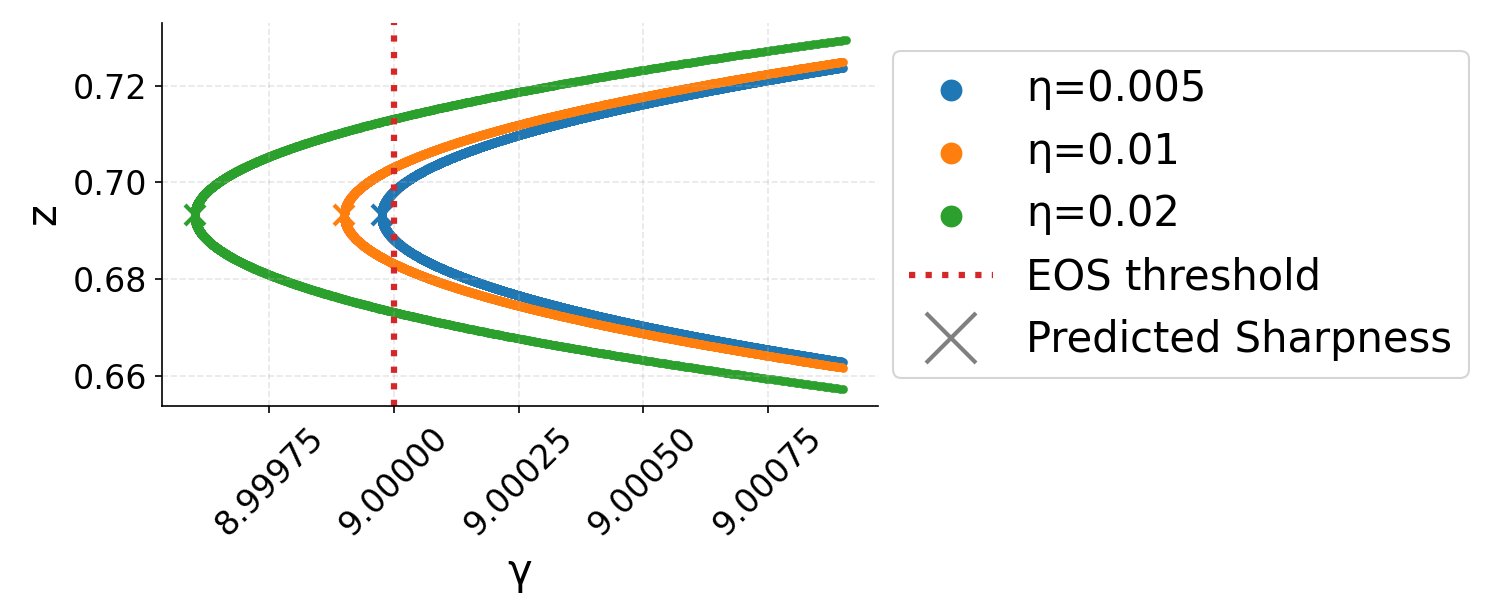}
        \caption{$\text{BCE}_{\frac{2}{3}}$ (\cref{eq:bce_definition})}
    \end{subfigure}

    \captionsetup{justification=justified}
    \caption{End of training dynamics for the runs in \cref{fig:eos_training_dynamics}. One can observe Phase III of the training dynamics, where the iterates converge to $z_*$. The limiting sharpness is just below the EoS threshold and very close to the value predicted by \cref{thm:final_sharpness}.} \label{fig:eos_training_dynamics_end}
\end{figure*}
\section{Discussion} \label{sec:discussion}
In the previous sections, we proved that minimizing functions of the form $l(xy)$ when $l$ has a product-stable minimum can converge when trained with GD on the edge of stability, and provided a detailed analysis of the dynamics that lead to convergence. In this section, we explore what types of functions $l$ have minima that satisfy the definition of product-stability. Indeed, we find that our notion of product-stability is applicable to practical losses such as binary cross entropy and generalizes assumptions from previous work on EoS training.

\subsection{Binary Cross Entropy}
The cross entropy loss and its variants are the most common classification losses used in modern applications. We define the binary cross entropy loss (BCE) by
\begin{equation} \label{eq:bce_definition}
    \text{BCE}_{q}(z) = -[q \log \sigma(z) + (1 - q) \log (1 - \sigma(z))] 
\end{equation}
where $\sigma$ is the sigmoid function and we allow soft labels $q \in [0,1]$ so that the loss can have finite minima with non-vanishing sharpness. We show that the BCE
loss is product-stable everywhere.
\begin{lemma} \label{thm:bce}
For all $q \in [0,1]$ and for all  $z \in \R, \alpha_{\text{BCE}_q}(z) > 0$.
\end{lemma}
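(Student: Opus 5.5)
The plan is to reduce everything to the sigmoid, where the derivatives are polynomials in $\sigma = \sigma(z)$. Since
$$\text{BCE}_q(z) = -q \log \sigma(z) - (1-q)\log(1-\sigma(z)),$$
we have $\text{BCE}_q'(z) = \sigma(z) - q$. The label $q$ therefore only enters the first derivative. All higher derivatives are derivatives of $\sigma$, and $\alpha_{\text{BCE}_q}(z)$ does not depend on $q$ at all. This disposes of the quantifier over $q \in [0,1]$ immediately. It also shows $\text{BCE}_q$ is $\mathcal{C}^\infty$, so the definition of product-stability applies.

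Next I compute the derivatives of $\sigma$. Write $\sigma$ for $\sigma(z)$ and use $\sigma' = \sigma(1-\sigma)$. Set $p = \sigma(1-\sigma) \in (0, \tfrac14]$ and $u = 1 - 2\sigma$, so that $u' = -2p$, $p' = pu$ and $u^2 = 1 - 4p$. Then
$$f'' = p, \qquad f^{(3)} = pu, \qquad f^{(4)} = pu^2 - 2p^2 = p(1 - 6p),$$
where for the last step $(pu)' = p u^2 + p(-2p)$.

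Substituting into the definition of $\alpha_f$ gives
$$\alpha = 3p^2 u^2 - p \cdot p(1-6p) = p^2\bigl(3(1-4p) - 1 + 6p\bigr) = p^2(2 - 6p).$$
Since $0 < p \le \tfrac14$, we get $2 - 6p \ge \tfrac12 > 0$ and $p^2 > 0$. Hence $\alpha_{\text{BCE}_q}(z) \ge \tfrac12 p^2 > 0$ for every $z$, which is the claim.

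The only step that could go wrong is the algebra for $f^{(4)}$ and the final simplification. I will double-check it at $z = 0$. There $\sigma = \tfrac12$, $p = \tfrac14$ and $u = 0$, so $f^{(3)} = 0$, and the known value $\sigma^{(3)}(0) = -\tfrac18$ agrees with $p(1 - 6p) = -\tfrac18$. This gives $\alpha = \tfrac1{16} \cdot \tfrac12 = \tfrac1{32} > 0$, consistent with the formula.

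I will also remark that at the minimizer $z_* = \sigma^{-1}(q)$ with $q \in (0,1)$ we have $l''(z_*) = q(1-q) > 0$. So the hypotheses of \cref{thm:main_convergence} and \cref{thm:final_sharpness} hold there.
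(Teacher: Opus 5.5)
Your proof is correct and follows the same route as the paper: write the derivatives of $\text{BCE}_q$ as polynomials in $\sigma(z)$, which are independent of $q$, and substitute. Your $p^2(2-6p)$ with $p=\sigma(1-\sigma)$ is exactly the paper's $2\sigma^2(1-\sigma)^2(3\sigma^2-3\sigma+1)$, and your intermediate derivatives $f^{(3)}=p(1-2\sigma)$ and $f^{(4)}=p(1-6p)$ are the correct ones (the paper's displayed $s(1-s)(2-s)$ and $s(1-s)(1-6s-6s^2)$ are typos, though its final expression is right).
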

To our knowledge, the notion of product-stability is the first which can prove EoS convergence for generic cross entropy-type losses with finite minima.
    
\subsection{Squared Loss and the Impact of Depth}
The squared loss $l(z) = (z - a)^2$ is common in regression settings and popular in theoretical analysis due to its tractability. In the EoS regime, \citet{wang2022large, liang2025gradient} prove that gradient descent on $(xy - a)^2$ converges with learning rate up to $\frac{4}{\lambda}$. One may note that the squared loss has product-stability $\alpha = 0$ everywhere, so \cref{thm:main_convergence} does not apply. 

Interestingly, \citet{zhu2023understanding} show that GD dynamics on this vanilla squared loss exhibits behavior inconsistent with empirical observation---most notably that the point to which GD converges has sharpness significantly less than $\frac{2}{\eta}$. Instead, \citet{zhu2023understanding} analyze the function $(x^2y^2 - 1)^2$, which can be thought of as squared loss with a four layer network where two of the parameters are initialized to the same value $x$ and the other two are initialized to $y$. They prove EoS convergence for this function with limiting sharpness just below the EoS threshold, and use this to argue that network depth contributes to the EoS phenomenon (note that the switch from $l(z)$ to $l(xy)$ can itself be seen as increasing the depth from 1 to 2 layers). One can generalize their construction to consider $2n$-layer networks with corresponding loss $(x^ny^n - a)^2$. We show that product-stability can prove EoS convergence for this entire class of functions \footnote{\citet{wang2023good} also analyze loss functions of this form, but under different initialization.}:
\begin{lemma} \label{thm:mlsq}
    Let $a > 0$ and define the \emph{multilayer squared loss}
    \begin{equation} \label{eq:mlsq_definition}
        \text{MLSq}_{a,n}(z) = (z^n - a)^2
    \end{equation}
    For $n \geq 2, \alpha_{\text{MLSq}_{a,n}}(a^{\frac{1}{n}}) > 0$.
\end{lemma}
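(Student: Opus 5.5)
The plan is a direct computation at $z_0 := a^{1/n}$: write $\text{MLSq}_{a,n} = g^2$ with $g(z) = z^n - a$, express the second through fourth derivatives of $f = g^2$ in terms of those of $g$, and use $g(z_0) = 0$ to drop every term containing a factor of $g$. Since $a>0$, we have $z_0>0$, so every power of $z_0$ appearing below is strictly positive. This is what makes the sign analysis clean.

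\textbf{Derivatives of $f$.} By the Leibniz rule,
\begin{align*}
f'' &= 2(g')^2 + 2 g g'', \\
f^{(3)} &= 6 g' g'' + 2 g g^{(3)}, \\
f^{(4)} &= 6 (g'')^2 + 8 g' g^{(3)} + 2 g g^{(4)}.
\end{align*}
Evaluating at $z_0$, where $g=0$, gives $f'' = 2(g')^2$, $f^{(3)} = 6 g' g''$ and $f^{(4)} = 6(g'')^2 + 8 g' g^{(3)}$.

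\textbf{Product-stability.} Substituting into the definition of $\alpha_f$,
\begin{equation*}
\alpha_f(z_0) = 108 (g')^2 (g'')^2 - 2(g')^2\left(6 (g'')^2 + 8 g' g^{(3)}\right) = (g')^2\left(96 (g'')^2 - 16 g' g^{(3)}\right).
\end{equation*}

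\textbf{Plugging in the monomial.} With $g' = n z^{n-1}$, $g'' = n(n-1) z^{n-2}$ and $g^{(3)} = n(n-1)(n-2) z^{n-3}$, the two terms in the bracket have the common factor $16 n^2 (n-1) z_0^{2n-4}$. This gives
\begin{equation*}
\alpha_f(z_0) = 16\, n^4 (n-1)(5n-4)\, z_0^{4n-6}.
\end{equation*}

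\textbf{Sign.} For $n \ge 2$ each factor is positive: $n-1 \ge 1$, $5n - 4 > 0$, and $z_0>0$. Hence $\alpha_{\text{MLSq}_{a,n}}(a^{1/n}) > 0$.

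As a sanity check, $n=1$ recovers the vanishing product-stability of the ordinary squared loss mentioned above. There is no real obstacle here. The only point needing care is tracking the coefficients in $f^{(4)}$ correctly, since the positivity comes entirely from the cancellation $6(n-1) - (n-2) = 5n - 4$.
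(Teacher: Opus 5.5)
Your proof is correct. It is the same kind of direct computation as the paper's, organized differently. The paper differentiates $(z^n-a)^2$ four times explicitly and then substitutes $z = a^{1/n}$. You instead write the loss as $g^2$ and apply the Leibniz rule, so every term carrying a factor of $g$ drops out before any monomial algebra. This yields the general identity $\alpha_{g^2}(z_0) = (g')^2\bigl(96(g'')^2 - 16\, g' g^{(3)}\bigr)$ at any zero $z_0$ of $g$.

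That organization pays off here, because your closed form $16n^4(n-1)(5n-4)\,z_0^{4n-6}$ is the correct one and the paper's is not. The paper's fourth derivative has the wrong coefficient. At $z_0$ it should be $f^{(4)}(z_0) = 2n^2(n-1)(7n-11)\,z_0^{2n-4}$, not $2n(n-1)(7n-6)\,z_0^{2n-4}$. As a result, the paper's stated value $4n^3(n-1)(27n^2-34n+6)\,z_0^{4n-6}$ is incorrect. For instance, with $n=2$ and $a=1$ one has $f''(1)=8$, $f^{(3)}(1)=24$ and $f^{(4)}(1)=24$, hence $\alpha = 1536$. This matches your formula and not the paper's value of $1472$.

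The paper's sign conclusion happens to survive the slip, but your derivation is the one to trust.
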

\citet{zhu2023understanding} prove their result by analyzing the two step gradient dynamics in a multi-phase structure conceptually similar to our analysis in  \cref{sec:training_dynamics}. The notion of product-stability allows us to abstract the analysis and apply it to a broad class of loss functions.

\subsection{Subquadratic Losses}
\citet{ahn2023learning} proves EoS convergence for functions of the form $l(xy)$ under the following assumption:
\begin{assumption}[\citet{ahn2023learning}] \label{assumption:subquadratic} $l$ is convex, even, $1$-Lipschitz, and of class $\mathcal{C}^2$ with $l''(0) = 1$. Moreover, there exists $\beta > 1$ and $c > 0$ such that for all $z \neq 0$,
\begin{equation} \label{eq:subquadratic_assumption}
    \frac{l'(z)}{z} \leq 1 - c |z|^\beta \mathbbm{1}\{|z| \leq c\}
\end{equation}
\end{assumption} 
\cref{eq:subquadratic_assumption} implies that the loss is ``subquadratic'', namely that the sharpness decays in a neighborhood around the minimum at $z = 0$. Similar subquadratic assumptions are also made in \citet{ma2022quadratic, song2023trajectory}. Assuming sufficient differentiability, one can check that the previous assumption requires that $l^{(3)}(0) = 0$ and $l^{(4)}(0) < 0$. This immediately implies that the minimum satisfies the definition of product-stability. 
\begin{lemma}
    If $l \in \mathcal{C}^5$ satisfies Assumption \ref{assumption:subquadratic}, then $\alpha_l(0) > 0$.
\end{lemma}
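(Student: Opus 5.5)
The plan is to compute $\alpha_l(0)=3(l^{(3)}(0))^2-l^{(4)}(0)\,l''(0)$ directly from Assumption~\ref{assumption:subquadratic}. Since $l''(0)=1$, we have $\alpha_l(0)=3(l^{(3)}(0))^2-l^{(4)}(0)$. It therefore suffices to show that $l^{(3)}(0)=0$ and $l^{(4)}(0)<0$.

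\textbf{Step 1 (third derivative).} Since $l$ is even, $l'$ is odd and $l''$ is even. Differentiating $l''(z)=l''(-z)$ gives $l^{(3)}(z)=-l^{(3)}(-z)$, so $l^{(3)}(0)=0$. By the same argument $l^{(5)}(0)=0$, although this is not needed. Also, $l'(0)=0$ because $l'$ is odd.

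\textbf{Step 2 (Taylor expansion of $l'(z)/z$).} Because $l\in\mathcal{C}^5$, Taylor's theorem applied to $l'$ at $0$ gives
\begin{equation*}
    l'(z)=z+\tfrac{1}{6}l^{(4)}(0)\,z^3+o(z^3),
\end{equation*}
and hence $\frac{l'(z)}{z}=1+\frac{1}{6}l^{(4)}(0)z^2+o(z^2)$. Substituting into \cref{eq:subquadratic_assumption} for $0<|z|\le c$ yields
\begin{equation*}
    \tfrac{1}{6}l^{(4)}(0)\,z^2+o(z^2)\le -c|z|^\beta .
\end{equation*}

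\textbf{Step 3 (cases in $\beta$).} Divide the inequality from Step 2 by $|z|^{\min(\beta,2)}$ and let $z\to0$.
\begin{itemize}
    \item If $1<\beta<2$, the left side tends to $0$ while the right side stays at most $-c<0$. This is a contradiction, so no $\mathcal{C}^5$ function satisfies the assumption with such $\beta$, and the lemma holds vacuously.
    \item If $\beta=2$, dividing by $z^2$ gives $\frac16 l^{(4)}(0)\le -c<0$. Hence $\alpha_l(0)=-l^{(4)}(0)\ge 6c>0$.
\end{itemize}

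\textbf{Main obstacle.} The difficult case is $\beta>2$. There the limit only gives $l^{(4)}(0)\le 0$. Indeed, $l(z)=\tfrac{z^2}{2}-\tfrac{z^6}{6}$ near $0$, suitably modified away from $0$ to be convex, even and $1$-Lipschitz, appears to satisfy the assumption with $\beta=4$ while having $l^{(4)}(0)=0$, which would give $\alpha_l(0)=0$.

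I would first check whether this modification can really be carried out globally. If it can, the statement needs an extra hypothesis, and I would prove it under one of two conditions: either $\beta\le 2$, or the assumption that the expansion of $l'(z)/z$ is nondegenerate at order $z^2$ (that is, $l^{(4)}(0)\neq0$). Under either condition, Steps 1--3 give $\alpha_l(0)=-l^{(4)}(0)>0$. So the core of the argument is the strictness of $l^{(4)}(0)<0$, and this is exactly where the precise value of $\beta$ matters.
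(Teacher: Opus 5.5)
Your Steps 1--3 are the paper's argument made explicit. The paper's only justification is the sentence preceding the lemma, which says Assumption~\ref{assumption:subquadratic} ``requires that $l^{(3)}(0)=0$ and $l^{(4)}(0)<0$,'' so that $\alpha_l(0)=-l^{(4)}(0)>0$. Your derivation is correct, including the case split: a witness $\beta<2$ is impossible for a $\mathcal{C}^5$ function, and $\beta=2$ gives $\alpha_l(0)=-l^{(4)}(0)\ge 6c$. The obstacle you flag at $\beta>2$ is genuine, and the gap is in the paper rather than in your proposal. There one only gets $l^{(4)}(0)\le 0$, and your example does extend globally.

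Let $\chi$ be a smooth even cutoff with $0\le\chi\le 1$, $\chi\equiv 1$ on $[-\tfrac12,\tfrac12]$ and $\chi\equiv 0$ outside $[-0.6,0.6]$. Put $\phi(t)=(1-5t^4)\chi(t)$, which satisfies $0\le\phi\le 1$ because $5\cdot 0.6^4<1$, and define
\begin{equation*}
    l(z)=\int_0^z\int_0^u \phi(t)\,dt\,du .
\end{equation*}
Then $l\in\mathcal{C}^\infty$ is even, convex ($l''=\phi\ge 0$) and $1$-Lipschitz ($|l'|\le 0.6$), with $l''(0)=1$. For $0<|z|\le\tfrac12$ we have $l'(z)/z=1-z^4\le 1-\tfrac12|z|^4$. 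For $|z|>\tfrac12$, the bound $\phi\le 1$ gives $l'(z)/z\le 1$. So Assumption~\ref{assumption:subquadratic} holds with $\beta=4$ and $c=\tfrac12$. Yet $l^{(3)}(0)=\phi'(0)=0$ and $l^{(4)}(0)=\phi''(0)=0$, so $\alpha_l(0)=0$. The lemma is therefore false as stated, and no argument can close your $\beta>2$ case.

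Your proposed repair is the right one, with one refinement of wording. The assumption only asserts that some $\beta$ exists. After shrinking $c$ to at most $1$, the bound with exponent $\beta$ implies the bound with any larger exponent. So ``$\beta\le 2$'' should be read as ``the bound holds with exponent $\beta=2$ for some $c>0$.'' With that reading your two extra hypotheses coincide. Under the remaining parts of the assumption, $\alpha_l(0)>0$ if and only if $l^{(4)}(0)<0$, if and only if the bound holds with $\beta=2$, in which case $\alpha_l(0)\ge 6c$. This covers standard examples such as $\sqrt{1+z^2}$ and $\log\cosh z$. It does not cover the full class of \citet{ahn2023learning}, so the paper's claim that product-stability subsumes that assumption needs the same restriction.
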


Thus, the notion of product-stability can be seen as a generalization of subquadratic-ness. While subquadratic-ness requires that the third derivative be exactly $0$---a restrictive condition that is unlikely to be satisfied except in certain specially crafted scenarios---product-stability captures a much broader and more realistic class of functions.

\begin{figure*}[ht]
    \centering
    \begin{subfigure}{0.29\textwidth}
        \centering
        \includegraphics[width=\linewidth]{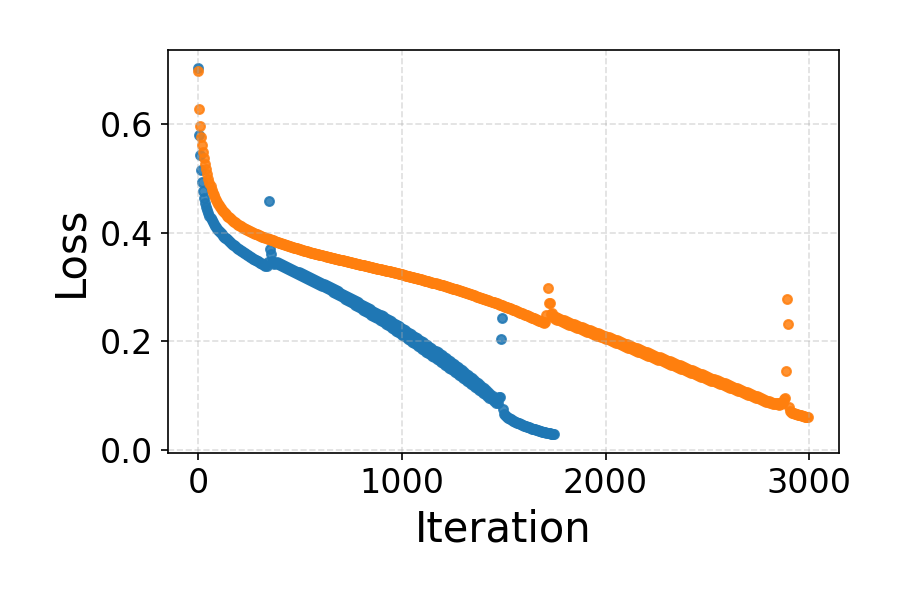}
        \caption{Train Loss}
        \label{fig:a}
    \end{subfigure}\hfill
    \begin{subfigure}{0.29\textwidth}
        \centering
        \includegraphics[width=\linewidth]{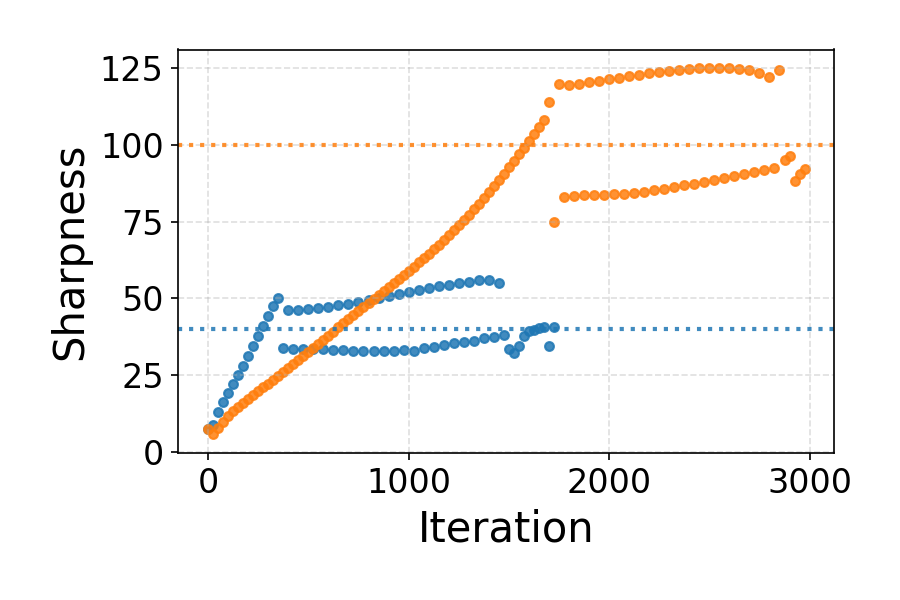}
        \caption{Sharpness}
        \label{fig:b}
    \end{subfigure}\hfill
    \begin{subfigure}{0.4\textwidth}
        \centering
        \includegraphics[width=\linewidth]{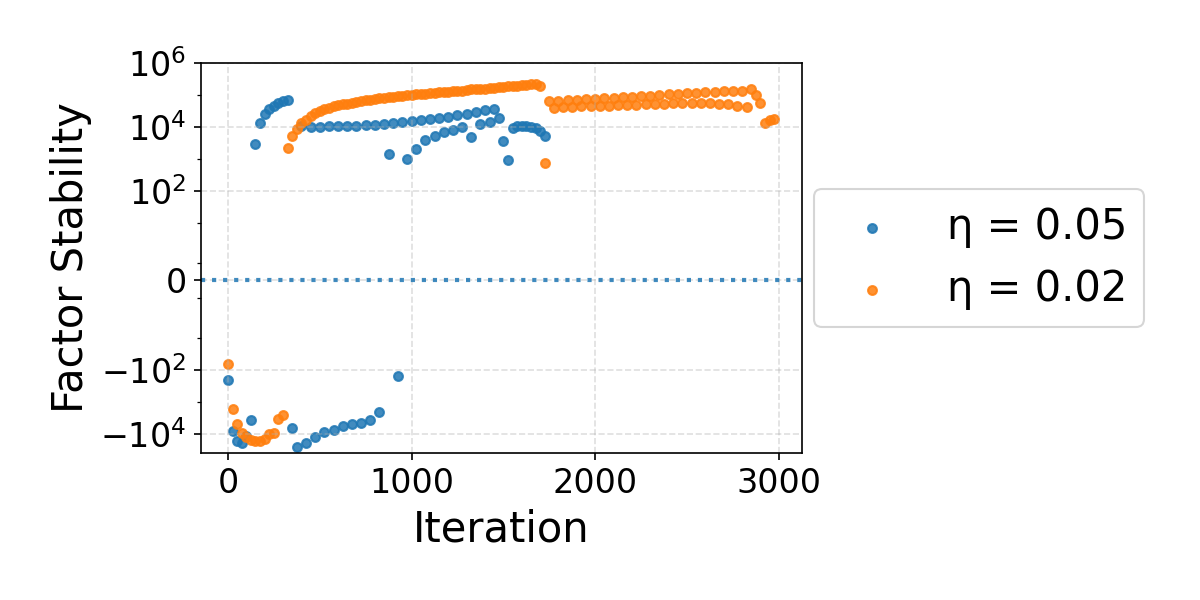}
        \caption{Product Stability}
        \label{fig:c}
    \end{subfigure}

    \caption{Training dynamics of fully-connected tanh network on CIFAR-10. \textbf{a)} shows the training loss, which consistently decreases over long timescales. The loss is also oscillating while in the EoS regime, see \cref{fig:oscillation} for zoomed in view. \textbf{b)} shows the sharpness, with the dotted lines representing the EoS threshold. Training enters the EoS regime where the sharpness oscillates around the EoS threshold. \textbf{c)} shows the product-stability calculated using directional derivatives along the direction of maximal sharpness. The product-stability is positive, indicating stability. }
    \label{fig:fc_stability}
\end{figure*}

\subsection{Degree of Regularity}
\citet{wang2023good} analyze EoS convergence based on a notion which they call degree of regularity. For the following specific class of functions with good degree of regularity, they prove convergence with limiting sharpness near the EoS threshold
\begin{equation} \label{eq:good_regularity_loss}
    l_a(z) = C_a (\log(e^{z - 1} + 1) + \log(e^{1 - z} + 1))^a
\end{equation}
where $0 < a \leq 1$ is the degree of regularity and $C_a$ is a constant. We show that this class of functions has product-stable minima.
\begin{lemma} \label{thm:dor}
Let $0 < a \leq 1$ and $l_a$ as in \cref{eq:good_regularity_loss}. Then $\alpha_{l_a}(1) > 0$.
\end{lemma}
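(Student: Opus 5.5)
The plan is to compute $\alpha_{l_a}(1)$ directly from a Taylor expansion around the minimum. The key observation is that $l_a$ is symmetric about $z=1$, which removes the third-derivative term. Set $w = z-1$ and $g(w) = \log(e^{w}+1) + \log(e^{-w}+1)$, so that $l_a(z) = C_a\, g(w)^a$.

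\textbf{Step 1: simplify $g$ and use symmetry.} Since $(1+e^w)(1+e^{-w}) = 2 + 2\cosh w$, we have $g(w) = \log(2+2\cosh w)$. This is even in $w$, so $l_a$ is symmetric about $z=1$. Two consequences follow:
\begin{itemize}
    \item $z=1$ is a critical point (the minimum);
    \item every odd derivative of $l_a$ vanishes there, in particular $l_a^{(3)}(1)=0$.
\end{itemize}
The definition then reduces to
\begin{equation*}
    \alpha_{l_a}(1) = -\,l_a^{(4)}(1)\, l_a''(1),
\end{equation*}
so it suffices to show $l_a''(1) > 0$ and $l_a^{(4)}(1) < 0$. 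I take the normalizing constant $C_a > 0$, as it must be for $l_a$ to be a loss with a minimum at $1$. Smoothness is not an issue: $g \geq 2\log 2 > 0$, so $g^a$ is real-analytic near $w=0$.

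\textbf{Step 2: expand $g$ to fourth order.} Let $L = g(0) = 2\log 2$. Write
\begin{equation*}
    \log(2+2\cosh w) = \log 4 + \log\bigl(1 + \tfrac{\cosh w - 1}{2}\bigr),
    \qquad \tfrac{\cosh w - 1}{2} = \tfrac{w^2}{4} + \tfrac{w^4}{48} + O(w^6).
\end{equation*}
Applying $\log(1+u) = u - u^2/2 + O(u^3)$ gives
\begin{equation*}
    g(w) = L + \tfrac{w^2}{4} - \tfrac{w^4}{96} + O(w^6).
\end{equation*}

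\textbf{Step 3: expand $g^a$.} Using $(L+u)^a = L^a + aL^{a-1}u + \tfrac{a(a-1)}{2}L^{a-2}u^2 + O(u^3)$ with $u = \tfrac{w^2}{4} - \tfrac{w^4}{96} + O(w^6)$, we get
\begin{equation*}
    g^a = L^a + \tfrac{a}{4}L^{a-1}w^2 + \Bigl(-\tfrac{a}{96}L^{a-1} + \tfrac{a(a-1)}{32}L^{a-2}\Bigr)w^4 + O(w^6).
\end{equation*}
Reading off derivatives:
\begin{itemize}
    \item $l_a''(1) = \tfrac{a}{2}C_a L^{a-1}$, which is positive because $a>0$ and $L>0$;
    \item $l_a^{(4)}(1) = 24\, C_a \Bigl(-\tfrac{a}{96}L^{a-1} + \tfrac{a(a-1)}{32}L^{a-2}\Bigr)$.
\end{itemize}

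\textbf{Step 4: sign of the fourth derivative.} For $0 < a \leq 1$, the first term in the bracket is strictly negative. The second term is nonpositive because $a - 1 \leq 0$. Hence $l_a^{(4)}(1) < 0$, and therefore
\begin{equation*}
    \alpha_{l_a}(1) = -\,l_a''(1)\, l_a^{(4)}(1) > 0.
\end{equation*}
As a consistency check, this also re-derives the subquadratic condition from the previous lemma in this special case.

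The only delicate point is the fourth-order coefficient of $g$, namely $\tfrac{1}{4}\cdot\tfrac{1}{12} - \tfrac{1}{2}\cdot\tfrac{1}{16} = -\tfrac{1}{96}$. This is where the hypothesis $a \leq 1$ is used: for $a>1$ the second term in the bracket of Step 3 would be positive and could compete with the first.
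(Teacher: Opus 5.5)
Your proof is correct and matches the paper's argument: both establish $l_a^{(3)}(1)=0$, $l_a''(1)=\tfrac{a}{2}C_aL^{a-1}$ and $l_a^{(4)}(1)=\tfrac{a}{4}C_a\bigl(-L^{a-1}+3(a-1)L^{a-2}\bigr)$ with $L=2\log 2$, giving $\alpha_{l_a}(1)=\tfrac{a^2}{8}C_a^2L^{2a-3}\bigl(L-3(a-1)\bigr)>0$. The only difference is bookkeeping: the paper differentiates $(g(z-1))^a$ by the chain rule using $g''(0)=\tfrac12$ and $g^{(4)}(0)=-\tfrac14$, whereas your rewriting $g(w)=\log(2+2\cosh w)$, together with the series expansion and the evenness argument, reaches the same values more cleanly.
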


While \citet{wang2023good} argue that degree of regularity is a \emph{qualitative} condition for EoS convergence (i.e. good regularity does not guarantee convergence), product stability defines a precise sufficient condition for convergence.
\section{Real World Models} \label{sec:real_world}

In this section, we explore how the notion of product-stability can be applied to overparameterized deep learning models on real world datasets. We train a fully connected tanh network with 2 hidden layers of width 200 on a two class, 5,000 example subset of CIFAR-10 \citep{krizhevsky2009learning}. We use learning rates $\eta = 0.05, 0.02$, leading to EoS thresholds at $\frac{2}{\eta} = 40, 100$, respectively. In both scenarios, training enters the EoS regime with sharpness greater than the EoS threshold.

Just as \citet{chen2023edge} used the product stability to prove the existence of two step fixed points for scalar functions, \citet{mulayoff2026stability} derive a generalized notion to prove the same in higher dimensions. We adopt their notion to define the \emph{multivariate product stability}. 

\begin{definition}
Let $f: \R^d \to \R$ have continuous fifth derivatives and $\vct{z} \in \R^d$. Suppose that $\nabla^2 f (\vct{z})$ is positive definite and has a unique maximal eigenvalue corresponding unit eigenvector $\vct{v}_{\max}$. The \emph{multivariate product-stability} of $f$ at $\vct{z}$ is defined as 
\begin{equation*}
    \alpha_f(\vct{z}) = \nabla^3f(\vct{z})[\vct{v_{\max}}]
^2[\vct{q}] - \nabla^4f(\vct{z}) [\vct{v_{\max}}]^4
\end{equation*}
where $\vct{q} = 3 \left[\nabla^2 f (\vct{z})\right]^{-1} \nabla^3 f(\vct{z})[\vct{v}_{\max}]^2$.
\end{definition}

In our experiments, we use the pseudoinverse to handle the scenario where the Hessian $\nabla^2 f(\vct{z})$ may be singular. Details are presented in \cref{apdx:experimental_setup}. 

In \cref{fig:fc_stability}, we observe that when the sharpness oscillates around the EoS threshold, the multivariate product-stability has a large positive value. This can help indicate why training remains stable in the EoS regime.
\section{Conclusion}

In this work, we rediscover the notion of product-stability and highlight its role in the dynamics of EoS training. We prove that a loss $l$ with product-stable minima, combined with the parameterization $l(xy)$, is sufficient to prove convergence when training with gradient descent on the edge of stability. Furthermore, we show a predictable pattern in the training dynamics of such functions which allows us to pinpoint the sharpness at convergence. The notion of product-stability generalizes previous assumptions in the literature, and paves the way for future analysis in less restrictive and more complex settings.

There are still many open questions. Since our notion of product-stability depends only on the derivatives at the minimum, our analysis is inherently local. Under what conditions a global analysis of GD dynamics can be performed remains unresolved. A consequence of this fact is that this work does not address the progressive sharpening phenomenon by which deep learning models enter the EoS regime. Fully understanding if and how complex neural network loss landscapes satisfy product-stability type conditions also remains a question.

\section*{Impact Statement}

This paper presents work whose goal is to advance the field of Machine
Learning. There are many potential societal consequences of our work, none
which we feel must be specifically highlighted here.

\bibliography{reference}
\bibliographystyle{icml2026}


\newpage
\appendix
\onecolumn

\section{Proofs} \label{apdx:proofs}

\subsection{Preliminaries}

Here we provide a glossary of shorthand notation used throughout the proofs. All of these can have subscript $t$ to denote the value at time $t$.
\begin{align*}
    z &= xy \\
    s &= x^2 + y^2 \\
    \gamma &= \eta s \\
    g &= l'(z) \\
    h_t &= g_{t+1}^2 + 4 g_{t+1} g_t + g_t
\end{align*}

We use Big-O notation to capture dependencies on $\eta$ and $|z - z_*|$ (equivalently $g$) as both quantities go to zero. Exact constants will depend on the loss function $l$ and choices of $\sigma, \tau, \delta$ from \cref{thm:main_convergence} and \cref{thm:final_sharpness} when relevant.

Now we present basic facts about the two step updates
\begin{proposition}
We have the following two step update formulas
\begin{align}
    \label{eq:two_step_z}
    z_{t+2} - z_t &= \eta^2 h_t z_t - \eta(g_{t+1} + g_t)(1 + \eta^2g_{t+1}g_t) s_t + \eta^4 g_{t+1}^2 g_t^2 z_t \\
    \label{eq:two_step_s}
    s_{t+2} - s_t &= \eta^2 h_t s_t - 4\eta(g_{t+1} + g_t)(1 + \eta^2g_{t+1}g_t) z_t + \eta^4 g_{t+1}^2 g_t^2 s_t
\end{align}
\end{proposition}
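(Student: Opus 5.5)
The plan is a direct computation: write the single-step update as a linear map on the pair $(z,s)$, whose coefficients depend only on $g_t = l'(z_t)$, and then compose two such maps.

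\textbf{One-step formulas.} From \cref{eq:gd_update_x,eq:gd_update_y}, with $g = g_t$, we have $x_{t+1} = x_t - \eta g y_t$ and $y_{t+1} = y_t - \eta g x_t$. Multiplying these gives
\begin{equation*}
z_{t+1} = x_ty_t - \eta g (x_t^2+y_t^2) + \eta^2 g^2 x_ty_t = (1+\eta^2 g_t^2) z_t - \eta g_t s_t .
\end{equation*}
Squaring and adding gives
\begin{equation*}
s_{t+1} = x_t^2+y_t^2 - 4\eta g\, x_ty_t + \eta^2 g^2 (x_t^2+y_t^2) = (1+\eta^2 g_t^2) s_t - 4\eta g_t z_t .
\end{equation*}
Hence $(z_{t+1}, s_{t+1})^\top = M(g_t)\,(z_t,s_t)^\top$, where
\begin{equation*}
M(g) = \begin{pmatrix} 1+\eta^2 g^2 & -\eta g \\ -4\eta g & 1+\eta^2 g^2 \end{pmatrix}.
\end{equation*}
The key structural point is that $M$ depends on the state only through the scalar $g$.

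\textbf{Composing two steps.} Next apply the same identity at time $t+1$ to get $(z_{t+2}, s_{t+2})^\top = M(g_{t+1})M(g_t)(z_t,s_t)^\top$. Write $a = g_{t+1}$ and $b = g_t$.
\begin{itemize}
\item The diagonal entries of the product are both $(1+\eta^2a^2)(1+\eta^2b^2) + 4\eta^2 ab = 1 + \eta^2(a^2 + 4ab + b^2) + \eta^4 a^2b^2$.
\item The $(1,2)$ entry is $-\eta b(1+\eta^2 a^2) - \eta a(1+\eta^2 b^2) = -\eta(a+b)(1+\eta^2 ab)$.
\item The $(2,1)$ entry is, by the same computation, $-4\eta(a+b)(1+\eta^2 ab)$.
\end{itemize}

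\textbf{Reading off the result.} Subtracting $z_t$, respectively $s_t$, yields exactly \cref{eq:two_step_z,eq:two_step_s}. This uses $h_t = g_{t+1}^2 + 4 g_{t+1} g_t + g_t^2$; the last term in the glossary definition of $h_t$ should read $g_t^2$, as this computation shows.

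There is no real obstacle here. The only care needed is the bookkeeping in the cross terms of the $(1,2)$ and $(2,1)$ entries, which factor as $(a+b)(1+\eta^2 ab)$. Noting that the two off-diagonal entries of $M(g)$ differ only by the factor $4$ means both formulas follow from a single computation.
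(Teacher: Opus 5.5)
Your computation is correct and is essentially the intended argument: the paper states this proposition without proof, but it rests on the same one-step identity $z_{t+1} = (1+\eta^2 g_t^2) z_t - \eta g_t s_t$ (which appears in the convergence proof outline) together with its companion $s_{t+1} = (1+\eta^2 g_t^2)s_t - 4\eta g_t z_t$, composed over two steps, and your matrix bookkeeping checks every entry. You are also right that the glossary's $h_t$ should end in $g_t^2$; this is how the paper itself uses $h_t$ later, e.g.\ in the proof of the main convergence theorem.
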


\subsection{Proof of \cref{thm:two_step}}
We start off with a few lemmas.

\begin{lemma} \label{thm:c3_lower_bound}
Let
\begin{equation*}
    c_3(\eta) = \frac{l^{(4)}(z_*)}{6}(1 - \eta  l''(z_*))(1 + (1 - \eta  l''(z_*))^2) - \frac{\eta ( l^{(3)}(z_*))^2(1 - \eta  l''(z_*))}{2}
\end{equation*}
Then there exists $\tau_0 > 0$ such that $c_3 = \min_{\frac{2}{l''(z_*)} \leq \eta \leq \frac{2+\tau_0}{l''(z_*)}}\{c_{3,\eta}\} > 0$.
\end{lemma}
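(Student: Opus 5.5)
The plan is to evaluate $c_3(\eta)$ at the left endpoint $\eta = \frac{2}{l''(z_*)}$, show that the value there is a positive multiple of $\alpha_l(z_*)$, and then extend positivity to a small interval by continuity and compactness. Write $L_2 = l''(z_*) > 0$, $L_3 = l^{(3)}(z_*)$, $L_4 = l^{(4)}(z_*)$, and set $u = 1 - \eta L_2$. Then
\begin{equation*}
c_3(\eta) = \frac{L_4}{6} u (1 + u^2) - \frac{\eta L_3^2}{2} u .
\end{equation*}
This is a polynomial in $\eta$ of degree at most three, so it is continuous on all of $\R$.

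At $\eta = \frac{2}{L_2}$ we have $u = -1$, so $u(1+u^2) = -2$. Substituting gives
\begin{equation*}
c_3\!\left(\tfrac{2}{L_2}\right) = -\frac{L_4}{3} + \frac{L_3^2}{L_2} = \frac{3L_3^2 - L_4 L_2}{3L_2} = \frac{\alpha_l(z_*)}{3 l''(z_*)} .
\end{equation*}
This is strictly positive, because $\alpha_l(z_*) > 0$ and $l''(z_*) > 0$ by hypothesis.

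Continuity then gives a $\tau_0 > 0$ such that
\begin{equation*}
c_3(\eta) \ge \frac{\alpha_l(z_*)}{6 l''(z_*)} \quad \text{for all } \eta \in \left[\tfrac{2}{L_2}, \tfrac{2+\tau_0}{L_2}\right].
\end{equation*}
An explicit $\tau_0$ can be read off from a Lipschitz bound on the cubic over a bounded interval, but it is not needed. The interval is compact and $c_3$ is continuous, so the minimum over it is attained, and it is at least $\frac{\alpha_l(z_*)}{6L_2} > 0$.

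There is no real obstacle here. The only point needing care is the sign bookkeeping at $u = -1$: the $L_4$ term contributes $-L_4/3$ and the $L_3^2$ term contributes $+L_3^2/L_2$. These two pieces combine exactly into $\alpha_l/(3L_2)$, which is the reason for the definition of product-stability.
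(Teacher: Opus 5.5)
Your proof is correct, but it follows a different route from the paper's proof of this lemma. The paper never evaluates $c_3$ at the threshold; instead it splits on the sign of $l^{(4)}(z_*)$. For $\eta \ge \frac{2}{l''(z_*)}$ one has $u = 1-\eta l''(z_*) \le -1$. So if $l^{(4)}(z_*) \le 0$, both terms of $c_3(\eta)$ are nonnegative, and $l^{(3)}(z_*) \ne 0$ when $l^{(4)}(z_*) = 0$ because $\alpha_l(z_*)>0$; hence any $\tau_0$ works. If $l^{(4)}(z_*) > 0$, the paper sets $\beta = \frac{3(l^{(3)}(z_*))^2}{l''(z_*)l^{(4)}(z_*)} > 1$ and fixes $\tau_0$ explicitly by $(1+\tau_0)(1+(1+\tau_0)^2) = 1+\beta$. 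It then bounds the two terms separately via $|u| \le 1+\tau_0$ and $\eta|u| \ge \frac{2}{l''(z_*)}$, getting $c_3(\eta) \ge \frac{l^{(4)}(z_*)}{6}(\beta-1) = \frac{\alpha_l(z_*)}{6l''(z_*)}$. This is the same bound as yours; the paper's last line writes $\beta - 1$, dropping the prefactor.

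The paper's version yields an explicit $\tau_0$ and shows that no restriction on $\eta$ is needed when $l^{(4)}(z_*) \le 0$. Yours avoids the case analysis and exhibits the identity $c_3\bigl(\frac{2}{l''(z_*)}\bigr) = \frac{\alpha_l(z_*)}{3l''(z_*)}$, which explains why product-stability is the right quantity. Because continuity is two-sided, your argument also gives positivity slightly below the threshold. The paper in fact switches to exactly your argument in Lemma~\ref{thm:d_derivative_bound_below_eos} for that purpose, where it misstates the endpoint value as $\alpha_l(z_*)/l''(z_*)$. Your factor $\frac{1}{3}$ is the correct one and matches what is used in Lemma~\ref{thm:z_hat_definition}.
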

\begin{proof}
If $l^{(4)}(z_*) < 0$ then both terms of $c_{3}(\eta)$ are positive and bounded away from 0. If $l^{(4)}(z_*) = 0$, then \cref{eq:def_product_stability} implies $l^{(3)}(z_*) \neq 0$, hence the second term in $c_3(\eta)$ is positive and bounded away from 0.

If $l^{(4)}(z_*) > 0$, observe that $\beta := \frac{3(l^{(3)}(z_*))^2}{l''(z_*) l^{(4)}(z_*)} > 1$. Then there exists $\tau_0 > 0$ such that $(1 +\tau_0)(1 + (1 +\tau_0)^2) = 1 + \beta$. Now
\begin{align*}
    c_{3}(\eta) &\geq  -\frac{l^{(4)}(z_*)}{6}(1 +\tau_0)(1 + (1 +\tau_0)^2) + \frac{\eta ( l^{(3)}(z_*))^2(1)}{2}  \\
    &\geq  -\frac{l^{(4)}(z_*)}{6}(1 +\tau_0)(1 + (1 +\tau_0)^2) + \frac{( l^{(3)}(z_*))^2}{l''(z_*)} \\
    &\geq -\frac{l^{(4)}(z_*)}{6}(1 + \beta) + \frac{( l^{(3)}(z_*))^2}{l''(z_*)} \\
    &= \frac{l^{(4)}(z_*)}{6} (-(2 + \beta - 1) + 2\beta) \\
    &= \beta - 1 \\
\end{align*}
This proves the claim.
\end{proof}

We now prove \cref{thm:two_step}.
\begin{proof}
We present the proof for $z_\eta^+$. A symmetric argument works for $z_\eta^-$.

Since $l''(z_*) > 0$, we can choose a bounded interval $\mathcal{I} = [z_1, z_2]$ containing $z_*$ such that $l''(z_t), l''(z_{t+1}) > 0$ for  $z_t \in \mathcal{I}$.

Let $K_1 = \frac{1}{24} \max_{z \in \mathcal{I}}{\{ | l^{(5)}(z)| \}}$, such that
\begin{align*}
     l'(z) =  l''(z_*) (z - z_*) + \frac{ l^{(3)}(z_*)}{2} (z - z_*)^2 + \frac{l^{(4)}(z_*)}{6} (z - z_*)^3 + M(z) (z - z_*)^4
\end{align*}
where $|M(z)| \leq K_1$. Then with $D_\eta(z)$ as defined in \cref{eq:two_step_taylor}, we have
\begin{align*}
    D_\eta(z) &= l'(z) + l'(z - \eta l'(z)) \\
    &= (2 - \eta l''(z_*)) l''(z_*)(z - z_*) \\
    &+ \frac{l^{(3)}(z_*)}{2}(1 - \eta l''(z_*))(2 - \eta  l''(z_*))(z - z_*)^2 \\
    &+  \left(\frac{l^{(4)}(z_*)}{6}(1 - \eta  l''(z_*))(1 + (1 - \eta  l''(z_*))^2) - \frac{\eta ( l^{(3)}(z_*))^2(1 - \eta  l''(z_*))}{2} \right)(z - z_*)^3 \\
    &+ M_2(\eta, z) (z - z_*)^4
\end{align*}
where $|M_2(\eta, z)| \leq K_2$ for some constant $K_2$. For shorthand let $c_1(\eta), c_2(\eta), c_3(\eta)$ denote the first three coefficients, respectively. We can also write 
\begin{align*}
    D_\eta'(z) &= c_1(\eta) + 2 c_2(\eta)(z - z_*) + 3c_3(\eta)(z - z_*)^2 + M_3(\eta, z) (z - z_*)^3
\end{align*}
where $|M_3(\eta, z)| \leq K_3$ for some constant $K_3$.

We claim that we can choose $\tau_0 > 0$ such that $c_3 = \min_{\frac{2}{l''(z_*)} \leq \eta \leq \frac{2+\tau_0}{l''(z_*)}}\{c_{3,\eta}\} > 0$. Indeed, if $l^{(4)}(z_*) < 0$ then both terms of $c_{3}(\eta)$ are positive. If $l^{(4)}(z_*) = 0$, then \cref{eq:def_product_stability} implies $l^{(3)}(z_*) \neq 0$, hence the second term in $c_3(\eta)$ is positive.

If $l^{(4)}(z_*) > 0$, observe that $\beta := \frac{6(l^{(3)}(z_*))^2}{l''(z_*) l^{(4)}(z_*)} > 2$. Then there exists $\tau_0 > 0$ such that $(1 +\tau_0)(1 + (1 +\tau_0)^2) = 2 + \frac{\beta - 2}{2}$. Now
\begin{align*}
    c_{3}(\eta) &\geq  -\frac{l^{(4)}(z_*)}{6}(1 +\tau_0)(1 + (1 +\tau_0)^2) + \frac{\eta ( l^{(3)}(z_*))^2(1)}{2}  \\
    &\geq  -\frac{l^{(4)}(z_*)}{6}(1 +\tau_0)(1 + (1 +\tau_0)^2) + \frac{( l^{(3)}(z_*))^2}{l''(z_*)} \\
    &\geq -\frac{l^{(4)}(z_*)}{6}(2 + \frac{\beta - 2}{2}) + \frac{( l^{(3)}(z_*))^2}{l''(z_*)} \\
    &= \frac{l^{(4)}(z_*)}{6} (-(2 + \frac{\beta - 2}{2}) + \beta) \\
    &= \frac{1}{2}(\beta - 2) \\
\end{align*}
This proves the claim.

Set $u_{\eta} = z_* + \min\left\{\left|\frac{c_{1, \eta}}{4c_{2, \eta}}\right|, \sqrt{\left|\frac{c_{1, \eta}}{4c_{3, \eta}}\right|}, \sqrt[3]{\left|\frac{c_{1, \eta}}{4K_2}\right|}\right\}$. Then $D_\eta(z) < 0$ for $z_* < z \leq u_\eta$.

Define $C_3 = \max_{\frac{2}{l''(z_*)} \leq \eta \leq \frac{2+\alpha_0}{l''(z_*)}}\{c_3(\eta)\}$.
Choose 
\begin{align*}
   z_4 &= z_* + \min\left\{\frac{c_3}{4 K_2}, \frac{c_3}{4 K_3},  \sqrt{\frac{l''(z_*)}{12 C_3}}, z_2 - z_* \right\}, \\
   \tau &= \min\left\{1, \tau_0, \frac{c_3(z_4 - z_*)^2)}{4 l''(z_*)}, \frac{c_3(z_4 - z_*))}{4 |l^{(3)}(z_*)|}\right\} 
\end{align*}

We show $D_\eta(z_4) > 0$. Indeed, for $\eta$ sufficiently close to $\frac{2}{l''(z_*)}$, we have that $c_1(\eta), c_2(\eta)$ are arbitrarily small so
\begin{align*}
    D_\eta(z^+) &= c_1(\eta) (z^+ - z_*) + c_2(\eta)(z^+ - z_*)^2 + c_3(\eta)(z^+ - z_*)^3 + M_2(\eta, z^+) (z^+ - z_*)^4 \\
    &> c_3 (z^+ - z_*)^3 - |c_1(\eta)(z^+ - z_*)| - |c_2(\eta)(z^+ - z_*)^2| - K_2 (z^+ - z_*)^4 \\
    &> 0
\end{align*}

Thus by IVT, there exists $u_\eta < z_\eta^+ < z_4$ with $D_\eta(x_\eta^+) = 0$. This proves the existence of the two-step fixed points.

Additionally, for any $z \in [z_*, z^+]$ with $D_\eta(z) = 0$, we have
\begin{align*}
    D_\eta'(z) &= c_1(\eta) + 2 c_2(\eta)(z - z_*) + 3c_3(\eta)(z - z_*)^2 + M_3(\eta, z) (z - z_*)^3 \\
    &= -c_1(\eta) + c_3(\eta)(z - z_*)^2 + M_3(\eta, z) (z - z_*)^3 - 2M_2(\eta,z)(z - z_*)^3 \\
    &> -c_1(\eta) + c_3(\eta)(z - z_*)^2 - K_3 (z - z_*)^3 - 2K_2 (z - z_*)^3 \\
    &> 0
\end{align*}
This implies two things
\begin{itemize}
    \item By the implicit function theorem, $z_\eta^+$ is uniquely defined.
    \item $z_\eta^+$ is a local minimum of the two step loss landscape.
\end{itemize}
Finally for $z_* < z \leq z^+$
\begin{align} 
    \label{eq:d_derivative_bound}
    |D_\eta'(z)| &= |c_1(\eta) + 2 c_2(\eta)(z^+ - z_*) + 3c_3(\eta)(z^+ - z_*)^2 + M_3(\eta, z) (z - z_*)^3| \\
    \nonumber
    &\leq 3c_3 (z^+ - z_*)^2 + |c_1(\eta)| + |2c_2(\eta)(z^+ - z_*)| + K_3 (z^+ - z_*)^3 \\
    \nonumber
    &\leq 4c_3 (z^+ - z_*)^2 \\
    \nonumber
    &\leq \frac{l''(z_*)}{3} \\
    \nonumber
    &\leq \frac{l''(z_*)}{2+\tau} \\
    \nonumber
    &\leq \frac{1}{\eta}
\end{align}
Thus the Descent Lemma (Lemma \ref{thm:descent_lemma}) implies convergence of the two step updates to $z_\eta^+$.

\end{proof}

\subsection{Proof of Lemma \ref{thm:bifurcation_properties}} \label{apdx:bifurcation_properties_proof}

We can prove Lemma \ref{thm:bifurcation_properties} as a corollary of the previous theorem.

\begin{proof}
Since we showed that $D_\eta'(z_\eta^+) > 0$ and $D_\eta(z)$ has continuous fifth derivative, the Implicit Function Theorem implies that $z_\eta$ has continuous fifth derivative. 

Now note that
\begin{equation*}
    \partial_\eta D_\eta(z) = -l'(z) l''(z - \eta l'(z))
\end{equation*}
Now $l''$ is always positive by assumption, and $l'(z_\eta^+) > 0$, so the implicit function theorem implies that $(z^+)'$ never changes sign, i.e. $z^+(\eta)$ is monotonic.

Finally, for $\eta$ sufficiently close to $\frac{2}{l''(z_0)}$ choose
\begin{equation*}
    z_{4,\eta} = z_* + \max\left\{2\sqrt{\left|\frac{c_1(\eta)}{c_3(\eta)}\right|}, 4 \left|\frac{c_2(\eta)}{c_3(\eta)}\right|  \right\} \leq z_4
\end{equation*}
Then $D_\eta(z_{4, \eta}) > 0$, which gives $z_* < z_\eta^+ < z_{4,\eta}$. Since $z_{4,\eta} \to z_*$ as $\eta \to \frac{2}{l''(z_0)}$,  we have $z_\eta^+ \to z_*$ by the Squeeze Theorem.
\end{proof}

\subsection{Proof of \cref{thm:main_convergence}}
We first start off with some lemmas:
\begin{lemma} \label{thm:xy_balance}
The following equation holds for all $t \geq 0$:
\begin{equation*}
    x_{t+1}^2 - y_{t+1}^2 = (x_{t}^2 - y_{t}^2)(1 - \eta^2 (l'(z_t))^2)
\end{equation*}
\end{lemma}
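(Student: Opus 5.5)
This identity follows from a direct computation with the GD update rules \cref{eq:gd_update_x,eq:gd_update_y}. Write $g_t = l'(z_t)$, so that $x_{t+1} = x_t - \eta g_t y_t$ and $y_{t+1} = y_t - \eta g_t x_t$. The plan is to square both updates and subtract, or equivalently to factor the difference of squares as $(x_{t+1}-y_{t+1})(x_{t+1}+y_{t+1})$.

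I would use the factored form, since it makes the structure transparent. First,
\begin{equation*}
    x_{t+1} + y_{t+1} = (x_t + y_t) - \eta g_t (x_t + y_t) = (x_t + y_t)(1 - \eta g_t).
\end{equation*}
Second,
\begin{equation*}
    x_{t+1} - y_{t+1} = (x_t - y_t) + \eta g_t (x_t - y_t) = (x_t - y_t)(1 + \eta g_t).
\end{equation*}
Multiplying these two expressions gives
\begin{equation*}
    x_{t+1}^2 - y_{t+1}^2 = (x_t^2 - y_t^2)(1 - \eta g_t)(1 + \eta g_t) = (x_t^2 - y_t^2)(1 - \eta^2 g_t^2).
\end{equation*}
This is exactly the claimed identity.

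The argument needs no assumptions on $l$ beyond differentiability, so that $g_t$ is defined, and it holds for every $t \geq 0$ and every learning rate $\eta$. There is no real obstacle. The only point to check is the sign bookkeeping: the $x$-update uses $y_t$ and vice versa, so the sum picks up the factor $(1-\eta g_t)$ and the difference picks up $(1+\eta g_t)$.

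As a sanity check, one can expand directly. We have $x_{t+1}^2 = x_t^2 - 2\eta g_t x_t y_t + \eta^2 g_t^2 y_t^2$, and symmetrically for $y_{t+1}^2$. The cross terms $-2\eta g_t z_t$ cancel on subtraction, leaving $(x_t^2 - y_t^2) - \eta^2 g_t^2 (x_t^2 - y_t^2)$, which agrees with the factored computation.
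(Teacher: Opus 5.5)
Your proof is correct and is essentially the paper's argument. The paper squares both updates and combines them so the cross terms $-2\eta z_t l'(z_t)$ cancel (it writes ``adding,'' but subtracting is what is meant), which is exactly your sanity-check expansion; your factored form $(x_t+y_t)(1-\eta g_t)\cdot(x_t-y_t)(1+\eta g_t)$ is an equivalent repackaging that also shows the sum and difference each evolve multiplicatively on their own.
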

\begin{proof}
From \cref{eq:gd_update_x} and \cref{eq:gd_update_y}, we have
\begin{align*}
    x_{t+1}^2 &= x_t^2 - 2 \eta z_t l'(z_t) + \eta^2 (l'(z_t))^2 y_t^2 \\
    y_{t+1}^2 &= y_t^2 - 2 \eta z_t l'(z_t) + \eta^2 (l'(z_t))^2 x_t^2
\end{align*}
Adding the two equations gives the result.
\end{proof}

\begin{lemma} \label{thm:sharpness_bound}
The following equation holds for all $t \geq 0$:
\begin{equation*}
    s_{t+1}^2 - 4 z_{t+1}^2 = (s_{t}^2 - 4 z_{t}^2)(1 - \eta^2 (l'(z_t))^2)^2
\end{equation*}
\end{lemma}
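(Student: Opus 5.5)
Since $s^2 - 4z^2 = (x^2+y^2)^2 - 4x^2y^2 = (x^2 - y^2)^2$, the identity follows by squaring Lemma~\ref{thm:xy_balance}. Concretely, the first step is to record the algebraic identity
\begin{equation*}
    s_t^2 - 4 z_t^2 = (x_t^2 + y_t^2)^2 - 4 x_t^2 y_t^2 = (x_t^2 - y_t^2)^2,
\end{equation*}
valid at every time $t$, including $t+1$, since it only uses the definitions $z = xy$ and $s = x^2 + y^2$.

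The second step is to invoke Lemma~\ref{thm:xy_balance}, which gives $x_{t+1}^2 - y_{t+1}^2 = (x_t^2 - y_t^2)(1 - \eta^2 (l'(z_t))^2)$. Squaring both sides and applying the identity above at times $t+1$ and $t$ yields
\begin{equation*}
    s_{t+1}^2 - 4 z_{t+1}^2 = (x_{t+1}^2 - y_{t+1}^2)^2 = (x_t^2 - y_t^2)^2 (1 - \eta^2 (l'(z_t))^2)^2 = (s_t^2 - 4z_t^2)(1 - \eta^2 (l'(z_t))^2)^2,
\end{equation*}
which is the claim.

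There is essentially no obstacle; the only point needing care is confirming Lemma~\ref{thm:xy_balance} itself. Its proof as written says ``adding'' the two squared updates, but subtracting is what is needed. The subtraction gives $x_{t+1}^2 - y_{t+1}^2 = (x_t^2 - y_t^2) - \eta^2 (l'(z_t))^2 (x_t^2 - y_t^2)$, which is the stated form. I would double-check this sign once, and then the present lemma follows immediately. As a remark for later use, the lemma shows $s_t^2 - 4z_t^2$ is nonincreasing whenever $\eta |l'(z_t)| \le 1$. This is what will let us lower-bound $s_t$ in terms of $s_0$ and $z_0$, i.e.\ show GD cannot close the gap between $|x|$ and $|y|$.
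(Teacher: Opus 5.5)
Your proof is correct and matches the paper's: rewrite $s^2-4z^2=(x^2-y^2)^2$ and square Lemma~\ref{thm:xy_balance}. You are also right that the two squared updates must be subtracted, not added, and your $4x_t^2y_t^2$ corrects a typo in the paper's display. One caution on your closing remark: the fact that $s_t^2-4z_t^2$ is nonincreasing gives an \emph{upper} bound on $s_t$, which is how the paper uses it, whereas a lower bound would also require $\prod_t\bigl(1-\eta^2 (l'(z_t))^2\bigr)$ to stay bounded away from zero.
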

\begin{proof}
Observe that
\begin{align*}
    s_t^2 - 4 z_t^2 &= (x_t^2 + y_t^2)^2 - 4 x_t y_t \\
    &= (x_t^2 - y_t^2)^2 
\end{align*}
The result follows from the previous lemma.
\end{proof}

We also need a few lemmas to extend our analysis to the regime where the sharpness is below the EoS threshold.

\begin{lemma} \label{thm:d_derivative_bound_below_eos}
Suppose that $\alpha_l(z_*) > 0$. Then there exists $\rho > 0$ such that if $\eta \in [\frac{2 - \rho}{l''(z_*)}, \frac{2}{l''(z_*)}]$, then $|D_\eta'(z)| \leq \frac{1}{\eta}$.
\end{lemma}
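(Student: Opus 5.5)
The plan is a direct continuity estimate on $D_\eta'$ near $(\eta,z)=(\tfrac{2}{l''(z_*)},z_*)$, mirroring the bound \cref{eq:d_derivative_bound} from the proof of \cref{thm:two_step}. The statement leaves the range of $z$ implicit. I read it as holding for all $z$ in a fixed neighborhood $[z_*-r,z_*+r]$ of $z_*$, with $r$ independent of $\eta$. I will choose $r\le z_4-z_*$, with $z_4$ as in the proof of \cref{thm:two_step}, so that this lemma and that proof apply on the same interval.

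First, I compute exactly
\begin{equation*}
D_\eta'(z) = l''(z) + l''\bigl(z-\eta l'(z)\bigr)\bigl(1-\eta l''(z)\bigr).
\end{equation*}
At $z=z_*$, using $l'(z_*)=0$, this gives $D_\eta'(z_*) = l''(z_*)\bigl(2-\eta l''(z_*)\bigr) = c_1(\eta)$. For $\eta\in[\tfrac{2-\rho}{l''(z_*)},\tfrac{2}{l''(z_*)}]$ this lies in $[0,\rho\, l''(z_*)]$, so it is small. On the other side, $\tfrac1\eta \ge \tfrac{l''(z_*)}{2}$ throughout this range. The whole task is therefore to show that $|D_\eta'(z)|$ stays below, say, $\tfrac{l''(z_*)}{3}$ on the neighborhood.

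To do this, I reuse the Taylor expansion from the proof of \cref{thm:two_step}:
\begin{equation*}
D_\eta'(z) = c_1(\eta) + 2c_2(\eta)(z-z_*) + 3c_3(\eta)(z-z_*)^2 + M_3(\eta,z)(z-z_*)^3 .
\end{equation*}
The coefficient formulas and the remainder bound $|M_3|\le K_3$ are valid for any $\eta$ in a compact range, so I extend them to $\eta\in[\tfrac{2-\rho}{l''(z_*)},\tfrac{2+\tau}{l''(z_*)}]$, taking $\rho\le 1$. On this range I bound each term:
\begin{itemize}
\item $|c_1(\eta)|\le \rho\, l''(z_*)$;
\item $|c_2(\eta)| = \tfrac{|l^{(3)}(z_*)|}{2}\,|1-\eta l''(z_*)|\,|2-\eta l''(z_*)| \le \tfrac{|l^{(3)}(z_*)|}{2}\rho$, which is small;
\item $|c_3(\eta)|\le C_3'$ for some constant $C_3'$;
\item $|M_3|\le K_3$.
\end{itemize}
It follows that
\begin{equation*}
|D_\eta'(z)| \le \rho\, l''(z_*) + \rho\,|l^{(3)}(z_*)|\, r + 3C_3' r^2 + K_3 r^3 .
\end{equation*}

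I then fix $r$ small enough that $3C_3' r^2 + K_3 r^3 \le \tfrac{l''(z_*)}{6}$, in the same way $z_4$ was chosen via $\sqrt{l''(z_*)/(12C_3)}$. Next I choose $\rho$ small enough that $\rho\bigl(l''(z_*) + |l^{(3)}(z_*)|\, r\bigr)\le \tfrac{l''(z_*)}{6}$. Then $|D_\eta'(z)|\le \tfrac{l''(z_*)}{3} < \tfrac{l''(z_*)}{2}\le \tfrac1\eta$, which is the claim.

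The bound itself does not use $\alpha_l(z_*)>0$; positivity of the product-stability enters only through the interval $[z_*,z_4]$ that this lemma must match. The only real obstacle is bookkeeping: making sure $r$ can be taken uniformly in $\eta$ across both sides of the threshold, so the below-threshold regime fits with the interval used for the two-step fixed points. This reduces to compactness of the $\eta$-range and continuity of $l^{(5)}$ on $\mathcal I$.
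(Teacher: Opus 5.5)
Your proof is correct and follows the same route as the paper. The paper also Taylor-expands $D_\eta'$ about $z_*$ and repeats the term-by-term estimate of \cref{eq:d_derivative_bound} on a small neighborhood of $z_*$, exploiting that $c_1(\eta)$ and $c_2(\eta)$ are small near $\eta = \frac{2}{l''(z_*)}$. The one difference is that the paper first uses $\alpha_l(z_*)>0$ to keep $c_3(\eta)$ bounded below near the threshold, so the lower-order terms can be absorbed into $c_3(z-z_*)^2$ as in that display. Your choice of $\rho$ after $r$ needs only an upper bound on $|c_3|$, which confirms your remark that the product-stability hypothesis is not needed for this absolute-value bound.
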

\begin{proof}
Observe that 
\begin{align*}
    c_3(\frac{2}{l''(z_0)}) &= -\frac{L^{(4)}(z)}{3} + \frac{(l^{(3)}(z_*))^2}{l''(z_*)} \\
    &= \frac{\alpha_l(z_*)}{l''(z_*)} \\
    &> 0.
\end{align*}
By continuity there exists $\rho_0$ such that $c_3 > \frac{\alpha_l(z_*)}{2l''(z_*)}$ on $[\frac{2 - \rho}{l''(z_*)}, \frac{2}{l''(z_*)}]$. Then a similar argument to \cref{eq:d_derivative_bound} gives the desired result.

\end{proof}

We now prove \cref{thm:main_convergence}:

\begin{proof}
Let $\tau_0, \sigma_0$ be as guaranteed in \cref{thm:two_step}, and set $\tau = \frac{\tau_0}{2}, \sigma = \sigma_0$. We can assume $\sigma$ is sufficiently small such that $l''(z) > 0$ on $[z_* - \sigma, z_* + \sigma]$. That is $l$ is convex over the interval and $z_*$ is the unique minimum.

We first show $z_t$ remains bounded. Specifically, we prove by induction that the following jointly hold:
\begin{itemize}
    \item $\eta s_t l''(z_0) \leq 2+\frac{\tau_0}{ l''(z_*)}$
    \item $|z_t - z_*| \leq \sigma$
\end{itemize}

We can assume the base cases $t = 0,1$ hold by construction. Assume the inductive hypothesis holds up to time $t$, then by Lemma \ref{thm:sharpness_bound}, for $\eta$ sufficiently small we have
\begin{align*}
    s_{t+2}^2 - 4 z_{t+2}^2 &\leq s_{0}^2 - 4 z_{0}^2  \\
    s_{t+2}^2 &\leq s_{0}^2 + 4 (z_{t+2}^2 - z_{0}^2) \\
    &\leq \left(\frac{2+\frac{\alpha}{2}}{\eta  l''(z_*)}\right)^2 + (2\sigma)^2 \\
    &\leq \left(\frac{2+\alpha}{\eta  l''(z_*)}\right)^2
\end{align*}

Denote $z^+ = z_* + \sigma$. We have $\partial_\gamma(D_\gamma(z)) = -l'(z) l''(z - \gamma l'(z))$. Since $l''$ is positive over the region and $l'(z) > 0$ for $z > z_*$, we have that $D_\gamma(z^+)$ is a decreasing function of $\gamma$. This implies
\begin{align*}
    z_t - \gamma_t D_{\gamma_t}(z_t) \leq z_t - \gamma_t D_{\frac{2+\alpha}{l''(z_*)}}(z_t)
\end{align*}

Additionally, from the proof of \cref{thm:two_step}, we have $|\partial_z  D_{\frac{2+\alpha}{l''(z_*)}}(z)| < (\frac{2+\alpha}{l''(z_*)})^{-1} < \gamma_t^{-1}$, so $z \mapsto z - \gamma_t D_{\frac{2+\alpha}{l''(z_*)}}(z)$ is increasing. Hence
\begin{align*}
    z_t - \gamma_t D_{\frac{2+\alpha}{l''(z_*)}}(z_t) \leq z^+ - \gamma_t D_{\frac{2+\alpha}{ l''(z_*)}}(z^+)
\end{align*}

It follows that for $\eta$ sufficiently small we have that
\begin{align*}
    z_{t+2} &= z_t + \eta^2(g_{t+1}^2 + 4g_{t+1}g_t + g_t^2)z_t - \eta(g_{t+1} + g_t)(1 + \eta^2g_{t+1}g_t) s_t + \eta^4 g_{t+1}^2 g_t^2 z_t \\
    &\leq z_t - \gamma_t D_{\gamma_t}(z_t) + |\eta^2(g_{t+1}^2 + 4g_{t+1}g_t + g_t^2)s_t| + |\gamma_t(g_{t+1} + g_t - D_{\gamma_t}(z_t))| \\
    &+ |\eta^3(g_{t+1}+g_t)g_{t+1}g_ts_t| + |\eta^4 g_{t+1}^2 g_t^2 s_t| \\
    &\leq  z^+ - \gamma_t \left(f_{\frac{2+\alpha}{ L''(z_*)}}(z^+) + O(\eta)\right) \\
    &\leq z^+.
\end{align*}
A symmetric argument proves that $z_{t+2} \geq z_* - \sigma$. 

This completes the induction.

Now recall from Lemma \ref{thm:xy_balance} that
\begin{equation*}
    x_{t+1}^2 - y_{t+1}^2 = (x_{t}^2 - y_{t}^2)(1 - \eta^2 (l'(z_t))^2)
\end{equation*}
For sufficiently small $\eta$, we have $0 \leq 1 - \eta^2 (l'(z_t))^2 \leq 1$. Therefore the above forms a convergent series. This implies that either $l'(z_t) \to 0$ or $x_{t}^2 - y_{t}^2 \to 0$. 

Case 1: $l'(z_t) \to 0$. Combined with the boundedness of $z_t$ and the fact that $z_*$ is an isolated minimum in $\R$, this immediately implies convergence of $z_t$ to $z_*$.

Case 2: $x_{t}^2 - y_{t}^2 \to 0$. Letting $L$ be the Lipschitz constant of $l'$ on $[z_* - \sigma, z_* + \sigma]$, for any $\epsilon > 0$ we can assume there exists a time $t_0$ where $s_t = \sqrt{(x_t^2 - y_t^2)^2 + 4 z_t^2} \leq \frac{1}{\eta L} - \epsilon$ for all $t \geq t_0$. But using \cref{eq:z_approx} and local convexity, there exists $\alpha_t$ with $0 < \alpha_t < 1$ such that
\begin{align*}
    z_{t+1} - z_* = (1 - \alpha_t) (z_t - z_*)
\end{align*}
namely $z_t$ converges monotonically.
\end{proof}

\subsection{Descent Lemma}
Here we give a formal statement of the Descent Lemma for reference.
\begin{lemma}[Descent Lemma] \label{thm:descent_lemma}
Let $f : \mathbb{R}^n \to \mathbb{R}$ be continuously differentiable, and suppose that its gradient $\nabla f$ is $L$-Lipschitz continuous, i.e.,
\[
\|\nabla f(x) - \nabla f(y)\| \leq L \|x - y\| \quad \text{for all } x,y \in \mathbb{R}^n.
\]
Then, for all $x,y \in \mathbb{R}^n$, we have
\[
f(y) \leq f(x) + \nabla f(x)^\top (y - x) + \frac{L}{2} \|y - x\|^2.
\]
\end{lemma}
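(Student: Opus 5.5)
The statement to prove is the Descent Lemma. The plan is the standard argument: express $f(y)-f(x)$ as a line integral of the gradient along the segment from $x$ to $y$, then bound the deviation of the gradient from $\nabla f(x)$ using the Lipschitz condition.

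\textbf{Step 1.} Set $\phi(t) = f(x + t(y-x))$ for $t \in [0,1]$. Since $f$ is continuously differentiable, $\phi$ is $C^1$ with $\phi'(t) = \nabla f(x + t(y-x))^\top (y-x)$. The fundamental theorem of calculus then gives
\begin{equation*}
f(y) - f(x) = \int_0^1 \nabla f(x + t(y-x))^\top (y-x)\, dt .
\end{equation*}

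\textbf{Step 2.} Subtract $\nabla f(x)^\top(y-x) = \int_0^1 \nabla f(x)^\top (y-x)\,dt$ from both sides. This yields
\begin{equation*}
f(y) - f(x) - \nabla f(x)^\top(y-x) = \int_0^1 \big(\nabla f(x + t(y-x)) - \nabla f(x)\big)^\top (y-x)\, dt .
\end{equation*}

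\textbf{Step 3.} Bound the integrand using Cauchy--Schwarz and then the $L$-Lipschitz property of $\nabla f$:
\begin{equation*}
\big|\big(\nabla f(x + t(y-x)) - \nabla f(x)\big)^\top (y-x)\big| \le L t \|y-x\|^2 .
\end{equation*}
Integrating over $t \in [0,1]$ gives $\int_0^1 L t\,dt\,\|y-x\|^2 = \frac{L}{2}\|y-x\|^2$, which is exactly the claimed inequality.

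There is no real obstacle. The only point needing care is justifying the integral representation in Step 1, and this follows from the chain rule and continuity of $\nabla f$.

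For its use in the proof of \cref{thm:two_step}, the relevant consequence is the following. Apply the lemma with $y = x - \eta\nabla f(x)$ to get
\begin{equation*}
f(y) \le f(x) - \eta\left(1 - \frac{L\eta}{2}\right)\|\nabla f(x)\|^2 ,
\end{equation*}
which is a strict decrease of $f$ whenever $\eta < 2/L$ and $\nabla f(x) \neq 0$.
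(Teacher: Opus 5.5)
Your proof is correct and is the standard argument (fundamental theorem of calculus along the segment from $x$ to $y$, then Cauchy--Schwarz and the Lipschitz bound). The paper gives no proof of its own and simply quotes the lemma as a standard fact for reference. One useful feature of your argument: it only uses the Lipschitz bound on $\nabla f$ along the segment $[x,y]$. That local form is what the paper's application of the lemma actually relies on, since $|D_\eta'| \le 1/\eta$ is established there only on a bounded interval rather than globally.
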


\section{Analysis of Training Dynamics} \label{apdx_training_dynamics}

We start by characterizing a few useful functions for our analysis. 

\begin{lemma} \label{thm:z_hat_definition}
Define $\hat{Z}$ as the inverse the bifurcation diagrams $Z^{\pm}$, with $\hat{Z}(z_*) = \frac{2}{l''(z_*)}$. Then $\hat{Z}$ is twice continuously differentiable with
\begin{align*}
    \hat{Z}'(z_*) &= 0 \\
    \hat{Z}''(z_*) &= \frac{2 \alpha_l(z_*)}{3 (l''(z_*))^3}
\end{align*}
\end{lemma}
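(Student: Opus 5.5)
The plan is to get $\hat{Z}$ from the implicit function theorem applied to a desingularized version of the two-step fixed-point equation, and then read off its derivatives by matching Taylor coefficients. Let $u = z - z_*$ and
\begin{equation*}
F(z,\eta) = l'(z) + l'(z - \eta l'(z)) = D_\eta(z).
\end{equation*}
We cannot apply the implicit function theorem to $F$ directly. Since $l'(z_*)=0$, $F(z_*,\eta)=0$ for every $\eta$, so $\partial_\eta F(z_*,\cdot)\equiv 0$. We therefore factor out the trivial root.

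\textbf{Step 1: factor and apply the IFT.} Because $l\in\mathcal{C}^5$, $F$ is $\mathcal{C}^4$ and vanishes on $\{z=z_*\}$. Hadamard's lemma gives $F(z,\eta) = u\,G(z,\eta)$ with
\begin{equation*}
G(z,\eta)=\int_0^1 \partial_z F(z_*+\theta u,\eta)\,d\theta ,
\end{equation*}
which is $\mathcal{C}^3$. At $u=0$ we get $G(z_*,\eta) = \partial_z F(z_*,\eta) = l''(z_*)\bigl(2-\eta l''(z_*)\bigr)$. Hence $G(z_*,2/l''(z_*))=0$ and $\partial_\eta G(z_*,2/l''(z_*)) = -(l''(z_*))^2\neq 0$.

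The implicit function theorem then yields a $\mathcal{C}^3$ (in particular $\mathcal{C}^2$) function $\hat{Z}$ on a neighborhood of $z_*$ with $\hat{Z}(z_*)=2/l''(z_*)$ and $G(z,\hat{Z}(z))=0$. It also gives local uniqueness of the zero set of $G$ near $(z_*,2/l''(z_*))$.

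For $z\neq z_*$, zeros of $F$ and of $G$ coincide. By the uniqueness part of \cref{thm:two_step} (the fixed points $z_\eta^\pm$ are unique in the relevant interval) and \cref{thm:bifurcation_properties}, for $z\neq z_*$ the graph $\eta=\hat{Z}(z)$ is exactly the graph of $Z^\pm$. So $\hat{Z}$ is indeed the inverse of the bifurcation branches, extended continuously through $z_*$.

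\textbf{Step 2: identify the derivatives.} Use the expansion from the proof of \cref{thm:two_step}:
\begin{equation*}
G = c_1(\eta) + c_2(\eta)u + c_3(\eta)u^2 + O(u^3).
\end{equation*}
Write $\eta = \tfrac{2}{l''(z_*)}+\varepsilon$. Then:
\begin{itemize}
\item $c_1 = -(l''(z_*))^2\varepsilon$;
\item $c_2 = \tfrac{1}{2}l^{(3)}(z_*)(1-\eta l''(z_*))(2-\eta l''(z_*)) = O(\varepsilon)$;
\item $c_3(\eta) = c_3\bigl(\tfrac{2}{l''(z_*)}\bigr)+O(\varepsilon)$, where
\begin{equation*}
c_3\bigl(\tfrac{2}{l''(z_*)}\bigr) = -\tfrac{l^{(4)}(z_*)}{3} + \tfrac{(l^{(3)}(z_*))^2}{l''(z_*)} = \tfrac{\alpha_l(z_*)}{3\,l''(z_*)}.
\end{equation*}
\end{itemize}
Since $\hat{Z}$ is $\mathcal{C}^2$, we may substitute $\varepsilon = \hat{Z}(z)-\tfrac{2}{l''(z_*)} = a u + b u^2 + o(u^2)$ into $G(z,\hat{Z}(z))=0$ and match orders:
\begin{itemize}
\item The $O(u)$ term is $-(l''(z_*))^2 a$. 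Note $c_2 u = O(\varepsilon u)$ contributes nothing at this order, so $a=0$.
\item With $a=0$, both $c_2 u = O(u^3)$ and the $O(\varepsilon)u^2$ correction in $c_3$ drop out at order $u^2$. The $O(u^2)$ term is $-(l''(z_*))^2 b + \tfrac{\alpha_l(z_*)}{3l''(z_*)} = 0$, so $b = \tfrac{\alpha_l(z_*)}{3(l''(z_*))^3}$.
\end{itemize}
Therefore $\hat{Z}'(z_*)=a=0$ and $\hat{Z}''(z_*)=2b = \tfrac{2\alpha_l(z_*)}{3(l''(z_*))^3}$.

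An equivalent route is implicit differentiation of $G(z,\hat{Z}(z))=0$ twice at $z_*$. This needs $\partial_z G(z_*,\eta_*)=c_2(\eta_*)=0$ and $\partial_z^2 G(z_*,\eta_*) = 2c_3(\eta_*)$, where $\eta_*=\tfrac{2}{l''(z_*)}$; both follow from the same expansion.

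\textbf{Main obstacle.} The step needing care is Step 1: the degeneracy $F(z_*,\cdot)\equiv 0$, and checking that the factored function $G$ keeps enough smoothness ($\mathcal{C}^3\supseteq\mathcal{C}^2$) for the IFT. A second point is making sure the $O(u^4)$ remainder of $D_\eta$ becomes an $O(u^3)$ remainder for $G$ uniformly in $\eta$, so the coefficient matching is legitimate. Both follow from $l\in\mathcal{C}^5$ and the integral form of the remainder. The constant bookkeeping is routine once one uses $c_3(2/l''(z_*)) = \alpha_l(z_*)/(3l''(z_*))$.
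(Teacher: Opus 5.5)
Your proof is correct and follows essentially the same route as the paper: the paper's formulas $\hat{Z}'(z_*) = -c_2/c_1'$ and $\hat{Z}''(z_*) = -2c_3/c_1'$ (evaluated at $\eta = 2/l''(z_*)$) are exactly what you get by implicitly differentiating your factored equation $c_1(\eta) + c_2(\eta)u + c_3(\eta)u^2 + O(u^3) = 0$, and your coefficient matching reproduces them. Your Hadamard factorization and the implicit function theorem applied to $G$ add something the paper lacks: they justify the $\mathcal{C}^2$ regularity of $\hat{Z}$ through $z_*$, which the paper only asserts, even though $\partial_\eta D_\eta(z_*) \equiv 0$ prevents applying the implicit function theorem to $D_\eta$ directly.
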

\begin{proof}
It suffices to check the behavior at $z_*$. Continuity of $\hat{Z}$ follows from Lemma \ref{thm:bifurcation_properties}.

By construction $\hat{Z}$ satisfies $D_{\hat{Z}(z)}(z) = 0$. Taking the implicit derivative, we find that
\begin{align*}
    \hat{Z}'(z_*) &= - \frac{c_2(\hat{Z}(z_*))}{c_1'(\hat{Z}(z_*))} = \frac{0}{(l''(z_*))^2} = 0
\end{align*}
and 
\begin{align*}
    \hat{Z}''(z_*) &= - \frac{2c_3(\hat{Z}(z_*)) + 2c_2'(\hat{Z}(z_*)) \hat{Z}'(z_*) + c_3''(\hat{Z}(z_*)) (\hat{Z}'(z_*))^2 }{c_1'(\hat{Z}(z_*))} \\
    &= - \frac{2c_3(\hat{Z}(z_*)) }{c_1'(\hat{Z}(z_*))} \\ 
    &= \frac{\frac{2 \alpha_l(z_*) }{3 l''(z_*)}}{(l''(z_*))^2} \\
    &= \frac{2 \alpha_l(z_*)}{3 (l''(z_*))^3}
\end{align*}
\end{proof}

\begin{lemma} \label{thm:phi_definition}
Letting $o(z) = z - \hat{Z}(z) l'(z)$, define
\begin{equation}
    \Phi(z) =
    \begin{cases}
    \frac{-\hat{Z}'(z) l'(z) z \left( l''(o(z)) + \frac{2}{\hat{Z}(z)}\right) + 2l'(z)}{\hat{Z}'(z) l''(o(z))}, & z \neq z_*, \\
    \frac{3 (l''(z_*))^3}{\alpha_l(z_*)},  & z = z_*.
    \end{cases}
\end{equation}
Then $\Phi$ is twice continuously differentiable. 
\end{lemma}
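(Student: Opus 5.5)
The plan is to remove the apparent singularity at $z_*$. Both the numerator and the denominator of $\Phi$ vanish there, since $l'(z_*) = 0$ and $\hat{Z}'(z_*) = 0$ by Lemma \ref{thm:z_hat_definition}. I will divide out a common factor of $w := z - z_*$ from each, leaving a quotient of $\mathcal{C}^2$ functions with a nonvanishing denominator. The tool is Hadamard's lemma: if $f \in \mathcal{C}^k$ near $z_*$ with $f(z_*) = 0$, then $f(z) = w\,\tilde f(z)$ where $\tilde f(z) = \int_0^1 f'(z_* + r w)\,dr$ is $\mathcal{C}^{k-1}$ and $\tilde f(z_*) = f'(z_*)$. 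Applying this to $l' \in \mathcal{C}^4$ gives $l'(z) = w B(z)$ with $B \in \mathcal{C}^3$ and $B(z_*) = l''(z_*) > 0$.

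\textbf{Step 1: regularity of $\hat{Z}$.} This is the main obstacle. Lemma \ref{thm:z_hat_definition} only gives $\hat{Z} \in \mathcal{C}^2$, which makes $\hat{Z}'$ merely $\mathcal{C}^1$, too weak for the conclusion. Moreover, the implicit function theorem cannot be applied naively to $D_\gamma(z) = 0$ at $(2/l''(z_*), z_*)$, because both partials of $D$ vanish there.

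To get around this, I would first desingularize. Since $D_\gamma(z_*) = 0$ for every $\gamma$, Hadamard's lemma in $z$ gives
\[
G(\gamma, z) := \int_0^1 \partial_z D_\gamma(z_* + r w)\,dr,
\]
so that $D_\gamma(z) = w\,G(\gamma, z)$. Here $G$ is $\mathcal{C}^4$ jointly, because $D$ is $\mathcal{C}^5$ in $(\gamma, z)$ as $l \in \mathcal{C}^5$.

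By the expansion in the proof of \cref{thm:two_step}, $G(\gamma, z_*) = c_1(\gamma)$, so $\partial_\gamma G(2/l''(z_*), z_*) = c_1'(\gamma) = -(l''(z_*))^2 \neq 0$. The implicit function theorem then produces a $\mathcal{C}^4$ function $\gamma(z)$ on a full neighborhood of $z_*$ with $G(\gamma(z), z) = 0$. By uniqueness of the two-step fixed points (\cref{thm:two_step}), this function coincides with $\hat{Z}$ for $z \neq z_*$. Hence $\hat{Z} \in \mathcal{C}^4$.

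\textbf{Step 2: factoring and the limit.} Since $\hat{Z}'(z_*) = 0$, Hadamard's lemma gives $\hat{Z}'(z) = w A(z)$ with $A \in \mathcal{C}^2$ and $A(z_*) = \hat{Z}''(z_*) = \frac{2\alpha_l(z_*)}{3(l''(z_*))^3} > 0$. Substituting into the formula for $\Phi$ and cancelling $w$ for $z \neq z_*$ gives
\[
\Phi(z) = \frac{-w A(z) B(z)\, z \left( l''(o(z)) + \frac{2}{\hat{Z}(z)} \right) + 2B(z)}{A(z)\, l''(o(z))}.
\]
Each ingredient is $\mathcal{C}^2$. Indeed, $o = z - \hat{Z} l'$ is $\mathcal{C}^4$, so $l''\circ o$ is $\mathcal{C}^3$, and $1/\hat{Z}$ is smooth since $\hat{Z}$ is near $2/l''(z_*) > 0$. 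The denominator at $z_*$ equals $A(z_*)\, l''(z_*) > 0$, because $o(z_*) = z_*$. Shrinking the neighborhood so that $A > 0$ and $l''\circ o > 0$, the right-hand side is a $\mathcal{C}^2$ function on the whole neighborhood.

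Its value at $z_*$ is
\[
\frac{2B(z_*)}{A(z_*)\, l''(z_*)} = \frac{2}{\hat{Z}''(z_*)} = \frac{3(l''(z_*))^3}{\alpha_l(z_*)},
\]
which matches the prescribed value $\Phi(z_*)$. So the piecewise definition agrees with this $\mathcal{C}^2$ extension everywhere, and $\Phi$ is twice continuously differentiable. Away from $z_*$ within the neighborhood nothing further is needed, since the original formula equals the factored one there.
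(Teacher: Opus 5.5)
\textbf{There is a genuine gap: the regularity count in Step~1 is off by one.} The map $D_\gamma(z) = l'(z) + l'(z - \gamma l'(z))$ is built from $l'$. When $l \in \mathcal{C}^5$, $l'$ is only $\mathcal{C}^4$, so $D$ is jointly $\mathcal{C}^4$, not $\mathcal{C}^5$. (The paper makes the same slip in the proof of Lemma~\ref{thm:bifurcation_properties}.)

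Hadamard's lemma then costs one derivative:
\begin{itemize}
\item $G \in \mathcal{C}^3$;
\item the implicit function theorem gives only $\hat{Z} \in \mathcal{C}^3$;
\item $A = \hat{Z}'/w$ is only $\mathcal{C}^1$.
\end{itemize}
So your factored formula proves $\Phi \in \mathcal{C}^1$, not $\mathcal{C}^2$.

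Everything else in your argument is sound. This includes the desingularization $D = wG$ with $\partial_\gamma G = -(l''(z_*))^2 \neq 0$, which is exactly what justifies the implicit differentiation $\hat{Z}'(z_*) = -c_2/c_1'$ in Lemma~\ref{thm:z_hat_definition}. It also includes the identification with $\hat{Z}$, the cancellation of $w$, and the value $2/\hat{Z}''(z_*)$. If $l \in \mathcal{C}^6$, your argument goes through verbatim and gives $\Phi \in \mathcal{C}^2$.

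\textbf{The lost derivative cannot be recovered, because the statement itself fails for $\mathcal{C}^5$ losses.} Take $z_* = 0$ and the even loss
\[
l(z) = \frac{z^2}{2} - \frac{z^4}{4} + \frac{2}{11}|z|^{11/2}.
\]
It is $\mathcal{C}^5$ but not $\mathcal{C}^6$, with $l''(0) = 1$, $l^{(3)}(0) = 0$, $l^{(4)}(0) = -6$, so $\alpha_l(0) = 6 > 0$. Since $l'$ is odd, $o(z) = -z$ and $\hat{Z} = 2/B$, where $B(z) = l'(z)/z = 1 - z^2 + |z|^{7/2}$. Hence $\hat{Z}'(z) = z(4 - 7|z|^{3/2})/B(z)^2$, and for $z \neq 0$
\[
\Phi(z) = -\frac{z^2 B(z)\,\bigl(l''(z) + B(z)\bigr)}{l''(z)} + \frac{2B(z)^3}{\bigl(4 - 7|z|^{3/2}\bigr)\, l''(z)}.
\]
This is consistent with $\Phi(0) = 1/2 = 3(l''(0))^3/\alpha_l(0)$. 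The first term is $\mathcal{C}^3$. The second is a nonvanishing $\mathcal{C}^3$ function times $(4 - 7|z|^{3/2})^{-1}$, which is $\mathcal{C}^1$ but not twice differentiable at $0$. So $\Phi \notin \mathcal{C}^2$.

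You must therefore either assume $l \in \mathcal{C}^6$, or weaken the conclusion to $\Phi \in \mathcal{C}^1$. Your argument does prove the weaker conclusion, and it is what the later phase arguments actually use (e.g.\ $|\Phi(z_t) - \Phi(z_*)| = O(|z_t - z_*|)$).

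For comparison, the paper's own proof only computes $\lim_{z \to z_*} \Phi(z)$ from first-order expansions of $\hat{Z}'$ and $l'$. That is the same computation as your value check at $z_*$, and it gives continuity but no differentiability. Your Hadamard factorization is the right mechanism for genuine smoothness; it just yields one derivative less than you claimed.
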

\begin{proof}
It suffices to check the continuity at the hole $z = z_*$. By Lemma \ref{thm:z_hat_definition}, we have $\hat{Z}(z_*) = \frac{2}{l''(z_*)}$ and
\begin{equation*}
    \hat{Z}'(z) = \frac{2 \alpha_l(z_*)}{3 (l''(z_*))^3} (z - z_*) + O(|z - z_*|^2)
\end{equation*}
Also $l''(o(z_*)) = l''(z_*)$ and $l'(z) = l''(z_*)(z - z_*) + O (|z - z_*|^2)$. It follows that
\begin{align*}
    \lim_{z \to z_*} \Phi(z) &= \frac{2 l''(z_*)(z - z_*) + O(|z - z_*|^2)}{\frac{2 \alpha_l(z_*)}{3 (l''(z_*))^3} (z - z_*) l''(z_*) + O(|z - z_*|^2)} \\
    &= \frac{3 (l''(z_*))^3}{\alpha_l(z_*)}
\end{align*}
\end{proof}

\subsection{Phase I}
The following Lemma shows that GD iterates approach the bifurcation diagram.
\begin{lemma} \label{thm:phase_1_approx}
Under the conditions of \cref{thm:final_sharpness}, there exists $T_0$ such that the following holds:
\begin{align}
    \label{eq:phase_1_approx}
    |\hat{Z}(z_{T}) - \gamma_{T}| &=  O(\eta^2) \\
    \nonumber
    \gamma_T - \frac{2}{l''(z_*)} &\geq \frac{\delta}{2}
\end{align}

\end{lemma}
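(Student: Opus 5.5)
We plan to treat Phase I as a slowly perturbed version of the fixed learning-rate two-step iteration from \cref{thm:two_step}. Write the two-step map for $z$ as in \cref{eq:two_step_z}:
\begin{equation*}
z_{t+2} = z_t - \gamma_t D_{\gamma_t}(z_t) + E_t, \qquad |E_t| = O(\eta^2),
\end{equation*}
where $\gamma_t = \eta s_t$. The error $E_t$ collects the $\eta^2 h_t z_t$ and $\eta^4$ terms, together with the discrepancy between $g_{t+1}$ and $l'(z_t - \gamma_t l'(z_t))$, which is itself $O(\eta^2)$ because $z_{t+1} = z_t - \gamma_t g_t + \eta^2 z_t g_t^2$. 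By \cref{eq:two_step_s}, $\gamma_{t+2} - \gamma_t = O(\eta^2)$ per two steps, since $\eta z_t(g_t+g_{t+1}) \cdot \eta = O(\eta^2)$ and $\eta^2 h_t \gamma_t = O(\eta^2)$. The boundedness induction from the proof of \cref{thm:main_convergence} keeps $z_t \in [z_*-\sigma, z_*+\sigma]$ and $\gamma_t \le (2+\tau_0)/l''(z_*)$ throughout.

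\textbf{Step 1 (contraction toward the branch).} Fix $\gamma$ in $[\frac{2+\delta/2}{l''(z_*)}, \frac{2+\tau}{l''(z_*)}]$. By the proof of \cref{thm:two_step}, the map $F_\gamma(z) = z - \gamma D_\gamma(z)$ has derivative $1 - \gamma D'_\gamma(z) \in [0, 1)$ on the relevant half-interval. Moreover, $D'_\gamma(Z^\pm(\gamma)) \ge \kappa > 0$ uniformly in such $\gamma$: the lower bound $-c_1(\gamma) + c_3(\gamma)(z-z_*)^2 - O(|z-z_*|^3)$ is bounded below because $|c_1(\gamma)| \gtrsim \delta$, so $\gamma$ stays a fixed distance from $2/l''(z_*)$. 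Hence $F_\gamma$ is a uniform contraction near $Z^\pm(\gamma)$ with rate $q<1$ independent of $\eta$.

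Away from the branch, we will show the iterate enters such a neighborhood within $O(1)$ two-steps, with constants depending on $\sigma,\delta,\tau$ but not on $\eta$. This follows from compactness: $D_\gamma$ has a fixed sign between $z_*$ and $Z^\pm(\gamma)$ and between $Z^\pm(\gamma)$ and $z_*\pm\sigma$. The case $z_t$ very close to $z_*$ needs care. There $c_1(\gamma) \le -\delta l''(z_*)/2 \cdot (\ldots) < 0$ gives geometric repulsion at a fixed rate, so escape takes $O(\log(1/|z_0 - z_*|))$ steps. If $z_0 = z_*$ exactly, the $O(\eta^2)$ perturbation kicks it off, and the escape then takes $O(\log(1/\eta))$ steps. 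We also choose whichever parity subsequence lands on $Z^+$.

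\textbf{Step 2 (tracking with drifting $\gamma$).} Set $e_t = z_t - Z^\pm(\gamma_t)$. Since $Z^\pm$ is $C^1$ with derivative bounded on $\gamma \ge \frac{2+\delta/2}{l''(z_*)}$ (by \cref{thm:bifurcation_properties}, and away from the singular endpoint), we get
\begin{equation*}
|e_{t+2}| \le q|e_t| + |E_t| + |Z^\pm(\gamma_{t+2}) - Z^\pm(\gamma_t)| \le q|e_t| + C\eta^2.
\end{equation*}
After $T_0 = O(\log(1/\eta))$ two-steps this gives $|e_T| = O(\eta^2)$. Converting via $\hat{Z}$, which is Lipschitz away from $z_*$ since $\hat{Z}' \neq 0$ there, yields $|\hat{Z}(z_T) - \gamma_T| = O(\eta^2)$.

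\textbf{Step 3 (the $\gamma$ lower bound).} Over $T_0 = O(\log(1/\eta))$ steps, $\gamma$ changes by at most $O(\eta^2 \log(1/\eta)) < \delta/2$ for small $\eta$. Hence $\gamma_T - \frac{2}{l''(z_*)} \ge \delta - \delta/2$, which is also what justifies the range of $\gamma$ assumed in Step 1. This is a joint bootstrap, which we would run as an induction over $t \le T_0$.

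\textbf{Main obstacle.} The hardest part is the uniformity in Step 1: a contraction rate and entry time independent of $\eta$, including the escape from a neighborhood of $z_*$ where $F_\gamma$ is repelling only at rate $1+|c_1|\gamma$. A related subtlety is that the $O(\eta^2)$ perturbation might stall iterates near $z_*$. We would handle this by noting that near $z_*$ the perturbation is itself $O(\eta^2|g_t|) = O(\eta^2|z_t - z_*|)$, except for the $\eta^2 h_t z_t$ term, which is $O(\eta^2 g^2)$. The repulsion therefore dominates, and escape time is $O(\log(1/\sigma'))$ for the initial distance $\sigma'$. We would then have to accept that $T_0$ depends on $|z_0 - z_*|$, which the lemma statement allows since it only asserts existence of $T_0$.
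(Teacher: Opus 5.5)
Your argument is correct in substance, modulo two fixable points below. It shares the paper's overall picture: frozen-$\gamma$ two-step dynamics from Theorem~\ref{thm:two_step}, an $O(\eta^2)$ drift of $\gamma$, and a case split according to whether $z_t$ is near $z_*$, inside the branch, or outside it. The mechanism, however, is different.

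The paper measures the distance to the bifurcation diagram in the $\gamma$-direction. From $l'(z)+l'(z-\hat{Z}(z)l'(z))=0$ and the mean value theorem it derives one recursion valid on the whole interval,
\begin{equation*}
\hat{Z}(z_{t+2})-\gamma_{t+2}=\bigl(1-\hat{Z}'(\zeta_t)\gamma_t l''(\theta_t)g_t+4\eta^2 l''(\theta_t)g_t z_t\bigr)\bigl(\hat{Z}(z_t)-\gamma_t\bigr)+O(\eta^2 g_t^2).
\end{equation*}
It then treats three cases. For $\gamma_t>\hat{Z}(z_t)+K\eta^2$, the contraction factor is only $1-C_1g_t^2$, and the $\gamma$-drift is bounded by telescoping against the decrease of $|\hat{Z}(z_t)-\gamma_t|$ with $K$ large, not by counting steps. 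For $\gamma_t<\hat{Z}(z_t)-K\eta^2$, the rate is constant and $O(\log(1/\eta))$ steps suffice. For $|g_t|\le M\eta^2$, linear instability pushes the iterates away from $z_*$.

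You instead measure the distance in the $z$-direction, $e_t=z_t-Z^\pm(\gamma_t)$. You use only qualitative properties of the frozen map $F_\gamma$: it is monotone, repelling at $z_*$, $D_\gamma$ has a fixed sign on either side of the branch, and $Z^\pm(\gamma)$ is uniformly attracting because $\gamma$ stays a fixed distance above $2/l''(z_*)$. Compactness bounds the entry time, the standard estimate $|e_{t+2}|\le q|e_t|+C\eta^2$ finishes, and you convert to $\hat{Z}(z_T)-\gamma_T$ only at the end via $\hat{Z}(Z^\pm(\gamma))=\gamma$. Your route is more modular and avoids both the degenerate factor and the sign bookkeeping near $z_*$. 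The paper's route produces exactly the quantity whose recursion is refined with the $\eta^2\Phi$ correction in Lemma~\ref{thm:phase_2_bound}, so it sets up Phase II directly.

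\textbf{First fix: the case $z_0=z_*$.} Nothing kicks the iterate off. If $z_0=z_*$ then $l'(z_0)=0$, so $(x_t,y_t)=(x_0,y_0)$ for all $t$. Then $\gamma_t=\gamma_0\ge(2+\delta)/l''(z_*)$ while $\hat{Z}(z_*)=2/l''(z_*)$, so the conclusion is false. Every perturbation term carries a factor $g_t$, as you note yourself later. You must assume $z_0\neq z_*$, as in Theorem~\ref{thm:two_step}; the paper's Case 3 has the same blind spot.

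\textbf{Second fix: the drift bound in Step 3.} Step 3 bounds the drift by (number of two-steps)$\times O(\eta^2)$ with $T_0=O(\log(1/\eta))$. But you concede that escaping from near $z_*$ takes $O(\log(1/|z_0-z_*|))$ steps, which is not bounded in terms of $\eta$. For $z_0$ extremely close to $z_*$, this bound fails as written, and with it the bootstrap keeping $\gamma_t$ a fixed distance above $2/l''(z_*)$.

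The repair is the sharper per-step estimate from \eqref{eq:two_step_s}. Near $z_*$ one has $g_{t+1}+g_t=O(g_t)$ and $h_t=O(g_t^2)$, so $|\gamma_{t+2}-\gamma_t|=O(\eta^2|g_t|)$. During the escape $|g_t|$ grows geometrically at an $\eta$-independent rate, so the total drift there is $O(\eta^2)$ regardless of how long the escape lasts. The remaining $O(1)+O(\log(1/\eta))$ two-steps contribute $O(\eta^2\log(1/\eta))$.

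\textbf{Minor point.} The Lipschitz bound you need for $\hat{Z}$ near the branch comes from $\hat{Z}$ being $C^1$ there. This follows from the implicit function theorem, since $\partial_\gamma D_\gamma(z)=-l'(z)l''(z-\gamma l'(z))\neq 0$ for $z\neq z_*$. The condition $\hat{Z}'\neq 0$ is what makes $Z^\pm$ Lipschitz, not $\hat{Z}$.
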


\begin{proof} 
By definition of $\hat{Z}$, we have
\begin{equation*}
    l'(z_t) + l'(z_t - \hat{Z}(z_t) l'(z_t)) = 0
\end{equation*}

Therefore, by the mean value theorem there exists $\theta_t$ such that
\begin{align*}
    g_t + g_{t+1} &= -l'(z_t - \hat{Z}(z_t) g_t) + l'(z_t - \gamma_t g_t + \eta^2 g_t^2 z_t) \\
    &= l''(\theta_t)g_t (\hat{Z}(z_t) - \gamma_t) + l''(\theta_t)\eta^2 g_t^2 z_t
\end{align*}

Then from \cref{eq:two_step_z}, we have
\begin{align} \label{eq:two_step_z_approx}
    z_{t+2} - z_t &= -\gamma_t(g_t + g_{t+1}) + \eta^2 (h_t z_t - (g_t + g_{t+1})g_t g_{t+1} \gamma_t) + O(\eta^4 g_t^2) \\
    \nonumber
    &= -\gamma_t l''(\theta_t) g_t (\hat{Z}(z_t) - \gamma_t) + O(\eta^2 g_t^2) 
\end{align}

Now using the Mean Value Theorem on $\hat{Z}$ gives
\begin{align*}
    \hat{Z}(z_{t+2}) = \hat{Z}(z_t) + \hat{Z}'(\zeta_t) \left[-\gamma_t l''(\theta_t) g_t (\hat{Z}(z_t) - \gamma_t g_t)\right] + O(\eta^2 g_t^2)
\end{align*}

On the other hand, by equation \cref{eq:two_step_s} we have 
\begin{align} \label{eq:two_step_gamma_approx}
    \gamma_{t+2} - \gamma_t &= - 4\eta^2(g_{t+1} + g_t) z_t + O(\eta^2 g_t^2) \\
    \nonumber
    &= -4 \eta^2 l''(\theta_t) g_t z_t (\hat{Z}(z_t) - \gamma_t) + O(\eta^2 g_t^2)
\end{align}

Combining the previous two equations gives
\begin{equation*}
    \hat{Z}(z_{t+2}) - \gamma_{t+2} = (1 - \hat{Z}'(\zeta_t) \gamma_t l''(\theta_t) g_t + 4 \eta^2 l''(\theta_t) g_t z_t) (\hat{Z}(z_{t}) - \gamma_t) + O(\eta^2 g_t^2)
\end{equation*}

We split into three cases:

Case 1: $\hat{Z}(z_t) + K\eta^2 < \gamma_t, |g_t| \geq M \eta^2$.

Now, looking at \cref{eq:two_step_z_approx}, we know that $-\gamma_t l''(\theta_t) g_t (\hat{Z}(z_t) - \gamma_t)$ has the same sign as $z_t - z_*$. Moreover, using Lemma \ref{thm:z_hat_definition}, observe that $\hat{Z}'(\zeta_t) \gamma_t l''(\theta_t) g_t = \Theta(g_t^2)$, so we can choose $K, M$ sufficiently large such that there exists constant $C_1$ such that
\begin{equation} \label{eq:linear_bifurcation_error}
    \hat{Z}(z_{t+2}) - \gamma_{t+2} \leq \left(1 - C_1 g_t^2 \right) (\hat{Z}(z_{t}) - \gamma_t)
\end{equation}

The second inequality implies that $|\hat{Z}(z_{t}) - \gamma_{t}|$ is monotonically decreasing, hence bounded. Moreover, \cref{eq:two_step_z_approx} now implies that $|z_{t} - z_*|$ is increasing, so $g_t^2$ does not vanish. Then there exists $T_0$ such that $|\hat{Z}(z_{t}) - \gamma_{t}| \leq K \eta^2$.

Now \cref{eq:linear_bifurcation_error} implies that
\begin{align*}
    (\hat{Z}(z_{0}) - \gamma_0) - (\hat{Z}(z_{T_0}) - \gamma_{T_0}) &= \sum_t (\hat{Z}(z_{t}) - \gamma_t) - (\hat{Z}(z_{t+2}) - \gamma_{t+2}) \\
    &\geq C_1 \sum_t g_t^2 (\hat{Z}(z_{t}) - \gamma_t) \\
    &\geq C_1 K \eta^2 \sum_t g_t^2
\end{align*}

On the other hand, using \cref{eq:two_step_gamma_approx}, there exists $C_2$ such that
\begin{align*}
    |\gamma_{T_0} - \gamma_0| & \leq \sum | \gamma_{t+2} - \gamma_t | \\
    &\leq C_2 \eta^2 [(\hat{Z}(z_{0}) - \gamma_0) - (\hat{Z}(z_{T_0}) - \gamma_{T_0})] + \sum C_3 \eta^2 g_t^2 \\
    &\leq \left(C_2 \eta^2 + \frac{C_3}{C_1 K}\right) [(\hat{Z}(z_{0}) - \gamma_0) - (\hat{Z}(z_{T_0}) - \gamma_{T_0})]
\end{align*}
Choosing $K$ sufficiently large and $\eta$ sufficiently small guarantees that this quantity is less than $\frac{\delta}{2}$, as desired.

Case 2: $\hat{Z}(z_t) - K \eta^2 > \gamma_t$.

By induction we assume that $|\gamma_t - \frac{2}{l''(z_*)}| \geq \frac{\delta}{2}$. Then $|z_t - z_*| \geq Z^{\pm} (\frac{2}{l''(z_0} + \frac{\epsilon}{2})$ so $\hat{Z}'(\zeta_t) \gamma_t l''(\theta_t) g_t$ is lower bounded by some constant $c$. It follows that \cref{eq:phase_1_approx} holds at time $T = O(\log \frac{1}{\eta})$. Now that $\gamma_{t+2} - \gamma_t = O(\eta^2)$, 
so 
\begin{align*}
    \gamma_{T} &\geq \gamma_{0} - \sum_t |\gamma_{t+2} - \gamma_t| \\
    &= \frac{2}{l''(z_0)} + \delta - O(\eta^2 \log{\frac{1}{\eta}}) \\
    &\geq \frac{2}{l''(z_0)} + \frac{\delta}{2}
\end{align*}
\end{proof}

Case 3: $|g_t| \leq M \eta^2$.

Since we are in the EoS regime, the update \cref{eq:z_approx} has a linearly unstable critical point at $z_*$. Hence the iterates diverge away from $z_*$ until we enter Case 1.

\subsection{Phase II}

Now we provide a more precise characterization of how GD iterates follow the bifurcation diagram.

\begin{lemma} \label{thm:phase_2_bound}
Under the conditions of \cref{thm:final_sharpness}, there exists $C$ such that if $g_t \geq C \eta^{2}$, then
\begin{align*}
    \hat{Z}(z_{t+2}) - \gamma_{t+2} - \eta^2 \Phi(z_{t+2}) &= (1 - \hat{Z}'(z_t) \gamma_t l''(o_t) g_t) (\hat{Z}(z_{t}) - \gamma_t + \eta^2 \Phi(z)) + O(\eta^4 g_t)
\end{align*}

\end{lemma}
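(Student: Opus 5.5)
We track the error $E_t := \hat{Z}(z_t) - \gamma_t$ to second order in $\eta$, refining the first-order recursion from the proof of Lemma \ref{thm:phase_1_approx}. The claim is that $\eta^2\Phi$ is exactly the $O(\eta^2)$ quasi-stationary offset that absorbs the second-order drift terms. The plan is to write both $\hat{Z}(z_{t+2}) - \hat{Z}(z_t)$ and $\gamma_{t+2} - \gamma_t$ as affine functions of $E_t$, keeping every term of order $\eta^2 g_t$, and then to check that the constant pieces match $\eta^2\Phi$.

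\textbf{Step 1: expand $g_t + g_{t+1}$.} With $o_t = z_t - \hat{Z}(z_t) g_t$ we have $l'(o_t) = -g_t$. Since $z_{t+1} = z_t - \gamma_t g_t + \eta^2 g_t^2 z_t$, a Taylor expansion of $l'$ around $o_t$ gives
\[
g_t + g_{t+1} = l''(o_t)\bigl(g_t E_t + \eta^2 g_t^2 z_t\bigr) + O(g_t^2 E_t^2 + \eta^4 g_t^4).
\]
We will work in the regime $E_t = O(\eta^2)$, reached by Lemma \ref{thm:phase_1_approx}. In that regime the remainder is $O(\eta^4 g_t)$.

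\textbf{Step 2: the two increments.} Substitute Step 1 into \cref{eq:two_step_z}. Using $h_t = -2g_t^2 + O(g_t E_t + \eta^2 g_t^2)$, which holds because $g_{t+1} \approx -g_t$, we obtain
\[
z_{t+2} - z_t = -\gamma_t l''(o_t) g_t E_t - \gamma_t l''(o_t)\eta^2 g_t^2 z_t - 2\eta^2 g_t^2 z_t + O(\eta^4 g_t).
\]
Applying Taylor's theorem to $\hat{Z}$, the quadratic term $\hat{Z}''(z_{t+2}-z_t)^2$ is of order $g_t^2\,\eta^4 g_t^2 = O(\eta^4 g_t)$, so $\hat{Z}(z_{t+2}) - \hat{Z}(z_t) = \hat{Z}'(z_t)(z_{t+2} - z_t) + O(\eta^4 g_t)$. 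Next, from \cref{eq:two_step_s}, multiplied by $\eta$,
\[
\gamma_{t+2} - \gamma_t = -2\eta^2 g_t^2 \gamma_t - 4\eta^2 l''(o_t) g_t z_t E_t + O(\eta^4 g_t),
\]
where the remaining cross terms are of higher order. The term $-4\eta^2 l'' g_t z_t E_t$ is $O(\eta^4 g_t)$ and can be dropped.

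\textbf{Step 3: collect terms.} Subtracting the two increments gives
\[
E_{t+2} = \bigl(1 - \hat{Z}'(z_t)\gamma_t l''(o_t) g_t\bigr)E_t - \eta^2 g_t^2\Bigl[\hat{Z}'(z_t) z_t\bigl(\gamma_t l''(o_t) + 2\bigr) - 2\gamma_t\Bigr] + O(\eta^4 g_t).
\]
Put $\gamma_t = \hat{Z}(z_t) + O(\eta^2)$ inside the bracket; the error from this substitution is again $O(\eta^4 g_t)$. Dividing the numerator and denominator of $\Phi$ by $\hat{Z}$, the definition in Lemma \ref{thm:phi_definition} shows that the bracket equals $-\hat{Z}\,\hat{Z}'\,l''(o)\,\Phi(z_t)/g_t$, up to a factor that must be checked carefully (sign conventions and the factor $g_t$ versus $l'(z)$). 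Granting this, the constant term becomes $\eta^2 \Phi(z_t)\cdot \hat{Z}'\gamma_t l'' g_t$. Therefore
\[
E_{t+2} + \eta^2\Phi(z_t) = \bigl(1 - \hat{Z}'\gamma_t l'' g_t\bigr)\bigl(E_t + \eta^2 \Phi(z_t)\bigr) + O(\eta^4 g_t).
\]
Finally, $\Phi$ is $C^2$ by Lemma \ref{thm:phi_definition} and $z_{t+2} - z_t = O(\eta^2 g_t)$, so $\eta^2\Phi(z_{t+2}) = \eta^2\Phi(z_t) + O(\eta^4 g_t)$. This gives the stated identity; the sign in front of $\eta^2\Phi$ has to be matched to the statement's convention.

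\textbf{Main obstacle.} The hard step is the exact algebraic match between the collected $\eta^2 g_t^2$ drift and $\Phi$, together with keeping every remainder $O(\eta^4 g_t)$ rather than $O(\eta^4)$. Near $z_*$ both $\hat{Z}'$ and $g_t$ vanish linearly, so we must use the $C^2$ regularity of $\hat{Z}$ and $\Phi$ (Lemmas \ref{thm:z_hat_definition} and \ref{thm:phi_definition}). The hypothesis $g_t \ge C\eta^2$ is what lets terms like $\eta^6$ and $E_t^2 g_t^2$ be absorbed into $O(\eta^4 g_t)$.
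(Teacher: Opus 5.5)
Your argument follows the paper's proof essentially line by line. You expand $g_t+g_{t+1}$ about $o_t$, substitute into the two-step formulas for $z$ and $s$, expand $\hat{Z}$ to first order, and subtract. Your Step 3 recursion for $E_t=\hat{Z}(z_t)-\gamma_t$ coincides with the paper's intermediate display. You are also right that the regime $E_t=O(\eta^2)$, which the paper uses tacitly, is what makes every remainder $O(\eta^4 g_t)$.

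The identification you leave as ``granted'' is in fact exact, with no stray factor. Multiplying the definition of $\Phi$ by $\hat{Z}\hat{Z}'l''(o)$ gives
\[
\hat{Z}(z)\hat{Z}'(z)\,l''(o(z))\,\Phi(z) = -l'(z)\Bigl[\hat{Z}'(z)\,z\bigl(\hat{Z}(z)\,l''(o(z))+2\bigr)-2\hat{Z}(z)\Bigr].
\]
So with $A_t:=\hat{Z}'(z_t)\gamma_t l''(o_t)g_t$, your Step 3 reads $E_{t+2}=(1-A_t)E_t+A_t\eta^2\Phi(z_t)+O(\eta^4 g_t)$.

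The genuine error is in your last step. From this recursion one gets
\[
E_{t+2}-\eta^2\Phi(z_t)=(1-A_t)\bigl(E_t-\eta^2\Phi(z_t)\bigr)+O(\eta^4 g_t).
\]
The identity with $+\eta^2\Phi$ that you display does not follow. It is off by $2A_t\eta^2\Phi(z_t)$, which is of order $\eta^2 g_t^2$ because $\hat{Z}'(z)=\Theta(z-z_*)$. That is not $O(\eta^4 g_t)$ uniformly over $g_t\ge C\eta^2$.

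So the sign cannot be left to ``the statement's convention.'' The statement as printed has $-\eta^2\Phi(z_{t+2})$ on the left but $+\eta^2\Phi(z)$ on the right, and the paper's proof ends with the same slip. Only the all-minus version is true. It is also the version used downstream: Lemma~\ref{thm:phase_2_end_approx}, Lemma~\ref{thm:phase_3_convergence} and Theorem~\ref{thm:final_sharpness} all rely on $\gamma_t\approx\hat{Z}(z_t)-\eta^2\Phi(z_t)$. With the sign corrected, plus your final replacement $\eta^2\Phi(z_{t+2})=\eta^2\Phi(z_t)+O(\eta^4 g_t)$, your argument proves the intended lemma.

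A minor point: the hypothesis $g_t\ge C\eta^2$ is not what absorbs $E_t^2g_t^2$. That term is already $O(\eta^4 g_t^2)$ once $E_t=O(\eta^2)$.
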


\begin{proof} 
By definition of $\hat{Z}$, we have
\begin{equation*}
    l'(z_t) + l'(z_t - \hat{Z}(z_t) l'(z_t)) = 0
\end{equation*}

Therefore, we have
\begin{align*}
    g_t + g_{t+1} &= -l'(z_t - \hat{Z}(z_t) g_t) + l'(z_t - \gamma_t g_t + \eta^2 g_t^2 z_t) \\
    &= l''(o_t)(\hat{Z}(z_t) g_t - \gamma_t g_t + \eta^2 g_t^2 z_t) + O(\eta^4 g_t^2) \\
    &= O(\eta^2 g_t)
\end{align*}

Then from \cref{eq:two_step_z}, we have
\begin{align*}
    z_{t+2} - z_t &= -\gamma_t(g_t + g_{t+1}) + \eta^2 (h_t z_t - (g_t + g_{t+1})g_t g_{t+1} \gamma_t) + O(\eta^4 g_t^2) \\
    &= -\gamma_t l''(o_t)(\hat{Z}(z_t) g_t - \gamma_t g_t + \eta^2 g_t^2 z_t) - 2 \eta^2 g_t^2 z_t + O(\eta^4 g_t^2) \\
    &= -\gamma_t l''(o_t)g_t(\hat{Z}(z_t) - \gamma_t) - \eta^2 g_t^2 z_t(\gamma_t l''(o_t) + 2) + O(\eta^4 g_t^2) \\
    &= O(\eta^2 g_t)
\end{align*}
where we additionally used the simplification that $h_t = (g_t + g_{t+1})^2 + 2 g_t g_{t+1} = -2 g_t^2 + O(\eta^2 g_t^2)$.

Now a Taylor expansion of $\hat{Z}$ gives
\begin{align*}
    \hat{Z}(z_{t+2}) = \hat{Z}(z_t) + \hat{Z}'(z_t) \left[-\gamma_t l''(o_t)g_t(\hat{Z}(z_t) - \gamma_t) - \eta^2 g_t^2 z_t(\gamma_t l''(o_t) + 2)\right] + O(\eta^4 g_t^2)
\end{align*}

On the other hand, by equation \cref{eq:two_step_s} we have 
\begin{align}
    \label{eq:two_step_gamma_approx_refined}
    \gamma_{t+2} - \gamma_t &= \eta^2 (h_t \gamma_t - 4(g_{t+1} + g_t) z_t) + O(\eta^4 g_t^2) \\
    \nonumber
    &= -2 \eta^2 g_t^2 \gamma_t - 4\eta^2 l''(o_t) g_t (\hat{Z}(z_t) - \gamma_t) z_t + O(\eta^4 g_t^2)
\end{align}

Combining the previous two equations gives
\begin{equation*}
    \hat{Z}(z_{t+2}) - \gamma_{t+2} = (1 - \hat{Z}'(z_t) \gamma_t l''(o_t) g_t + 4\eta^2 l''(o_t) g_t z_t) (\hat{Z}(z_{t}) - \gamma_t) + \eta^2 g_t^2 \left(-\hat{Z}'(z_t) z_t(\gamma_t l''(o_t) + 2) + 2 \gamma_t\right) + O(\eta^4 g_t^2)
\end{equation*}

Using $g_t \geq C \eta^2$, we can simplify the former to 
\begin{equation*}
    \hat{Z}(z_{t+2}) - \gamma_{t+2} = (1 - \hat{Z}'(z_t) \gamma_t l''(o_t) g_t) (\hat{Z}(z_{t}) - \gamma_t) + \eta^2 g_t^2 \left(-\hat{Z}'(z_t) z_t(\gamma_t l''(o_t) + 2) + 2 \gamma_t\right) + O(\eta^4 g_t)
\end{equation*}
Now by Lemma \ref{thm:phi_definition} and \cref{eq:two_step_z_approx}, $\Phi(z_{t+2}) - \Phi(z_t) = O(\eta^2 g_t^2)$, and we have
\begin{align*}
    \hat{Z}(z_{t+2}) - \gamma_{t+2} - \eta^2 \Phi(z_{t+2}) &=  (1 - \hat{Z}'(z_t) \gamma_t l''(o_t) g_t) (\hat{Z}(z_{t}) - \gamma_t) + \eta^2 g_t^2 \left(-\hat{Z}'(z_t) z_t(\gamma_t l''(o_t) + 2) + 2 \gamma_t\right) \\
    &- \eta^2 \Phi(z_t) + O(\eta^4 g_t) \\
    &= (1 - \hat{Z}'(z_t) \gamma_t l''(o_t) g_t) (\hat{Z}(z_{t}) - \gamma_t + \eta^2 \Phi(z)) +O(\eta^4 g_t)
\end{align*}
where we replaced $2 = \frac{2\gamma_t}{\hat{Z}(z_t)} + O(\eta^2)$. 
\end{proof}

The following lemma characterizes the claim of decreasing sharpness during Phase II.
\begin{lemma} \label{thm:phase_2_sharpness}
Suppose that $|\hat{Z}(z_t) - \gamma_t| = O(\eta^2)$. Then there exists $C$ such that if $g_t \geq C \eta^{2}$, then
\begin{align*}
    \gamma_{t+2} &\leq \gamma_t  
\end{align*}

\end{lemma}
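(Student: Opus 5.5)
Start from the refined two-step update for $\gamma$ in \cref{eq:two_step_gamma_approx_refined}:
\begin{equation*}
    \gamma_{t+2} - \gamma_t = -2\eta^2 g_t^2 \gamma_t - 4\eta^2 l''(o_t)\, g_t\, (\hat{Z}(z_t) - \gamma_t)\, z_t + O(\eta^4 g_t^2).
\end{equation*}
This expansion was obtained in the proof of Lemma \ref{thm:phase_2_bound} from \cref{eq:two_step_s}. It used $g_t + g_{t+1} = l''(o_t)\, g_t\, (\hat{Z}(z_t) - \gamma_t) + O(\eta^2 g_t^2)$ and $h_t = -2g_t^2 + O(\eta^2 g_t^2)$, and neither step needs anything beyond $z_t$ staying in the neighbourhood from \cref{thm:two_step}. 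I will reuse it directly. The leading term $-2\eta^2 g_t^2 \gamma_t$ is strictly negative and of size $\Theta(\eta^2 g_t^2)$, because $\gamma_t$ is bounded below by roughly $2/l''(z_*)$. It encodes the observation that $g_t$ and $g_{t+1}$ have opposite signs with $g_t g_{t+1} \approx -g_t^2$. The plan is therefore to show that both remaining terms are strictly smaller than this one once $g_t \geq C\eta^2$.

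The cross term is handled with the hypothesis $|\hat{Z}(z_t) - \gamma_t| = O(\eta^2)$, say $\leq K\eta^2$. Since $l''(o_t)$ and $z_t$ are bounded on the neighbourhood, the cross term is at most $4 L K |z_t| \eta^4 |g_t| = O(\eta^4 |g_t|)$. The remainder $O(\eta^4 g_t^2)$ is at most $\eta^2 \cdot O(\eta^2 g_t^2)$, which is negligible against $\eta^2 g_t^2$ for small $\eta$. Comparing the cross term with the main term gives the ratio
\begin{equation*}
    \frac{\eta^4 |g_t|}{\eta^2 g_t^2} = \frac{\eta^2}{|g_t|} \leq \frac{1}{C}.
\end{equation*}
Choosing $C$ larger than a fixed multiple of $K L |z_t| / \gamma_t$, with constants uniform over the neighbourhood, makes the cross term at most half the main term. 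Then $\gamma_{t+2} - \gamma_t \leq -\eta^2 g_t^2 \gamma_t + O(\eta^4 g_t^2) \leq 0$ for $\eta \leq \eta_0$. The hypothesis $g_t \geq C\eta^2$ should be read as $|g_t| \geq C\eta^2$, since the argument is symmetric in the sign of $g_t$.

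The main obstacle is making the error terms uniform. The $O(\cdot)$ in the refined expansion must come with constants depending only on $l$, $\sigma$ and $\tau$, not on $t$. This requires a uniform bound on $|g_{t+1}| \leq C'|g_t|$, which follows from $g_{t+1} = -g_t + O(\eta^2 g_t)$. It also requires that $o_t$ and $z_{t+1}$ stay in the interval where $l''$ is bounded. Both facts come from the boundedness established in the proof of \cref{thm:main_convergence}. I will redo the expansion of \cref{eq:two_step_s} once with explicit constants to confirm that the remainder is $O(\eta^4 g_t^2)$ rather than merely $O(\eta^4 g_t)$. If only $O(\eta^4 |g_t|)$ is available, it is absorbed in the same way as the cross term, by enlarging $C$.
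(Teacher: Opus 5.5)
Your proposal is correct and follows the paper's proof. Both start from \cref{eq:two_step_gamma_approx_refined} and argue that the negative leading term $-2\eta^2 g_t^2\gamma_t$, of size $\Theta(\eta^2 g_t^2)$, dominates the remaining terms, which are $O(\eta^4 |g_t|)$ under the hypothesis $|\hat{Z}(z_t)-\gamma_t| = O(\eta^2)$, once $|g_t| \geq C\eta^2$; you simply make the constant bookkeeping and the reading $|g_t| \geq C\eta^2$ explicit.
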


\begin{proof}
The statement $\gamma_{t+2} \leq \gamma_t$ follows from \cref{eq:two_step_gamma_approx_refined}, as the first term on the right hand side is negative with magnitude $\Theta(\eta^2 g_t^2)$ while the other terms have order at most $O(\eta^4 g_t)$.
\end{proof}

We now leverage Lemma \ref{thm:phase_2_bound} to get a better approximation of the GD trajectory.
\begin{lemma} \label{thm:phase_2_approx}
Under the conditions of \cref{thm:final_sharpness}, there exists $T_1$ such that the following hold
\begin{align}
    \label{eq:phase_2_approx}
    |\hat{Z}(z_{T_1}) - \gamma_{T_1} - \eta^2 \Phi(z_{T_1})| &= O(\eta^4) \\
    \nonumber
    \gamma_{T_1} - \frac{2}{l''(z_*)} &\geq \frac{\delta}{4}
\end{align}

\end{lemma}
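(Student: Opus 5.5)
We start the argument at the time $T_0$ given by Lemma \ref{thm:phase_1_approx}. At that time $|\hat{Z}(z_{T_0}) - \gamma_{T_0}| = O(\eta^2)$ and $\gamma_{T_0} - \frac{2}{l''(z_*)} \geq \frac{\delta}{2}$. Write the refined error as
\begin{equation*}
E_t := \hat{Z}(z_t) - \gamma_t + \eta^2 \Phi(z_t).
\end{equation*}
Since $\Phi$ is bounded near $z_*$ (Lemma \ref{thm:phi_definition}), we have $E_{T_0} = O(\eta^2)$. The plan is to iterate the contraction of Lemma \ref{thm:phase_2_bound} until $E_t$ has shrunk to $O(\eta^4)$. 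We also need $\gamma_t$ to lose at most $\frac{\delta}{4}$ during this time.

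\textbf{Keeping $g_t$ bounded below.} While $\gamma_t - \frac{2}{l''(z_*)} \geq \frac{\delta}{4}$ and $|\hat{Z}(z_t) - \gamma_t| = O(\eta^2)$, the point $z_t$ lies within $O(\eta^2)$ of a branch $Z^\pm(\gamma_t)$. That branch is bounded away from $z_*$ by Lemma \ref{thm:bifurcation_properties}, since it is monotone and equals $z_*$ only at the threshold. Hence $|g_t| \geq c_\delta$ for some constant $c_\delta > 0$ depending on $\delta$ but not on $\eta$. In particular the hypothesis $g_t \geq C\eta^2$ of Lemma \ref{thm:phase_2_bound} holds for small $\eta$; apply it on the appropriate side by symmetry.

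\textbf{Geometric contraction of $E_t$.} By Lemma \ref{thm:z_hat_definition}, $\hat{Z}'(z) = \frac{2\alpha_l(z_*)}{3(l''(z_*))^3}(z - z_*) + O(|z-z_*|^2)$. It therefore has the same sign as $z - z_*$, which is also the sign of $g_t$. So $\hat{Z}'(z_t)\gamma_t l''(o_t) g_t$ lies in $[\kappa, 1]$ for some $\kappa > 0$, after shrinking $\sigma, \tau$ if needed so that this factor is less than $1$. Lemma \ref{thm:phase_2_bound} then gives
\begin{equation*}
|E_{t+2}| \leq (1-\kappa)|E_t| + K\eta^4.
\end{equation*}
Iterating, $|E_t| \leq (1-\kappa)^{(t-T_0)/2}|E_{T_0}| + \frac{K\eta^4}{\kappa}$. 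Thus after $N = O(\log\frac{1}{\eta})$ two-step iterations we reach $|E_{T_1}| = O(\eta^4)$, which is the first claim of \cref{eq:phase_2_approx}. Along the way $|\hat{Z}(z_t) - \gamma_t| \leq |E_t| + \eta^2|\Phi| = O(\eta^2)$ is preserved. This keeps the induction for the lower bound on $g_t$ alive, and it also keeps Lemma \ref{thm:phase_2_sharpness} applicable.

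\textbf{Controlling the drift of $\gamma$.} By \cref{eq:two_step_gamma_approx_refined}, each two-step increment satisfies $|\gamma_{t+2} - \gamma_t| = O(\eta^2)$. Over $O(\log\frac{1}{\eta})$ steps the total change is therefore $O(\eta^2\log\frac{1}{\eta})$, which is at most $\frac{\delta}{4}$ for small $\eta$. Hence $\gamma_{T_1} - \frac{2}{l''(z_*)} \geq \frac{\delta}{2} - \frac{\delta}{4} = \frac{\delta}{4}$, which is the second claim.

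\textbf{Main obstacle.} The delicate step is the uniformity of the contraction factor. We need $\kappa$ to be independent of $\eta$ and the factor to stay strictly inside $(0,1)$. This requires combining the lower bound on $|z_t - z_*|$ coming from $\delta$ with an upper bound on $|z_t - z_*|$ coming from $\tau$. We also need the $O(\eta^4 g_t)$ error constants in Lemma \ref{thm:phase_2_bound} to be uniform over this region. The ordering of constants is the subtle part: first fix $\delta, \tau$, then $\kappa$ and $K$, and only then take $\eta_0$ small.
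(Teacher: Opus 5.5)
Your argument is the paper's. You start at the time $T_0$ of Lemma~\ref{thm:phase_1_approx} and keep $\gamma_t - \frac{2}{l''(z_*)} \geq \frac{\delta}{4}$ by induction, so that $|z_t - z_*|$, and hence $A_t := \hat{Z}'(z_t)\gamma_t l''(o_t) g_t$, stays bounded away from zero. You then contract with Lemma~\ref{thm:phase_2_bound} for $O(\log\frac{1}{\eta})$ two-step iterations and bound the loss in $\gamma$ by $O(\eta^2\log\frac{1}{\eta})$. Your remark that $A_t$ must also be bounded above, so that $|1 - A_t| \leq 1-\kappa$, fills in a point the paper leaves implicit.

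There is one error to fix: the sign of $\Phi$ in $E_t$.
\begin{itemize}
\item You define $E_t$ with $+\eta^2\Phi(z_t)$, copying the right-hand side of Lemma~\ref{thm:phase_2_bound}. Its left-hand side and the statement you are proving both have $-\eta^2\Phi$.
\item Your last step then treats these as the same quantity. They differ by $2\eta^2\Phi$, and $\Phi = \Theta(1)$ near $z_*$; indeed $\Phi(z_*) = 3(l''(z_*))^3/\alpha_l(z_*) > 0$.
\end{itemize}

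The minus sign is the correct one. The computation behind Lemma~\ref{thm:phase_2_bound} gives $\hat{Z}(z_{t+2}) - \gamma_{t+2} = (1 - A_t)(\hat{Z}(z_t) - \gamma_t) + B_t + O(\eta^4 g_t)$, where $B_t = \eta^2 g_t^2\bigl(2\gamma_t - \hat{Z}'(z_t) z_t(\gamma_t l''(o_t) + 2)\bigr)$. $\Phi$ is built so that $\eta^2\Phi(z_t)$ agrees to leading order with $B_t/A_t$, the fixed point of this affine map. Hence $\hat{Z} - \gamma$ relaxes to $+\eta^2\Phi$. The quantity that contracts is $\hat{Z} - \gamma - \eta^2\Phi$, while your $E_t$ relaxes to $2\eta^2\Phi = \Theta(\eta^2)$. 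So the recursion $|E_{t+2}| \leq (1-\kappa)|E_t| + K\eta^4$ cannot hold for your $E_t$.

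Redefine $E_t := \hat{Z}(z_t) - \gamma_t - \eta^2\Phi(z_t)$, reading Lemma~\ref{thm:phase_2_bound} with a minus sign on both sides. The rest of your proof then goes through unchanged.
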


\begin{proof}
By Lemma \ref{thm:phase_1_approx}, there exists $T_0$ such that
\begin{align*}
    |\hat{Z}(z_{T_0}) - \gamma_{T_0}| &=  O(\eta^2) \\
    \gamma_{T_0} - \frac{2}{l''(z_*)} &\geq \frac{\delta}{2}
\end{align*}

We can assume by induction that $\gamma_{t} - \frac{2}{l''(z_*)} \geq \frac{\delta}{4}$ holds, so $z_t - z_*$ is bounded away from $0$. It follows that $\hat{Z}'(z_t) \gamma_t l''(o_t) g_t$ is bounded away from $0$. Then by Lemma \ref{thm:phase_2_bound} and a similar argument to Case 2 of Lemma \ref{thm:phase_1_approx}, we conclude that $|\hat{Z}(z_{T_1}) - \gamma_{T_1} - \eta^2 \Phi(z_{T_1})| = O(\eta^4)$ holds at time $T_1 = T_0 + O(\log{\frac{1}{\eta}})$ and
\begin{align*}
    \gamma_{T_1} &\geq \gamma_{T_0} - \sum_t |\gamma_{t+2} - \gamma_t| \\
    &= \frac{2}{l''(z_0)} + \frac{\delta}{2} - O(\eta^2 \log{\frac{1}{\eta}}) \\
    &\geq \frac{2}{l''(z_0)} + \frac{\delta}{4}
\end{align*}
This proves the lemma.
\end{proof}

\begin{lemma} \label{thm:phase_2_end_approx}
Under the conditions of \cref{thm:final_sharpness}, there exists $T_2$ such that the following hold
\begin{align}
    \label{eq:phase_2_end_approx}
    |\hat{Z}(z_{T_2}) - \gamma_{T_2} - \eta^2 \Phi(z_{T_2})| &=  O(\eta^{\frac{8}{3}}) \\
    \nonumber
    |z_{T_2} - z_*| &= O(\eta^{\frac{4}{3}})
\end{align}
\end{lemma}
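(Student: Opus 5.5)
We start from the conclusion of Lemma \ref{thm:phase_2_approx}: at time $T_1$ the error $E_t := \hat{Z}(z_t) - \gamma_t - \eta^2 \Phi(z_t)$ satisfies $|E_{T_1}| = O(\eta^4)$ and $\gamma_{T_1}$ is still at least $\frac{\delta}{4}$ above the threshold. From there we run the two-step recursion of Lemma \ref{thm:phase_2_bound} forward in time. The goal is to show two things: the error stays small, and $z_t$ is driven toward $z_*$ along the bifurcation diagram, until the first time $T_2$ at which $|z_{T_2} - z_*|$ falls to order $\eta^{4/3}$.

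\textbf{Error control.} Write $\kappa_t := \hat{Z}'(z_t)\gamma_t l''(o_t) g_t$. By Lemma \ref{thm:z_hat_definition}, $\hat{Z}'(z) = \frac{2\alpha_l(z_*)}{3(l''(z_*))^3}(z - z_*) + O(|z-z_*|^2)$ and $g_t = l''(z_*)(z_t - z_*) + O(|z_t-z_*|^2)$. Hence $\kappa_t = \Theta(|z_t - z_*|^2) > 0$, and for $\sigma$ small also $\kappa_t < 1$. Lemma \ref{thm:phase_2_bound} then gives
\[
|E_{t+2}| \leq (1 - c|z_t-z_*|^2)|E_t| + C'\eta^4 |z_t - z_*|.
\]
This recursion has the invariant region $|E_t| \leq \frac{C'}{c}\eta^4 / |z_t - z_*|$. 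Along the trajectory $|z_t - z_*|$ is nonincreasing up to lower-order terms, so the bound only weakens slowly. We therefore prove by induction that $|E_t| \leq K\eta^4/|z_t - z_*|$ for $t \in [T_1, T_2]$. The base case follows from $|E_{T_1}| = O(\eta^4)$, and $|z_{T_1}-z_*|$ is bounded below because $\gamma_{T_1} \geq \frac{2}{l''(z_*)} + \frac{\delta}{4}$ and $Z^\pm$ is monotone by Lemma \ref{thm:bifurcation_properties}. At $|z_{T_2}-z_*| \asymp \eta^{4/3}$ this bound gives exactly $|E_{T_2}| = O(\eta^{4 - 4/3}) = O(\eta^{8/3})$, which is the first claim. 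This matching of exponents also explains where $\frac{4}{3}$ comes from: the balance $\eta^4/u = $ (error tolerated) and $u^2 \cdot u$ drift scales pin it down. The hypothesis $g_t \geq C\eta^2$ of Lemma \ref{thm:phase_2_bound} holds throughout, since $\eta^{4/3} \gg \eta^2$.

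\textbf{Progress toward $z_*$.} We use Lemma \ref{thm:phase_2_sharpness} and the refined formula (\ref{eq:two_step_gamma_approx_refined}). Because $E_t$ is small, the $(\hat{Z}(z_t)-\gamma_t)$ term equals $\eta^2\Phi + O(\eta^4/u)$. This gives
\[
\gamma_{t+2} - \gamma_t = -\eta^2 g_t^2(2\gamma_t + O(1)\cdot \eta^2\text{-corrections}) \leq -c\eta^2 |z_t - z_*|^2 .
\]
Since $\hat{Z}(z_t) = \gamma_t + O(\eta^2)$ and $\hat{Z}(z) - \frac{2}{l''(z_*)} \asymp (z-z_*)^2$, the quantity $u_t := |z_t - z_*|^2$ satisfies $u_{t+2} \leq u_t - c\eta^2 u_t + O(\eta^2)$-sized tracking error. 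Comparing with the ODE $\dot u = -c\eta^2 u$ shows that $u_t$ decays geometrically on timescale $\eta^{-2}$. So $u$ reaches the level $\eta^{8/3}$ after $O(\eta^{-2}\log\frac1\eta)$ steps, and $T_2$ is well defined. We take $T_2$ to be the first such time; the one-step change in $z$ is $O(\eta^2 u^{1/2})$, so we do not overshoot below order $\eta^{4/3}$.

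\textbf{Main obstacle.} The hardest step is justifying the $O(\eta^4 g_t)$ remainder in Lemma \ref{thm:phase_2_bound} uniformly down to $|z_t - z_*| \sim \eta^{4/3}$. In this regime the terms $\kappa_t E_t$, $\eta^2(\Phi(z_{t+2}) - \Phi(z_t))$, and the replacement $2 = 2\gamma_t/\hat{Z}(z_t) + O(\eta^2)$ all become comparable. We also need the $C^2$ regularity of $\Phi$ from Lemma \ref{thm:phi_definition} to keep $\Phi(z_{t+2}) - \Phi(z_t) = O(\eta^2 g_t^2)$ uniform. Our first attempt is to carry the induction with the explicit weight $1/|z_t - z_*|$ and check that each remainder is at most $\eta^4 |z_t - z_*|$. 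If that fails, we would fall back to choosing the stopping level $\eta^{4/3}$ directly from the balance condition.
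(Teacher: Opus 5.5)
Your skeleton is the paper's: propagate $E_t=\hat{Z}(z_t)-\gamma_t-\eta^2\Phi(z_t)$ through the contraction of Lemma~\ref{thm:phase_2_bound}, then use the decrease of $\gamma_t$ to push $z_t$ down to the $\eta^{4/3}$ scale. The contraction $\kappa_t\asymp|z_t-z_*|^2$, acting on an error of size $\eta^{8/3}$, beats the $O(\eta^4|g_t|)$ forcing exactly when $|z_t-z_*|\gtrsim\eta^{4/3}$.

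For the error control, the paper carries the unweighted bound $|E_t|\le D\eta^{8/3}$. That bound is preserved at every step with $|z_t-z_*|\ge C\eta^{4/3}$, no matter how $|z_t-z_*|$ moves. Your weighted invariant $|E_t|\le K\eta^4/|z_t-z_*|$ is sharper. However, it only propagates if $|z_{t+2}-z_*|\le|z_t-z_*|$ (or any increase is $O(|z_t-z_*|^3)$ and absorbed by the slack), and you assert this without proof.

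The monotonicity is true. Substitute $\hat{Z}-\gamma=\eta^2\Phi+E$ into the $z$-update in the proof of Lemma~\ref{thm:phase_2_bound}. The definition of $\Phi$ makes the order-$\eta^2$ terms collapse to $-2\eta^2\gamma_t g_t^2/\hat{Z}'(z_t)+O(\eta^4 g_t^2)$. This is a pull toward $z_*$ of size $\Theta(\eta^2|z_t-z_*|)$, against an error $O(|g_t||E_t|)=O(K\eta^4)$. So the two bounds close as a joint induction.

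Two smaller points on this part. Your base case needs $|z_{T_1}-z_*|\le\sigma$, not a lower bound. And the obstacle you flag is not one: Lemma~\ref{thm:phase_2_bound} already holds for all $g_t\ge C\eta^2$, so its remainder can be used as a black box.

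The progress step, as written, has a real gap. Knowing $\hat{Z}(z_t)=\gamma_t+O(\eta^2)$ and $\hat{Z}(z)-\frac{2}{l''(z_*)}\asymp(z-z_*)^2$ only locates $|z_t-z_*|^2$ to within $O(\eta^2)$. An ODE comparison driven through $\gamma_t$ with that tracking error therefore certifies at best $|z_t-z_*|=O(\eta)$, not $O(\eta^{4/3})$. Read as a per-step additive error, $u_{t+2}\le(1-c\eta^2)u_t+O(\eta^2)$ gives nothing at all.

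The paper needs no rate. Suppose $|z_t-z_*|\ge C\eta^{4/3}$ for all $t$. Your own estimate then gives $\gamma_{t+2}-\gamma_t\le -cC^2\eta^{14/3}$ at every step. But $\gamma_t=\hat{Z}(z_t)-\eta^2\Phi(z_t)-E_t\ge\frac{2}{l''(z_*)}-O(\eta^2)$, because $\hat{Z}$ has its local minimum $\frac{2}{l''(z_*)}$ at $z_*$ (Lemma~\ref{thm:z_hat_definition}). This contradiction produces $T_2$. Moreover, $|E_{T_2}|$ is controlled by the induction step from $T_2-2$, where $|z_{T_2-2}-z_*|\ge C\eta^{4/3}$, so overshoot is irrelevant. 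If you want your $O(\eta^{-2}\log\frac{1}{\eta})$ rate, take it from the $z$-drift above rather than from $\gamma$.

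Finally, in $\gamma_{t+2}-\gamma_t$ the cross term $-4\eta^2 l''(o_t)g_tz_t(\hat{Z}(z_t)-\gamma_t)$ is $O(\eta^4|g_t|)$. Its relative size is $\eta^2/|g_t|$, not $\eta^2$. It is negligible only because $|g_t|\gtrsim\eta^{4/3}\gg\eta^2$.
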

\begin{proof}
By Lemma \ref{thm:phase_2_approx}, we can find $T_1$ such that
\begin{align*}
    |\hat{Z}(z_{T_1}) - \gamma_{T_1} - \eta^2 \Phi(z_{T_1})| \leq D \eta^{\frac{8}{3}}
\end{align*}
Now applying Lemma \ref{thm:phase_2_bound}, we have
\begin{align*}
    |\hat{Z}(z_{t+2}) - \gamma_{t+2} - \eta^2 \Phi(z_{t+2})| &\leq |(1 - \hat{Z}'(z_t) \gamma_t l''(o_t) g_t) (\hat{Z}(z_{t}) - \gamma_t + \eta^2 \Phi(z))| + D_1 \eta^4 g_t \\
    &\leq (1 - \hat{Z}'(z_t) \gamma_t l''(o_t) g_t)) D \eta^{\frac{8}{3}} + D_1 \eta^4 |g_t|
\end{align*}
Now we can choose $C$ large enough such that while $|z_t - z_*| \geq C \eta^{\frac{4}{3}}$, we have
$(\hat{Z}'(z_t) \gamma_t l''(o_t) g_t) D \eta^{\frac{8}{3}} + D_1 \eta^4 |g_t| \leq 0$, so 
\cref{eq:phase_2_end_approx} holds by induction.

Now suppose for the sake of contradiction $|z_t - z_*| \geq C \eta^{\frac{4}{3}}$ holds for all $t$. Then by \cref{eq:two_step_s_approx}, $\gamma_t$ will continue to decrease exponentially. But if $\gamma_t$ is sufficiently smaller than $\frac{2}{l''(z_*)}$, then \cref{eq:phase_2_end_approx} cannot hold. We conclude that there exists some time $T_2$ where both the stated conditions are satisfied.
\end{proof}

\subsection{Phase III}
From Phase II, we have tracked the GD iterates until very close to the minimum. We show that from this point onwards the iterates converge linearly.

\begin{lemma} \label{thm:phase_3_convergence}
Under the conditions of \cref{thm:final_sharpness}, $z_t$ converges to $z_*$ and
the $\gamma_t$ converges to $\frac{2}{l''(z_*)} - \eta^2 \Phi(z_*) + O(\eta^{\frac{8}{3}})$.
\end{lemma}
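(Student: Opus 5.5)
We start from the state supplied by Lemma \ref{thm:phase_2_end_approx}: at time $T_2$,
\[
|\hat{Z}(z_{T_2}) - \gamma_{T_2} - \eta^2 \Phi(z_{T_2})| = O(\eta^{8/3}), \qquad |z_{T_2} - z_*| = O(\eta^{4/3}).
\]
Since $\hat{Z}'(z_*) = 0$ and $\hat{Z}$ is $\mathcal{C}^2$ (Lemma \ref{thm:z_hat_definition}), we have $\hat{Z}(z_{T_2}) = \frac{2}{l''(z_*)} + O(\eta^{8/3})$. Since $\Phi$ is $\mathcal{C}^2$ (Lemma \ref{thm:phi_definition}), we have $\Phi(z_{T_2}) = \Phi(z_*) + O(\eta^{4/3})$. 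Together these give
\[
\gamma_{T_2} = \frac{2}{l''(z_*)} - \eta^2 \Phi(z_*) + O(\eta^{8/3}).
\]
Because $\Phi(z_*) = 3(l''(z_*))^3/\alpha_l(z_*) > 0$, this places the effective learning rate strictly below the EoS threshold, by a margin of order $\eta^2$.

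The proof then has two parts. First, we show $z_t \to z_*$ with a summable rate. Second, we show $\gamma_t$ changes by only $O(\eta^{8/3})$ after time $T_2$.

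For the first part, we use \cref{eq:z_approx}: $z_{t+1} - z_* = (1 - \gamma_t l''(\xi_t)(1+O(\eta^2)))(z_t - z_*)$. Since $\gamma_t l''(z_*) \le 2 - c\eta^2$ and $|z_t - z_*| = O(\eta^{4/3})$, the one-step factor is $-(1 - \Theta(\eta^2)) + O(\eta^{4/3})$. This single-step contraction fails, because the $O(\eta^{4/3})$ error is larger than the $\Theta(\eta^2)$ margin. So we pass to two-step updates and the map $D_\gamma$.

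Near $z_*$, write $D_\gamma(z) = c_1(\gamma)(z - z_*) + c_2(\gamma)(z - z_*)^2 + c_3(\gamma)(z - z_*)^3 + \dots$, with $c_1(\gamma) = \Theta(\eta^2) > 0$ and $c_2(\gamma) = O(\eta^2)$ when $\gamma$ is below threshold. Lemma \ref{thm:d_derivative_bound_below_eos} gives $c_3 > 0$ and $|D_\gamma'| \le 1/\gamma$ in this regime. Using \cref{eq:two_step_z}, this yields
\[
z_{t+2} - z_* = \bigl(1 - \gamma_t c_1 - \gamma_t c_3 (z_t - z_*)^2 + O(\eta^2 |z_t - z_*|) + O(\eta^2 g_t)\bigr)(z_t - z_*).
\]
Both $c_1$ and $c_3(z - z_*)^2$ are positive, so $|z_t - z_*|$ decreases monotonically in the two-step sense. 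The error terms must be dominated, and I expect this to be the main obstacle. The $\eta^2 g_t^2 z_t$ corrections in \cref{eq:two_step_z} are of order $\eta^2 |z_t - z_*|^2$, comparable to $c_3 |z_t - z_*|^2$ only up to an extra factor $\eta^2$, so they are harmless. The real issue is the coupling with $\gamma_t$: $\gamma_t$ must not creep back above $\frac{2}{l''(z_*)} - \frac{1}{2}\eta^2\Phi(z_*)$. That is handled by the second part.

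The second part is a bootstrap. As long as $z_t$ contracts, \cref{eq:two_step_gamma_approx_refined} and \cref{eq:two_step_s} give
\[
|\gamma_{t+2} - \gamma_t| \le C\eta^2 g_t^2 + C\eta^2 |g_t + g_{t+1}| = O(\eta^2 |z_t - z_*|^2) + O(\eta^2 |z_t - z_*|)\cdot O(|z_t - z_*| + \eta^2).
\]
The contraction per two steps is at least $\gamma c_1 = \Theta(\eta^2)$. Hence $\sum_t |z_t - z_*|^2 \lesssim \eta^{-2}|z_{T_2} - z_*|^2$, and the total drift of $\gamma$ is $O(|z_{T_2} - z_*|^2) = O(\eta^{8/3})$. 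Note that the sign of $g_tg_{t+1}$ makes $\gamma$ decrease in any case, which is the harmless direction.

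This closes the induction. $\gamma_t$ stays below threshold within the window $\frac{2}{l''(z_*)} - \eta^2\Phi(z_*) + O(\eta^{8/3})$. In turn, $z_t$ contracts geometrically at rate $1 - \Theta(\eta^2)$ per two steps, so $z_t \to z_*$. Finally, $\gamma_t$ is monotone (Lemma \ref{thm:phase_2_sharpness}-type sign argument) and has summable increments, so it converges to a limit in the stated window.
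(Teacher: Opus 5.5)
Your proposal is correct and follows essentially the same route as the paper. You start from the Phase~II endpoint and use $\hat{Z}'(z_*)=0$ together with continuity of $\Phi$ to pin down $\gamma_{T_2}$. You then get a two-step contraction $z_{t+2}-z_*=(1-\Theta(\eta^2))(z_t-z_*)$: you express the margin through $c_1(\gamma_t)$ of $D_{\gamma_t}$, the paper through $\hat{Z}(z_t)-\gamma_t\approx\eta^2\Phi(z_*)$, and these are the same quantity to leading order. Finally you sum the geometric bound $|\gamma_{t+2}-\gamma_t|\lesssim\eta^2|z_t-z_*|^2+\eta^4|z_t-z_*|$ to get total drift $O(\eta^{8/3})$.

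The only blemish is your closing appeal to monotonicity of $\gamma_t$ via Lemma~\ref{thm:phase_2_sharpness}. Its hypothesis $g_t\geq C\eta^2$ fails in this regime: once $|z_t-z_*|\ll\eta^2$, the term $-4\eta^2(g_t+g_{t+1})z_t$ dominates and need not be negative. This is harmless, since summable increments alone already give convergence of $\gamma_t$.
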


\begin{proof}
By Lemma \ref{thm:phase_2_end_approx}, there exists time $T_2$ such that
\begin{align*}
    |\hat{Z}(z_{T_2}) - \gamma_{T_2} - \eta^2 \Phi(z_{T_2})| &=  O(\eta^{\frac{8}{3}}) \\
    |z_{T_2} - z_*| &= O(\eta^{\frac{4}{3}})
\end{align*}

Suppose at some time $t$ the above two conditions are satisfied. By Lemma \ref{thm:z_hat_definition}, we have $|\hat{Z}(z_t) - \hat{Z}(z_*)| = O(|z_t - z_*|^2) = O(\eta^{\frac{8}{3}})$. And by Lemma \ref{thm:phi_definition} we have $|\Phi(z_t) - \Phi(z_*)| = O(|z_t - z_*|) = O(\eta^{\frac{4}{3}})$. Therefore
\begin{align*}
    \gamma_t &= \hat{Z}(z_t) - \eta^2 \Phi(z_t) + O(\eta^{\frac{8}{3}}) \\
    &= \hat{Z}(z_*) - \eta^2 \Phi(z_*) + O(\eta^{\frac{8}{3}}) \\
    &= \frac{2}{l''(z_*)} - \eta^2 \Phi(z_*) + O(\eta^{\frac{8}{3}})
\end{align*}

Now a Taylor expansion of $l'$ gives
\begin{align*}
    l'(z_t) = l''(z_*) (z_t - z_*) + O(|z_t - z_*|^2)
\end{align*}

Assume by induction that $\hat{Z}(z_t) - \gamma_t - \eta^2 \Phi(z_*) = O(\eta^{\frac{8}{3}})$. Then from \cref{eq:two_step_z_approx}, we have
\begin{align*}
    z_{t+2} - z_t &= -\gamma_t l''(\theta_t) g_t (\hat{Z}(z_t) - \gamma_t) + O(\eta^2 g_t^2) \\
    &= -\gamma_t l''(\theta_t) l''(z_*)(z_t - z_*) \eta^2 \Phi(z_*) + O(\eta^{\frac{8}{3}} |z_t - z_*| ) + O(\eta^2 |z_t - z_*|^2) \\
    z_{t+2} - z_* &= (1 - \eta^2\gamma_t l''(\theta_t) l''(z_*) \Phi(z_*))(z_t - z_*) + O(\eta^{\frac{8}{3}} |z_t - z_*| ) + O(\eta^2 |z_t - z_*|^2)
\end{align*}
For sufficiently small $\eta$, there exists constant $C$ such that
\begin{align*}
    z_{t+2} - z_* &= (1 - C \eta^2)(z_t - z_*)
\end{align*}
It follows that $z_t$ converges linearly to $z_*$.

Finally, from \cref{eq:two_step_gamma_approx_refined} we have
\begin{align*}
    \gamma_{T} - \gamma_{T_2} &\leq \sum_t |\gamma_{t+2} - \gamma_t| \\
    &\leq \sum_t C_1 \eta^2 |z_t - z_*|^2 + C_2 \eta^4 |z_t - z_*| \\
    &\leq \sum_t C_1 \eta^2 ((z_{T_2} - z_*)(1 - C\eta^2))^{2t} +  C_2 \eta^4 ((z_{T_2} - z_*)(1 - C\eta^2))^{t} \\
    &= O(|z_{T_2} - z_*|^2) + O(\eta^2 |z_{T_2} - z_*|) \\
    &= O(\eta^{\frac{8}{3}}).
\end{align*}
This completes the proof

\end{proof}

\subsection{Proof of \cref{thm:final_sharpness}}
We now show \cref{thm:final_sharpness} as an immediate consequence of Lemma \ref{thm:phase_3_convergence}

\begin{proof}
By Lemma \ref{thm:phase_3_convergence}, at convergence we have
\begin{equation*}
    \eta s = \frac{2}{l''(z_*)} - \eta^2 \Phi(z_*) + O(\eta^{{\frac{8}{3}}}).
\end{equation*}
Rearranging gives
\begin{equation*}
    s l''(z_*) = \frac{2}{\eta} - \eta l''(z_*) \Phi(z_*) + O(\eta^{{\frac{5}{3}}})
\end{equation*}
By Lemma \ref{thm:sharpness_formula}, this is precisely the sharpness at the local minimum. Substituting the value of $\Phi(z_*)$ gives the desired result.
\end{proof}

\section{Proof of Lemmas in \cref{sec:discussion}}

\subsection{Proof of Lemma \ref{thm:bce}}
\begin{proof}
Let $s = \sigma(z)$. Then
\begin{align*}
    \text{BCE}_q'(z) &= s - q \\
    \text{BCE}_q''(z) &= s(1-s) \\
    \text{BCE}_q^{(3)}(z) &= s(1-s)(2-s)\\
    \text{BCE}_q^{(4)}(z) &= s(1-s)(1 - 6s - 6s^2) \\
\end{align*}
Then
\begin{equation*}
    \alpha_{\text{BCE}_q}(z) = 2 s^2 (1 - s)^2 (3 s^2 - 3s + 1) > 0.
\end{equation*}
\end{proof}

\subsection{Proof of Lemma \ref{thm:mlsq}}
\begin{proof}
We have
\begin{align*}
    \text{MLSq}_n'(z) &= 2(z^n - a)(nz^{n-1}) \\
    \text{MLSq}_n''(z) &= 2n[(n-1)z^{n-2} (z^n - a) + n z^{2n-2} ] \\
    \text{MLSq}_n^{(3)}(z) &= 2n[(n-1)(n-2)z^{n-3}(z^n - a) + 3n(n-1)z^{2n-3}] \\
    \text{MLSq}_n^{(4)}(z) &= 2n[(n-1)(n-2)(n-3) z^{n-4} (z^n - a) + (n-1)(7n - 6) z^{2n-4}] \\
\end{align*}
Set $z_* = a^{\frac{1}{n}}$. Then
\begin{align*}
    \alpha_{\text{MLSq}_{a,n}}(z_*) &= 3\left(6n^2(n-1) z_*^{2n-3}\right)^2 - (2n^2 z_*^{2n-2})(2n(n-1)(7n-6) z_*^{2n-4} \\
    &= 4n^3(n-1)(27n^2 - 34n + 6) x_0^{4n - 6} \\
    &> 0
\end{align*}
\end{proof}

\subsection{Proof of Lemma \ref{thm:dor}}
\begin{proof}
Since scaling the loss by a constant does not affect the product-stability, WLOG assume $C_a = 1$. 

Let $g(z) = \log(e^{z} + 1) + \log(e^{-z} + 1)$, such that $l_a(z) = (g(z - 1))^a$. Denoting $u = e^z, v = e^{-z}$
\begin{align*}
    g'(z) &= \frac{u}{1+u} - \frac{v}{1 + v} \\
    g''(z) &= \frac{u}{(1+u)^2} + \frac{v}{(1+v)^2} \\
    g^{(3)}(z) &= \frac{u(1-u)}{(1+u)^3} - \frac{v(1-v)}{(1+v)^3} \\
    g^{(4)}(z) &= \frac{u(1-4u+u^2)}{(1+u)^4} - \frac{v(1-4v+v^2)}{(1+v)^4}
\end{align*}
Thus we have $g(0) = 2\log{2}, g'(0) = 0, g''(0) = \frac{1}{2} g^{(3)}(z) = 0, g^{(4)}(z) = -\frac{1}{4}$.

Now another calculation shows
\begin{align*}
    f'(z) &= a\, g(z-1)^{a-1}\, g'(z-1) \\
    f''(z) &= a\Big[(a-1)\,g(z-1)^{a-2}\,(g'(z-1))^2 \;+\; g(z-1)^{a-1}\,g''(z-1)\Big] \\
    f^{(3)}(z) &= a\Big[
    (a-1)(a-2)\,g(z-1)^{a-3}\,(g'(z-1))^3 \\
    &+ 3(a-1)\,g(z-1)^{a-2}\,g'(z-1)\,g''(z-1) \\
    &+ g(z-1)^{a-1}\,g^{(3)}(z-1)
    \Big] \\
    f^{(4)}(z) &= a\Big[
    (a-1)(a-2)(a-3)\,g(z-1)^{a-4}\,(g'(z-1))^4 \\
    & + 6(a-1)(a-2)\,g(z-1)^{a-3}\,(g'(z-1))^2\,g''(z-1) \\
    & + 3(a-1)\,g(z-1)^{a-2}\,(g''(z-1))^2 \\
    & + 4(a-1)\,g(z-1)^{a-2}\,g'(z-1)\,g^{(3)}(z-1) \\
    & + g(z-1)^{a-1}\,g^{(4)}(z-1)
    \Big]
\end{align*}
Then
\begin{align*}
    f''(1) &= \frac{a}{2}(2 \log{2})^{a-1} \\
    f^{(3)}(1) &= 0 \\
    f^{(4)}(1) &= \frac{a}{4}(-(2\log{2})^{a-1} + 3(a-1)(2\log{2})^{a-2})
\end{align*}

We conclude that
\begin{align*}
    \alpha_{l_a}(1) = \frac{a^2}{8} (2 \log{2})^{2a-3} (2 \log{2} - 3(a-1)) > 0
\end{align*}
\end{proof}

\section{Experimental Setup} \label{apdx:experimental_setup}
\subsection{Calculating the Multivariate Product Stability}
We provide the following code for calculating the multivariate product stability.

\lstset{
    language=Python,
    basicstyle=\ttfamily\small,
    keywordstyle=\color{blue},
    commentstyle=\color{gray},
    stringstyle=\color{green!60!black},
    showstringspaces=false,
    breaklines=true,       
    frame=single
}
\begin{lstlisting}

def get_hvp(closure, params, vec, create_graph=False):
    loss = closure()

    grads = torch.autograd.grad(loss, params, create_graph=True)
    flat_grads = torch.cat([g.reshape(-1) for g in grads])

    hvp = torch.autograd.grad(
        flat_grads,
        params,
        grad_outputs=vec,
        create_graph=create_graph
    )

    hvp_flat = torch.cat([h.reshape(-1) for h in hvp])

    return hvp_flat if create_graph else hvp_flat.detach()

def pseudo_inverse_cg(closure, params, b, iters=50, tol=1e-8):
    """
    Computes x = H^+ b using CGLS.
    This gives the minimum-norm solution even if H is singular.
    """
    x = torch.zeros_like(b)

    # r = b - Hx
    r = b.clone()

    # s = H^T r (Hessian is symmetric)
    s = get_hvp(closure, params, r, create_graph=False)

    p = s.clone()
    norm_s_old = torch.dot(s, s)

    for _ in range(iters):
        Hp = get_hvp(closure, params, p, create_graph=False)

        denom = torch.dot(Hp, Hp) + 1e-12
        alpha = norm_s_old / denom

        x = x + alpha * p
        r = r - alpha * Hp

        # convergence check
        if torch.norm(r) < tol:
            break

        s = get_hvp(closure, params, r, create_graph=False)

        norm_s_new = torch.dot(s, s)
        beta = norm_s_new / (norm_s_old + 1e-12)

        p = s + beta * p
        norm_s_old = norm_s_new

    return x

def compute_robust_stability(model, loss_fn, data, target):
    params = [p for p in model.parameters() if p.requires_grad]

    def closure():
        logits = model(data)
        return loss_fn(logits, target)

    # ---------------------------
    # 1. Power iteration
    # ---------------------------
    v = torch.randn(sum(p.numel() for p in params))
    v /= (torch.norm(v) + 1e-9)

    for _ in range(30):
        hv = get_hvp(closure, params, v, create_graph=False)
        v = hv / (torch.norm(hv) + 1e-9)

    # ---------------------------
    # higher-order
    # ---------------------------
    loss = closure()

    grads = torch.autograd.grad(loss, params, create_graph=True)
    flat_grads = torch.cat([g.reshape(-1) for g in grads])

    hvp_v = torch.autograd.grad(
        flat_grads, params, grad_outputs=v, create_graph=True
    )
    hvp_v_flat = torch.cat([h.reshape(-1) for h in hvp_v])

    phi = torch.dot(hvp_v_flat, v)

    # g3
    g3_list = torch.autograd.grad(phi, params, create_graph=True)
    g3 = torch.cat([g.reshape(-1) for g in g3_list])

    # ---------------------------
    # 3. CG solve
    # ---------------------------
    h_inv_g3 = pseudo_inverse_cg(closure, params, g3.detach())

    term1 = 3 * torch.dot(g3.detach(), h_inv_g3)

    # ---------------------------
    # 4. 4th derivative
    # ---------------------------
    d4_list = torch.autograd.grad(torch.dot(g3, v), params)
    d4 = torch.dot(torch.cat([d.reshape(-1) for d in d4_list]), v)

    return (term1 - d4).item()

\end{lstlisting}

\begin{figure}[b] 
  \centering
  \includegraphics[width=0.48\textwidth]{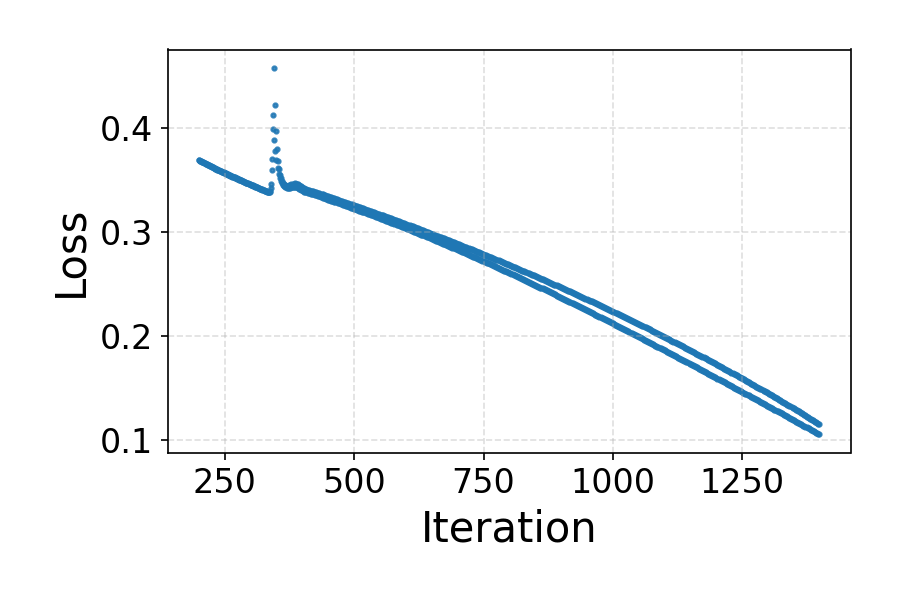}
  \caption{Loss Oscillation in EoS Regime. Zoomed in version of \cref{fig:a}, showing that the loss is oscillating in the EoS regime.}
  \label{fig:oscillation}
\end{figure}

\end{document}